\def\set@curr@file#1{\def\@curr@file{#1}} 
\tikzstyle{dot}=[draw,fill,shape=circle,inner sep=0pt,minimum size=3pt]
\tikzstyle{ps}=[circle,draw, fill=black, minimum size=4,inner sep=0pt, outer sep=0pt]
\tikzstyle{ns}=[circle,draw, fill=white, minimum size=4,inner sep=0pt, outer sep=0pt]
\tikzstyle{mve}=[circle,draw,densely dotted,thick]
\tikzstyle{ell}=[circle,draw,thick]
\newcommand{\algoname}[1]{\ifmmode\operatorname{#1}\else{\normalfont #1}\fi}
\newcommand{\RecoverClustering}{\algoname{RecoverClustering}}
\newcommand{\GetEpsilons}{\algoname{GetEpsilons}}
\newcommand{\GetEpsilon}{\algoname{GetEpsilon}}
\newcommand{\MST}{\algoname{MST}}
\newcommand{\IsConnected}{\algoname{IsConnected}}
\newcommand{\Cut}{\algoname{ClusterSeparator}}
\newcommand{\BallSeparator}{\algoname{MBS}}
\newcommand{\FindNewSeed}{\algoname{FindNewSeed}}
\newcommand{\RecoverSingleCluster}{\algoname{RecoverSingleCluster}}
\newcommand{\FindCutEdge}{\algoname{FindCutEdge}}
\newcommand{\ReduceRecover}{\algoname{RecoverClustering2}}
\newcommand{\edgecut}{\Gamma}
\newcommand{\dg}{d_G}
\newcommand{\scG}{\mathcal{G}}
\newcommand{\field}[1]{\mathbb{#1}}
\newcommand{\R}{\field{R}}
\newcommand{\Nat}{\field{N}}
\newcommand{\C}{\mathcal{C}}
\newcommand{\scE}{\mathcal{E}}
\newcommand{\scO}{\mathcal{O}}
\newcommand{\clus}{\C}
\newcommand{\nil}{\textsc{nil}}
\newcommand{\neps}{\epsilon_{\downarrow}}
\newcommand{\theset}[2]{ \left\{ {#1} \,:\, {#2} \right\} }
\newcommand{\Ind}[1]{ \field{I}\left\{{#1}\right\} }
\newcommand{\ve}{\varepsilon}
\newcommand{\bs}{\pmb{s}}
\newcommand{\bu}{\pmb{u}}
\newcommand{\beps}{\pmb{\epsilon}}
\newcommand{\dens}{\operatorname{dens}}
\newcommand{\CoverNum}{\mathcal{N}} 
\newcommand{\PackNum}{\mathcal{M}} 
\renewcommand{\hat}{\widehat}
\renewcommand{\epsilon}{\ve}
\newtheorem{observation}{Observation}
\newtheorem{claim}{Claim}
\DeclareMathOperator{\poly}{poly}
\newcommand{\scq}{\textsc{scq}}
\newcommand{\seed}{\textsc{seed}}
\newcommand{\rad}{\epsilon}
\newcommand{\doubl}{\operatorname{dbl}}
\pgfplotsset{compat=1.16}
\def\mytitle{Exact Recovery of Clusters in Finite Metric Spaces \linebreak Using Oracle Queries}
\title[\mytitle]{\mytitle}
\begin{document}

\maketitle

\begin{abstract}%
We investigate the problem of exact cluster recovery using oracle queries.
Previous results show that clusters in Euclidean spaces that are convex and separated with a margin can be reconstructed exactly using only $\scO(\log n)$ same-cluster queries, where $n$ is the number of input points.
In this work, we study this problem in the more challenging non-convex setting. We introduce a structural characterization of clusters, called $(\beta,\gamma)$-convexity, that can be applied to any finite set of points equipped with a metric (or even a semimetric, as the triangle inequality is not needed).
Using $(\beta,\gamma)$-convexity, we can translate natural density properties of clusters (which include, for instance, clusters that are strongly non-convex in $\R^d$) into a graph-theoretic notion of convexity.
By exploiting this convexity notion, we design a deterministic algorithm that recovers $(\beta,\gamma)$-convex clusters using $\scO(k^2 \log n + k^2 (\nicefrac{6}{\beta\gamma})^{\dens(X)})$ same-cluster queries, where $k$ is the number of clusters and $\dens(X)$ is the density dimension of the semimetric.
We show that an exponential dependence on the density dimension is necessary, and we also show that, if we are allowed to make $\scO(k^2 + k\log n)$ additional queries to a ``cluster separation'' oracle, then we can recover clusters that have different and arbitrary scales, even when the scale of each cluster is unknown.
\end{abstract}

\begin{keywords}%
Non-convex clusters, same-cluster queries, geodesic convexity.
\end{keywords}

\section{Introduction}
\label{sec:intro}
We investigate the problem of exact reconstruction of clusters using oracle queries in the semi-supervised active clustering framework (SSAC) of \citet{ashtiani2016clustering}.
In SSAC, we are given $n$ points in a metric space, and the goal is to partition these points into $k$ clusters with the help of an oracle answering queries of the form ``do $x$ and $y$ belong to the same cluster?''.
When the metric is Euclidean, \citet{ashtiani2016clustering}, \citet{DBLP:conf/nips/0002CLP20} and~\citet{BCLP21margin} show that exact reconstruction is possible using only $\scO(\log n)$ oracle queries, which mirrors the query complexity of efficient active learning.
These results heavily rely on the Euclidean geometry of the clusters; in particular, clusters are assumed to be convex (e.g., ellipsoidal) and separable with a margin.
These assumptions exclude many natural definitions of ``cluster'', such as those based on notions of density, or those computed by popular techniques like spectral clustering, linkage clustering, or DBSCAN.

In this work we study exact cluster recovery in metric spaces, or ---more generally--- finite semimetric spaces (where the triangle inequality is not necessarily satisfied).
The use of semimetrics in clustering, and in other machine learning tasks, is motivated by the fact that in many applications domains the notion of distance is strongly non-Euclidean \citep{gottlieb2017nearly}. Classic examples include the Wasserstein distance in vision, the Levenshtein distance in string matching, the cosine dissimilarity in document analysis, the Pearson dissimilarity in bioinformatics. In all these cases, the notions of convexity and separability with margin are lost, or exist only in certain generalized forms, so the cluster recovery techniques of \citet{ashtiani2016clustering,DBLP:conf/nips/0002CLP20,BCLP21margin} do not apply anymore. To fill this gap, we introduce a novel notion of cluster convexity that can be applied to any finite semimetric space and that can be exploited algorithmically.

We start by considering \emph{geodesic convexity} in weighted graphs \citep{pelayo2013geodesic}, a well-known generalization of Euclidean convexity that has been used, among others, for node classification in graphs~\citet{mlg2020_40}. Given a weighted graph $\scG$, a subset $C \subseteq V(\scG)$ is said to be geodesically convex if every shortest path between any two vertices of $C$ lies entirely in $C$. Thus, in a finite semimetric space, we could say that $C$ is a convex cluster if it is geodesically convex in the weighted graph $\scG$ encoding the semimetric (with the distances as weights).
Unfortunately, this condition is too lax.
To see this, take $n$ distinct points on a circle with the Euclidean metric.
In $\scG$, the shortest path between any two points $x,y$ is always the edge $(x,y)$ itself.
Thus, according to this definition, any subset of the $n$ points will be geodesically convex, and so every clustering will be admitted, which means that $\Omega(n)$ queries will be needed to recover the clustering.
However, we will show that a variant of this approach gives a suitable notion of convexity, one that yields efficient recovery with only $\scO(\log n)$ queries while capturing the density of the clusters.

\subsection*{Our contributions.}
We introduce \emph{$(\beta,\gamma)$-convex clusterings}, a novel family of clusterings defined on the weighted graph $\scG$ encoding the semimetric on $X$. For any $\epsilon > 0$, let $G_X(\epsilon)$ be the unweighted subgraph of $\scG$ obtained by deleting all edges $(x,y)$ with $d(x,y) > \epsilon$.
We say that a clustering is $(\beta,\gamma)$-convex (with $\beta,\gamma\in (0,1]$) if for some $\epsilon > 0$ every cluster $C$ satisfies the following three properties.
\emph{Connectivity}: the subgraph of $G_X(\epsilon)$ induced by $C$ is connected.
\emph{Local metric margin}: if $x \in C$ and $y \notin C$, then $d(x,y) > \beta \epsilon$\footnote{Note that, for any clustering $C$ in a finite semimetric space $X$ and for any $\epsilon > 0$, a $\beta$ such that the local metric margin condition holds can always be found. In this respect, $\beta$ defines a hierarchy over clusterings, where large values of $\beta$ identify clusterings that are easier to learn.
}.
\emph{Geodesic convexity with margin}: in $G_X(\epsilon)$, if $x,y \in C$ and the shortest path between $x$ and $y$ has length $\ell$, then any simple path between $x$ and $y$ of length at most $\ell (1+\gamma)$ does not leave $C$.
The smallest $\epsilon$ for which these properties hold is called the \emph{radius} of the clusters.
It is important to observe that $(\beta,\gamma)$-convexity includes nontrivial cases. 
For instance, $(\beta,\gamma)$-convex clusters can be strongly non-convex in $\R^d$, as depicted in Figure~\ref{fig:whirl}. Moreover, we can allow the clusters to have different radii $\epsilon_1,\ldots,\epsilon_k$ (see below), as depicted in Figure~\ref{fig:oort}.
These examples suggest that, in a certain sense, one can view $(\beta,\gamma)$-convexity as a way of translating density into convexity.
Moreover, if we drop any one of the three conditions ---connectedness, local metric margin, and geodesic convexity with margin--- the clusters can become disconnected, too close to one another, or interspersed with other clusters, see Section~\ref{sec:pathological}.
\begin{figure}[h!]
    \centering
    \includegraphics[scale=.45]{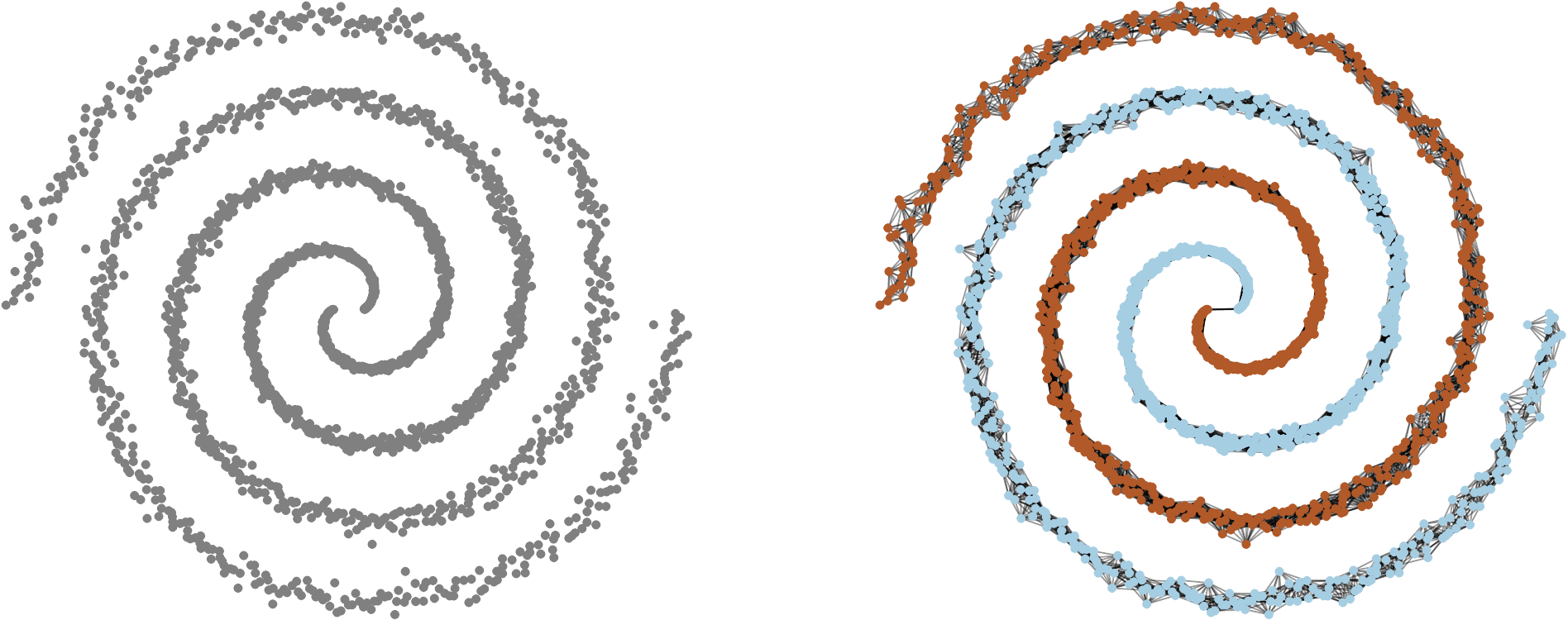}
    \caption{Left: a toy point set. Right: the graph $G_X(\epsilon)$ and a valid $(\beta,\gamma)$-convex clustering for $X$ (clusters encoded by the color of the points), for any $\beta < \frac{1}{2}$ and $\gamma > 0$.
    }
    \label{fig:whirl}
    \includegraphics[scale=.49]{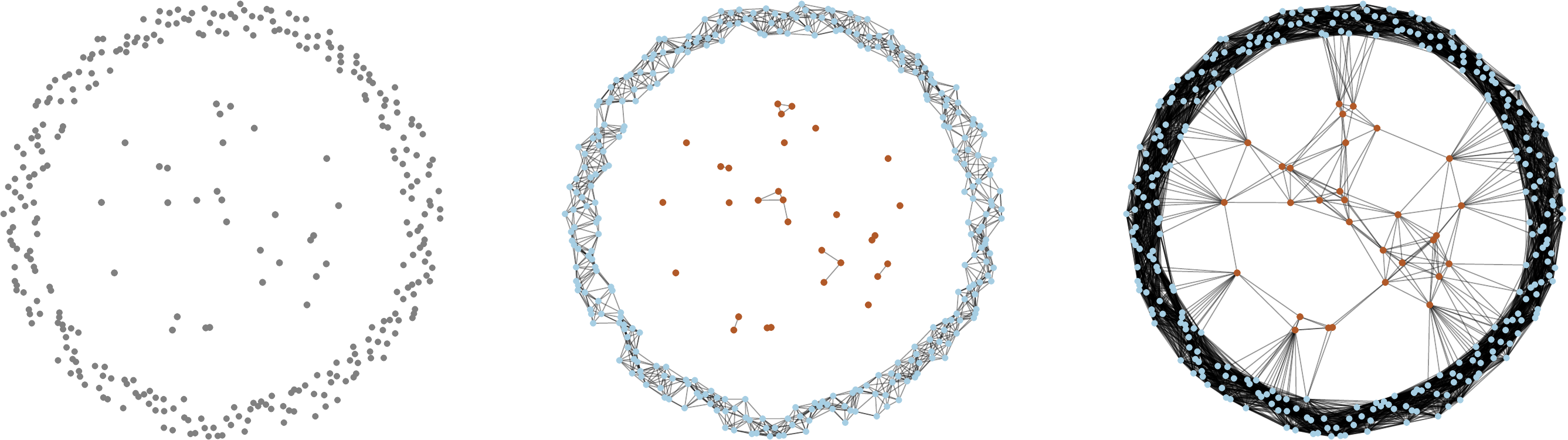}
    \caption{Left: a toy point set. Middle and right: the graphs $G_X(\rad_1)$ and $G_X(\rad_2)$ where $\epsilon_1<\epsilon_2$ are the radii of the ``outer'' cluster $C_1$ and the ``inner'' cluster $C_2$.
    The clustering is $(\beta,\gamma)$-convex for $\beta \le .5$ and $\gamma \le .1$.
    }
    \label{fig:oort}
\end{figure}

Our first result shows that $(\beta,\gamma)$-convex clusterings can be recovered efficiently using a small number of same-cluster queries (\scq\ for short).
More precisely, if $\epsilon,\beta,\gamma$ are known, and for each cluster we know an arbitrary initial point, called \emph{seed},\footnote{We note here that this last assumptions can be dropped if the clusters have roughly similar sizes. In fact, if the gap between the size of the largest and smallest clusters is $\chi$ we can obtain the seed w.h.p.\ by sampling $\tilde{\scO}(k \chi)$ nodes and retrieving their cluster membership using $\tilde{O}(k^2 \chi)$ same cluster queries} then we can deterministically recover all clusters with $\scO\big(k^2\log n + k^2\big(\nicefrac{6}{\beta\gamma}\big)^{\dens(X)}\big)$ \scq\ queries.
Here, $\dens(X)$ is the \emph{density dimension} of $X$ \citep{gottlieb2013proximity}, a generalization of the doubling dimension that, in metric spaces, is used to bound the size of packings.
This dependence of our exponent on $\dens(X)$ is asymptotically optimal, as we prove that, in the worst case, any algorithm needs $\Omega(2^{\dens(X)})$ \scq\ queries to recover a $(\beta,\gamma)$-convex clustering.
The running time of our algorithm is $\scO\big(k^2(n+m)\big)$, where $m$ is the number of edges of $\scG$ (i.e., the number of finite distances between the input points).
The key step of our algorithm consists in finding a \emph{cluster separator} between each cluster $C$ and the other clusters.
To do this, we need to find edges between $C$ and other clusters in the graph $G_X(\epsilon)$, which requires to carefully exploit the structural properties of $(\beta,\gamma)$-convexity.
We note that, interestingly, in the $r$-dimensional Euclidean setting, the cluster recovery algorithm of~\citep{BCLP21margin} makes a number of queries roughly of the order of $\scO\big((\nicefrac{1}{\gamma})^{r} \log n\big)$, where $\gamma$ is the convex margin.
This shows that our notion of convexity plays a role similar to that of Euclidean convexity.

Next, we investigate the power of queries.
First, we show that, without seed nodes, any algorithm using only the \scq\ oracle needs $\Omega(n)$ queries to recover the clusters. To circumvent this lower bound, we add a more powerful query, called \seed. Given a partition of $X$ and the id of a cluster, the \seed\ query provides a certificate (i.e., a point of $X$) that the cluster is cut by the partition, or answers negatively if the cluster is not cut. We show that, if we can use $\scO\big(k^2\log n + k^2\big(\nicefrac{6}{\beta\gamma}\big)^{\dens(X)}\big)$ \scq\ queries plus only $\scO(k^2)$ \seed\ queries, then we can recover clusters with \emph{different} radii $\epsilon_1,\ldots,\epsilon_k$, a case that we model by generalizing the $(\beta,\gamma)$-convex definition in a natural way. This allows us to capture clusters with different ``scales'', as shown in Figure~\ref{fig:oort}. If the radii $\epsilon_1,\ldots,\epsilon_k$ are unknown, we show that $\scO(k\log n)$ \seed\ queries are sufficient to learn them in time $\scO\big(m\alpha(m,n) + kn\log n\big)$, where $\alpha$ is the inverse of the Ackermann function, and that no algorithm can learn the radii with less than $\Omega(k \log \frac{n}{k})$ queries. Furthermore, if one of $\beta$ and $\gamma$ is unknown, we show that we can still learn the clusters by paying a small overhead.

\section{Related work}
Exact reconstruction of clusters with same-cluster queries was introduced by \citet{ashtiani2016clustering}, who showed how to recover exactly the optimal $k$-means clustering with $\scO(\poly(k) \log n)$ queries when each cluster lies inside a sphere centered in the cluster's centroid and well separated from the spheres of other clusters. \citet{DBLP:conf/nips/0002CLP20} extend these results to clusters separated by arbitrary ellipsoids with arbitrary centers, and~\citet{BCLP21margin} to arbitrary convex clusters with margin. Both results assume the standard Euclidean metric.

\seed\ queries have been used by~\citet{HannekePHD} as ``positive example queries'', by~\citet{BalcanHanneke12} as ``conditional class queries'', by~\citet{Beygelzimer16-seedqueries,Attenberg2010} as ``search queries'', and, implicitly, also by~\citet{Tong01-mistake-queries,Doyle2011}. Previous works also show that \seed\ queries are well justified in practice, as noted by~\citet{Beygelzimer16-seedqueries}.

As with $O(k)$ \scq\ queries one can learn the cluster id of any point (up to a relabeling of the clusters), we could reduce our problem to the problem of classifying the nodes of $G_X(\epsilon)$.
\citet{DasarathyS2} develop a probabilistic active classification algorithm, called $S^2$ (\emph{s}hortest-\emph{s}hortest-path), whose label complexity depends on the graph's structure. In particular, the label complexity is linear in the size of the boundary of the cut between nodes with different labels. Unfortunately, even under $(\beta,\gamma)$-convexity, in $G_X(\epsilon)$ this boundary can have size $\Omega(n)$. \citet{mlg2020_40} show a deterministic algorithm with label complexity proportional to the size of the shortest path cover of the graph (the smallest set of shortest paths that cover all nodes). Again, in $G_X(\epsilon)$ this cover could have size $\Omega(n)$ even under $(\beta,\gamma)$-convexity. Active classification on unweighted graph has been also studied by \citet{afshani2007complexity,cesa2010active,guillory2011active}, but only with approximate reconstruction guarantees (i.e., $\Omega(n)$ queries may be needed for exact recovery).

\citet{NIPS2017_7054} study exact cluster reconstruction with a \scq\ oracle on weighted graphs, where weights express similarities. They prove a logarithmic query bound using a Monte-Carlo algorithm and a log-linear bound using a Las-Vegas algorithm. However, similarly to a stochastic block model, they assume that the weights are drawn from some latent distribution that depends on the clustering. Stochastic block models \citep{zhang2014phase,gadde2016active} and geometric block models \citep{chien2020active} have been also considered as generative models for active clustering on graphs.

Center-based \citep{balcan2013active}, density-based \citep{mai2013active}, spectral \citep{wang2010active,shamir2011spectral}, and hierarchical \citep{eriksson2011active,krishnamurthy2012efficient} clustering have been also studied in a more restricted active learning setting, where the algorithm has access to the pairwise distances through an oracle.

\section{Preliminaries and notation}
\label{sec:prelim}
Our algorithms receive in input a semimetric represented by an undirected weighted graph $\scG=(X,\scE,d)$, where $d(x,y) > 0$ is the weight\footnote{Our query bounds do not depend on the size of the weights.} of the edge $(x,y) \in \scE$ and $|X|=n$.
For $\epsilon>0$, we let $G_X(\epsilon)$ be the undirected graph with vertex set $X$ where $x,y \in X$ are connected if and only if $d(x,y) \le \epsilon$.
Given a graph $G=(V,E)$ and two vertices $x,y \in V$, we denote by $\dg(x,y)$ the shortest-path distance between $x$ and $y$ in $G$ and by $\rho(G)$ the number of connected components of $G$.
Given any subset $U \subseteq V$, we use $G[U]$ to denote the subgraph of $G$ induced by $U$, and we use $\edgecut(U)$ to denote the set of edges having exactly one endpoint in $U$.
An edge $(x,y) \in \edgecut(U)$ is called a \emph{cut edge} of $U$.

For any $x \in X$ and $r > 0$, the ball of center $x$ and radius $r$ is $B(x,r) = \theset{y \in X}{d(x,y) \le r}$.
For any set $K \subseteq X$, we denote by $\PackNum(K,r)$ the maximum cardinality of any subset $A$ of $K$ such that all distinct $x,y \in A$ satisfy $d(x,y) > r$.
Following \citet{gottlieb2017nearly}, we define the \emph{density constant} of $X$ as:\footnote{While \citep{gottlieb2017nearly} uses open balls, we use closed balls.}
\begin{align}
    \mu(X) = \min\left\{ \mu \in \Nat \,:\, (x \in X) \wedge (r > 0) \Rightarrow \PackNum\!\left(B(x,r), \nicefrac{r}{2}\right) \le \mu \right\} \label{eq:muX}
\end{align}
Therefore, in $X$, any ball of radius $r$ contains at most $\mu(X)$ points at pairwise distance larger than $\frac{r}{2}$.
The \emph{density dimension} of $X$ is $\dens(X)=\log_2 \mu(X)$.
It is easy to see that, for any ball $B(x,r)$ and for any $\eta \in (0, 1)$ we have:
\begin{align}
    \PackNum(B(x,r), \eta r) \le \mu(X)^{\big\lceil \log_2 \frac{1}{\eta} \big\rceil} \le \left(\nicefrac{2}{\eta}\right)^{\dens(X)}
\label{eq:packnum}
\end{align}
When $d$ is a metric, we have $\doubl(X) \le \dens(X) \le 2 \doubl(X)$ where $\doubl(X)$ is the doubling dimension of $X$, see Lemma~\ref{lem:doubling} in Appendix~\ref{app:prelim}.
We denote by $\PackNum^*(\eta)$ the maximum of $\PackNum(B(x,r), \eta r)$ over all $x \in X$ and $r > 0$.
The quantity $\PackNum^*(\eta)$ appears in our bounds, with $\eta$ being a function of $\beta$ and $\gamma$.
Note that $\PackNum^*(\eta) \le \big(\nicefrac{2}{\eta}\big)^{\dens(X)}$, by~\eqref{eq:packnum}.

A $k$-clustering of $X$ is a partition $\clus=(C_1,\ldots,C_k)$ of $X$.
We denote by $\clus(x)$ the cluster id of $x$, so that $x \in C_{\clus(x)}$.
We now introduce the characterization of the clusterings considered in this work.
The following definition is for clusters with identical radii; a more complex version will be needed in the case with different radii, see Section~\ref{sec:general}.
\begin{definition}[$(\beta,\gamma)$-convex clustering]
\label{def:dense}
For any $\beta,\gamma \in (0,1]$, a $k$-clustering $\clus=(C_1,\ldots,C_k)$ of $X$ is $(\beta,\gamma)$-convex if $\,\exists\, \epsilon > 0$ such that for each $i \in [k]$ the following properties hold:
\begin{enumerate}[leftmargin=17pt,topsep=1pt,parsep=2pt,itemsep=0pt]
    \item \emph{connectedness}: the subgraph induced by $C_i$ in $G_X(\epsilon)$ is connected 
    \item \emph{local metric margin}: for all $x,y \in X$, if $x \in C_i$ and $y \notin C_i$, then $d(x,y) > \beta\,\! \epsilon$
    \item \emph{geodesic convexity with margin}: if $x,y \in C_i$, then in $G_X(\epsilon)$ any simple path between $x$ and $y$ of length at most $(1+\gamma) d_{G_X(\epsilon)}(x,y)$ lies entirely in $C_i$ 
\end{enumerate}
\end{definition}
The smallest value of $\epsilon$ satisfying the three properties is called the \emph{radius} of the clusters.\footnote{Actually, our algorithm of Section~\ref{sec:same} works with \emph{any} such $\epsilon$. We use the minimum to disambiguate the radius.}

In this work, we assume the algorithm obtains information about the unknown target clustering $\clus$ through \emph{same-cluster queries} \citep{ashtiani2016clustering}:
\begin{itemize}
\item[\scq:] for any $x,y\in X$, $\scq(x,y)$ returns $\Ind{\clus(x)=\clus(y)}$, where $\Ind{\cdot}$ is the indicator function
\end{itemize}
If only same-cluster queries are available, we assume that, together with $X$, the recovery algorithm is given a \emph{seed node} $s_i \in C_i$ for each cluster $C_i$. Seed nodes are collected in a vector $\bs=(s_1,\ldots,s_k)$.
Without seed nodes, $\Omega(n)$ same-cluster queries are needed to recover $\clus$ (see Section~\ref{sec:lb})\footnote{As we already said, if the sizes of the clusters are ``roughly'' balanced we can remove this assumption and sample $\widetilde{\scO}(k)$ random nodes and use $\widetilde{\scO}(k^2)$ same-cluster queries to obtain the seed nodes with high probability.}.

When the cluster radii are different and/or unknown, or the seed nodes are not available, we show that there are instances where $\Omega(n)$ same cluster queries are needed. To overcome this limitation, we allow the algorithm to use another type of queries, called \emph{seed queries}:
\begin{itemize}[leftmargin=32pt]
\item[\seed:] for any $S \subseteq X$ and $i\in [k]$, $\seed(S,i)$ returns an arbitrary point $s_i \in C_i \cap S$, or \nil\ if $C_i \cap S = \emptyset$.
\end{itemize}
The \seed\ query is a kind of separation oracle: given a partition $(S,X\setminus S)$ of $X$ and a cluster label $i$, the query can be used to check whether $C_i$ is cut by this partition. Indeed, if $C_i$ is cut, then \seed$(S,i)$ and \seed$(X \setminus S, i)$ return a point of $C_i$ belonging to, respectively, $S$ and $X\setminus S$. Note that these queries are very natural. In a crowdsourcing setting, they correspond to asking the rater to identify an entity (say a picture of a car) within a set (say a set of pictures).

\subsection{Necessity of the properties of Definition~\ref{def:dense}}
\label{sec:pathological}
We give some representative examples of degenerate clusterings resulting from dropping any of the properties of Definition~\ref{def:dense}.
We assume that $k=2$ and $X \subseteq \R^2$ with $d$ being the Euclidean metric.
The examples are depicted in Figure~\ref{fig:violate} below.

\noindent\textbf{Removing connectivity.}
Choose any $\beta,\gamma \le 1$.
Let $X$ consist of disjoint subsets, each one with Euclidean diameter $ \le \epsilon$, and sufficiently far from each other. Label any subsets as $C_1$ and the rest as $C_2$. Note that the second and third property of Definition~\ref{def:dense} are satisfied.

\noindent\textbf{Removing local metric margin.}
Choose any $\gamma \le 1$.
Let $C_1$ be formed by collinear points equally spaced by $\epsilon$, and the same for $C_2$, so that two extremal points of $C_1$ and $C_2$ are at arbitrarily small distance $\delta < \epsilon$.
Note that the first and third property of Definition~\ref{def:dense} are satisfied.

\noindent\textbf{Removing geodesic convexity with margin.}
Choose any $\beta < \frac{1}{2}$.
The set $X$ is formed by collinear points equally spaced by $\frac{\epsilon}{2}$, with the points of cluster $C_1$ interleaved with those of cluster $C_2$.
Note that the first and second property of Definition~\ref{def:dense} are satisfied, but the third is violated for any $\gamma = \omega(\nicefrac{1}{n})$: take the two extreme nodes of $C_1$ and change their shortest path to use a node of $C_2$.

\begin{figure}[h!]
    \centering
    \begin{minipage}[b]{.05\textwidth}
    a)\\[22pt]
    b)\\[8pt]
    c)
    \end{minipage}
    \begin{minipage}[b]{.8\textwidth}
    \includegraphics[scale=.65]{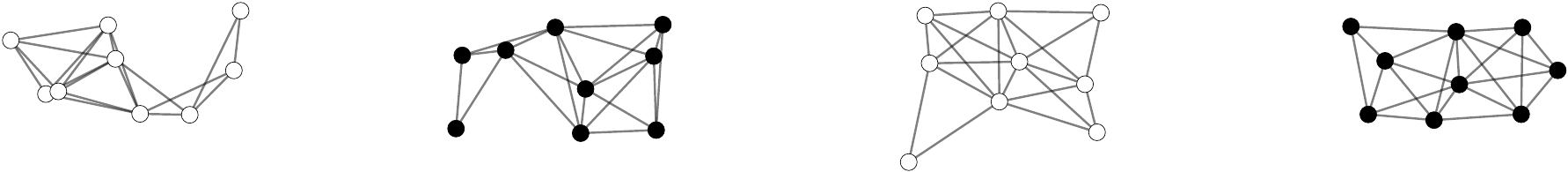}\\[6pt]
    \includegraphics[scale=.65]{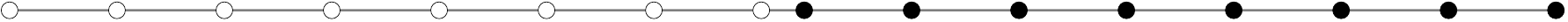}\\[8pt]
    \includegraphics[scale=.65]{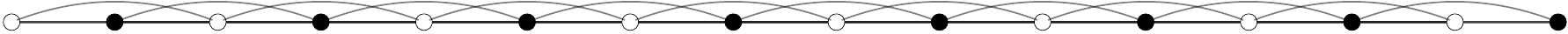}
    \end{minipage}
    \caption{a) a clustering violating connectivity, b) a clustering violating local metric margin, c) a clustering violating geodesic convexity with margin. Empty nodes are in $C_1$, filled nodes are in $C_2$. Shown are the edges of $G_X(\epsilon)$.}
    \label{fig:violate}
\end{figure}

\subsection{Relationship with other clustering notions}
As noted, $(\beta,\gamma)$-convexity is meant to capture density-based clusterings produced by popular algorithms such as Single Linkage and DBSCAN. Those clusterings, however, are in general \emph{not} recoverable with less than $\Omega(n)$ queries, since they allow for $\Omega(n)$ ties (points that can be assigned to one of two clusters in an arbitrary way). Therefore, $(\beta,\gamma)$-convexity should be thought of as an \emph{additional}  property, to be requested \emph{on top of} existing notions of clustering in order to obtain efficient exact recovery. Here, we give two examples of how existing notions of clustering yield $(\beta,\gamma)$-convexity in particular cases.

The first example is DBSCAN, whose parameters are the connectivity radius $r$ and the density parameter $\kappa$. The clustering is defined by looking at $G_X(r)$, clustering together any maximal (sub)tree whose vertices all have degree at least $\kappa$, and assigning each remaining vertex to the same cluster as some of its neighbors (if it has a neighbor). Now, suppose that $G_X(r)$ is formed by $k$ connected components, and each one of them is spanned by a tree on vertices that have degree at least $\kappa$. Then, by taking $\epsilon=r$, one can see that each such component is a cluster that is $(\beta,\gamma)$-dense for $\beta \ge 1$ and every $\gamma > 0$.

The second example is that of spherical clusters with margin~\citep{ashtiani2016clustering}. In this case we use the generalized $(\beta,\gamma)$-convexity, see Definition~\ref{def:dense2}. Let $X \subset \R^{d}$, and suppose that every cluster $C_i$ is contained in some ball $B_i=B(c_i,r_i)$, so that, for some $\delta > 0$, $B(c_i,r_i(1+\delta)) \cap C_j = \emptyset$ for all $j \ne i$. Suppose that each $C_i$ is a set of $(\frac{a}{\delta})^{d}$ points drawn independently and uniformly from $B_i$, for some constant $a > 0$. We claim that the resulting clustering is $(\beta,\gamma)$-convex with high probability, for $\beta = 1$ and any $\gamma > 0$. Indeed, $(\frac{a}{\delta})^{d}$ points draws uniformly at random from $B_i$ are with high probability a $\frac{\delta r_i}{2}$-net for $B_i$ \citep{HDPbook} when $a$ is sufficiently large. In this case, it is easy to see that $C_i$ is connected in $G_X(\epsilon_i)$, where $\epsilon_i=\delta r_i$, thus satifying condition (1) of Definition~\ref{def:dense2}. Moreover, by assumption, for any $x \in C_i$ and any $y \in C_j$ with $j \ne i$ we have $d(y,x) > \epsilon_i$, thus satisfying conditions (2) and (3) of Definition~\ref{def:dense2} for $\beta = 1$ and any $\gamma > 0$.

\paragraph*{Paper organization.}
Section~\ref{sec:same} gives the step-by-step construction of our algorithm for recovering $(\beta,\gamma)$-convex clusterings.
Section~\ref{sec:general} extends $(\beta,\gamma)$-convexity to capture clusters with different radii, and shows how, by introducing \seed\ queries, our algorithm can be extended to this case.
Section~\ref{sec:learn_params} shows how we can learn the radii as well as $\beta$ or $\gamma$, using again \seed\ queries.
Section~\ref{sec:runtime} discusses an efficient implementation of our algorithms.
Section~\ref{sec:lb} presents our query complexity lower bounds.

\section{Exact recovery of $(\beta,\gamma)$-convex clusters}
\label{sec:same}
Throughout this section we assume that $\epsilon,\beta,\gamma$ are known, and that we know a vector of seed nodes $\bs=(s_1,\ldots,s_k)$, so that $s_i \in C_i$ for all $i$. Under these assumptions, our goal is to construct an algorithm, called \RecoverSingleCluster, that for any $i$ returns $C_i$ using $\scO\big(k \log n + k \, \PackNum^*\big(\frac{\beta\gamma}{2+\gamma}\big)\big)$
\scq\ queries and time $\scO(k(n+m))$.
By running \RecoverSingleCluster\ once for each $i$, it is immediate to obtain a full cluster recovery algorithm, \RecoverClustering, with the following guarantees.
Note that $\PackNum^*\big(\frac{\beta\gamma}{2+\gamma}\big) \le (\nicefrac{6}{\beta\gamma})^{\dens(X)}$ since $\PackNum^*(\eta) \le (\nicefrac{2}{\eta})^{\dens(X)}$ and $\gamma \le 1$.
\begin{theorem}
\label{th:same_radii}
Suppose $\clus$ is $(\beta,\gamma)$-convex (Definition~\ref{def:dense}).
Then \RecoverClustering$(X,\epsilon,\bs)$ deterministically returns $\clus$ in time $\scO(k^2(n+m))$ using $\scO\big(k^2 \log n + k^2 \, (\nicefrac{6}{\beta\gamma})^{\dens(X)}\big)$ \scq\ queries.
\end{theorem}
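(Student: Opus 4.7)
The plan is to obtain \RecoverClustering\ as the obvious meta-algorithm that iterates \RecoverSingleCluster\ once for each $i \in [k]$: use the seed $s_i$ to recover $C_i$, remove the recovered points (or simply record them), and move on to the next index. The correctness of the resulting partition is immediate from the correctness of \RecoverSingleCluster, since the clusters are disjoint and their union is $X$. Thus, modulo the construction and analysis of \RecoverSingleCluster\ (which is the technical heart of the section and will be given separately), the theorem is essentially a multiplication-by-$k$ bookkeeping statement.

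For the query bound, I would multiply the per-cluster guarantee
$\scO\bigl(k\log n + k\,\PackNum^*\bigl(\tfrac{\beta\gamma}{2+\gamma}\bigr)\bigr)$
by the $k$ iterations to obtain
$\scO\bigl(k^2\log n + k^2\,\PackNum^*\bigl(\tfrac{\beta\gamma}{2+\gamma}\bigr)\bigr)$.
To convert the packing number into the stated density-dimension expression, I would invoke the inequality $\PackNum^*(\eta)\le (2/\eta)^{\dens(X)}$ from~\eqref{eq:packnum} with $\eta=\beta\gamma/(2+\gamma)$, and use $\gamma\le 1$ to bound $2+\gamma\le 3$, giving
\[
\PackNum^*\!\Bigl(\tfrac{\beta\gamma}{2+\gamma}\Bigr)\;\le\;\Bigl(\tfrac{2(2+\gamma)}{\beta\gamma}\Bigr)^{\!\dens(X)}\;\le\;\Bigl(\tfrac{6}{\beta\gamma}\Bigr)^{\!\dens(X)},
\]
which yields the stated query complexity $\scO\bigl(k^2\log n + k^2\,(6/(\beta\gamma))^{\dens(X)}\bigr)$. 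The runtime bound $\scO(k^2(n+m))$ follows in the same way by multiplying the per-cluster runtime $\scO(k(n+m))$ by $k$. Determinism of \RecoverClustering\ is inherited from \RecoverSingleCluster.

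The main obstacle is therefore not in this theorem itself but in establishing the stated guarantees of \RecoverSingleCluster. There, the nontrivial work is to design a procedure that, starting from the seed $s_i$, grows the connected subgraph of $G_X(\epsilon)$ induced by $C_i$ one cut edge at a time, exploiting the three properties of Definition~\ref{def:dense}: connectedness lets us find $C_i$ by a BFS-like exploration from $s_i$, local metric margin ensures that any cross-cluster edge has length $>\beta\epsilon$, and geodesic convexity with margin lets us certify via $\scO(\log n)$ \scq\ queries (binary search along short paths) whether a candidate edge crosses the cut. Bounding the number of distinct ``cut edges'' that need to be tested by $\scO(k\,\PackNum^*(\beta\gamma/(2+\gamma)))$ will rely on a packing argument in balls of radius $\Theta(\epsilon)$ around boundary vertices, with the factor $\beta\gamma/(2+\gamma)$ arising from combining the metric margin $\beta\epsilon$ with the geodesic slack $1+\gamma$. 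Once \RecoverSingleCluster\ is in place with those guarantees, the present theorem follows immediately by the aggregation argument above.
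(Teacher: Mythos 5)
Your proposal is correct and matches the paper's own argument: the theorem is indeed obtained by running \RecoverSingleCluster\ once per cluster and invoking the per-cluster guarantee of Lemma~\ref{lem:single_recovery}, with the query bound converted via $\PackNum^*(\eta)\le(2/\eta)^{\dens(X)}$ and $\gamma\le 1$, and the runtime deferred to Section~\ref{sec:runtime}. Your closing sketch of how \RecoverSingleCluster\ works is also consistent with the paper's development.
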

The rest of this section describes \RecoverSingleCluster\ and proves Theorem~\ref{th:same_radii}, with the exception of the running time bound, which is proven in Section~\ref{sec:runtime}. Unless otherwise specified, from now on $G$ stands for $G_X(\epsilon)$. 

\subsection{Margin-based separation, and binary search on shortest paths.}
We start with a subroutine \BallSeparator\ (Margin-Based Separator) that, given an input set $Z$ and the cluster index $i$, computes $Z \cap C_i$.
The routine uses the local metric margin and is efficient when the metric radius of $Z$ is small.
\begin{algorithm2e}[h!]
\caption{\BallSeparator($Z,\epsilon,u$)}
\label{algo:ballsep}
$U := \emptyset$\;
\textbf{for }{each connected component $H$ of $G_{Z}(\beta\epsilon)$} \textbf{do}\;{
 \Indp choose any $x \in V(H)$\;
 \lIf{$\scq(x,u)= 1$}{add $V(H)$ to $U$ \label{line:x_scq}}
 \Indm
}
\Return $U$\;
\end{algorithm2e}
\begin{lemma}
\label{lem:ballsep}
For any $u \in X$ and any $Z \subseteq X$ such that $Z \subseteq B(z,r)$ for some $z \in X$ and $r < \infty$, \BallSeparator$(Z,\epsilon,u)$ returns $Z\, \cap\, C_i$ using $\PackNum^*\big(\frac{\beta\epsilon}{r}\big)$ \scq\ queries, where $i=\clus(u)$.
\end{lemma}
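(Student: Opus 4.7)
The plan is to split the argument into a correctness part and a query-counting part. For correctness, I would first observe that the local metric margin (property 2 of Definition~\ref{def:dense}) forces any edge of $G_Z(\beta\epsilon)$ to connect two vertices of the same cluster: if $(x,y)$ is an edge, then $d(x,y) \le \beta\epsilon$, so $x$ and $y$ cannot lie in different clusters. Consequently, every connected component $H$ of $G_Z(\beta\epsilon)$ is entirely contained in a single cluster $C_j$. Therefore, querying $\scq(x,u)$ for a single representative $x \in V(H)$ reveals whether $V(H) \subseteq C_i$ (where $i=\clus(u)$) or $V(H) \cap C_i = \emptyset$. Taking the union of all components whose answer is $1$ yields precisely $Z \cap C_i$.

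For the query count, note that \BallSeparator\ issues exactly one \scq\ per connected component of $G_Z(\beta\epsilon)$. So I would bound the number $m$ of such components. Pick an arbitrary representative $x_H$ from each component $H$; for two distinct components $H \ne H'$ we must have $d(x_H, x_{H'}) > \beta\epsilon$, otherwise $(x_H,x_{H'})$ would be an edge of $G_Z(\beta\epsilon)$ and the components would coincide. Hence $\{x_H\}_H$ is a $\beta\epsilon$-separated subset of $Z \subseteq B(z,r)$, and so $m \le \PackNum(B(z,r), \beta\epsilon)$. By the definition of $\PackNum^*$, taking the ball $B(z,r)$ with ``scale'' $\eta = \beta\epsilon/r$, we obtain $\PackNum(B(z,r), \beta\epsilon) \le \PackNum^*\!\bigl(\tfrac{\beta\epsilon}{r}\bigr)$, which is the claimed bound.

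There is essentially no hard step: correctness is a one-line consequence of the margin property, and the query bound is a standard packing argument on a ball. The only thing to be careful about is matching the definition of $\PackNum^*$ with the application, i.e., instantiating $(x,r')$ in the definition of $\PackNum^*(\eta)$ as $(z,r)$ with $\eta = \beta\epsilon/r$ so that $\eta r' = \beta\epsilon$. After that, the proof is complete.
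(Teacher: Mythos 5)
Your proposal is correct and follows essentially the same approach as the paper's proof: use the local metric margin to argue that every connected component of $G_Z(\beta\epsilon)$ lies within a single cluster (so one \scq\ per component suffices), and then bound the number of components by observing that the queried representatives form a $\beta\epsilon$-separated set inside $B(z,r)$, which is at most $\PackNum^*(\beta\epsilon/r)$. The only cosmetic difference is that you phrase the correctness step via edges directly, whereas the paper phrases it (somewhat loosely) via pairwise distances and ``repeated application'' of the margin along paths; your formulation is if anything slightly cleaner.
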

\begin{proof}
Let $G_Z=G_{Z}(\beta\epsilon)$.
If $H$ is any connected component of $G_Z$, then $d(x,y) \le \beta \epsilon$ for all $x,y \in H$.
Thus, a repeated application of the local metric margin implies that all nodes of $H$ belong to the same cluster.
Therefore, either $V(H) \cap C_i = \emptyset$, or $V(H) \subseteq C_i$.
This shows that $x$ is added to $U$ if and only if $x \in C_i$, proving the correctness.
For the query complexity, let $P$ be the set of points $x$ queried by the algorithm at line~\ref{line:x_scq}.
Clearly $P$ is an independent set in $G_Z$ and, thus, $d(x,y) > \beta\epsilon$ for all distinct $x,y \in P$.
By the definition of $\PackNum^*$ this implies that $|P| \le \PackNum^*(\frac{\beta\epsilon}{r})$.
\end{proof}

Next, we introduce a condition for finding efficiently a cut edge of $C_i$.
A path $\pi=(x_1,\ldots,x_{|\pi|})$ is \textsl{$C_i$-prefixed} if there exists an index $j^* \in \big[|\pi|\big]$ such that $x_{j} \in C_i$ if and only if $j \in \{1,\ldots,j^*\}$.
\begin{lemma}
\label{lem:monoRi}
Let $C_i \subseteq R \subseteq X$ such that $G[R]$ is connected.
Then, in $G[R]$, any shortest path between any $s_i \in C_i$ and any $s \in R \setminus C_i$ is $C_i$-prefixed.
\end{lemma}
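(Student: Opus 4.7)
My plan is to prove the lemma by contradiction. Suppose a shortest path $\pi=(x_1,\dots,x_m)$ in $G[R]$ from $s_i=x_1 \in C_i$ to $s=x_m \in R \setminus C_i$ is not $C_i$-prefixed. Then, by definition, after leaving $C_i$ the path must re-enter it: there are indices $j_1 < j_2 < j_3$ with $x_{j_1}, x_{j_3} \in C_i$ and $x_{j_2} \notin C_i$. I plan to derive a contradiction using the geodesic convexity with margin property of Definition~\ref{def:dense} applied to the subpath $\pi'=(x_{j_1},\dots,x_{j_3})$.

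The first key observation is the standard fact that any subpath of a shortest path is a shortest path; hence $\pi'$ realizes $d_{G[R]}(x_{j_1},x_{j_3})$. The second observation is that this distance actually equals $d_G(x_{j_1},x_{j_3})$, where $G = G_X(\epsilon)$. One inequality is immediate because $G[R]$ is a subgraph of $G$. For the other direction I will invoke geodesic convexity with margin on any shortest $G$-path between $x_{j_1}$ and $x_{j_3}$: since both endpoints lie in $C_i$ and this path has length exactly $d_G(x_{j_1},x_{j_3}) \le (1+\gamma)\, d_G(x_{j_1},x_{j_3})$, it must lie entirely in $C_i \subseteq R$, so it is also a path in $G[R]$.

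Combining the two observations, $\pi'$ is a simple path in $G$ between $x_{j_1},x_{j_3} \in C_i$ whose length equals $d_G(x_{j_1},x_{j_3})$, which is trivially at most $(1+\gamma)\, d_G(x_{j_1},x_{j_3})$. Applying geodesic convexity with margin once more, now to $\pi'$ itself, yields that $\pi'$ lies entirely in $C_i$. But $x_{j_2} \in \pi'$ and $x_{j_2} \notin C_i$, a contradiction. This completes the argument.

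The only slightly delicate point is the double use of geodesic convexity with margin: once to show that $d_{G[R]}$ and $d_G$ agree on pairs in $C_i$ (so that the subpath $\pi'$ is certifiably a shortest path of $G$, not just of $G[R]$), and once to deduce that this shortest path is contained in $C_i$. I do not expect any serious obstacle beyond making sure these two applications are stated in the correct order so that the length bound $|\pi'| \le (1+\gamma)\, d_G(x_{j_1},x_{j_3})$ is available when the margin property is invoked on $\pi'$.
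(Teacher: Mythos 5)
Your proof is correct and follows essentially the same approach as the paper's: extract a subpath of $\pi$ with both endpoints in $C_i$ that leaves $C_i$, use geodesic convexity once to establish that $d_{G[R]}$ and $d_G$ agree on those endpoints (so the subpath is also a shortest path in $G$), and then apply geodesic convexity a second time to force the subpath into $C_i$, yielding a contradiction. The only cosmetic difference is that the paper works with a prefix of $\pi$ ending at some $s_i' \in C_i$, whereas you pick an arbitrary interior subpath $(x_{j_1},\dots,x_{j_3})$.
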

\begin{proof}
Suppose by contradiction that in $G[R]$ there exists a shortest path $\pi$ between $s_i \in C_i$ and $s \in R \setminus C_i$ that is not $C_i$-prefixed. Then some prefix $\pi'$ of $\pi$ is a shortest path between $s_i \in C_i$ and $s_i' \in C_i$ that intersects $R \setminus C_i$. Now observe that $\pi'$ is a shortest path in $G$, too. This holds because any shortest path between $s_i$ and $s_i'$ in $G$ lies inside $G[C_i]$ by geodesic convexity, and thus inside $G[R]$ as $C_i \subseteq R$. By geodesic convexity this implies that $\pi' \subseteq G[C_i]$, a contradiction.
\end{proof}

Finally, we observe that, in a $C_i$-prefixed simple path, a cut edge of $C_i$ can be found via binary search from the endpoints of the path. This yields a subroutine \FindCutEdge\ with the following guarantees (pseudocode in Appendix~\ref{app:same}):
\begin{observation}
\label{obs:bin_search}
Given a simple $C_i$-prefixed path $\pi$, \FindCutEdge$(\pi)$ returns the unique cut edge of $C_i$ in $\pi$ using $\scO(\log n)$ \scq\ queries.
\end{observation}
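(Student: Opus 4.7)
The plan is to exploit the structure of a $C_i$-prefixed simple path to reduce the task to a one-dimensional binary search. Write $\pi = (x_1, \ldots, x_{|\pi|})$, and let $j^*$ be the (unique) threshold index provided by the $C_i$-prefixed property, so that $x_j \in C_i$ iff $j \le j^*$. Then the edge $(x_{j^*}, x_{j^*+1})$ is a cut edge of $C_i$ in $\pi$, and it is the only one: any other edge of $\pi$ has both endpoints in $C_i$ (if the edge lies in the prefix) or both endpoints outside $C_i$ (if it lies in the suffix). Note that the hypothesis implicitly requires $j^* < |\pi|$, since otherwise $\pi$ contains no cut edge; similarly $j^* \ge 1$ since the prefix is nonempty by definition.

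Given this, \FindCutEdge\ performs a standard binary search on the indices of $\pi$, maintaining the invariant $x_\ell \in C_i$ and $x_r \notin C_i$. It initializes $\ell := 1$ and $r := |\pi|$ (which satisfies the invariant), and at each step sets $m := \lfloor (\ell + r)/2 \rfloor$ and issues the query $\scq(x_1, x_m)$; if the answer is $1$ then $x_m \in C_i$ and we set $\ell := m$, otherwise $x_m \notin C_i$ and we set $r := m$. The invariant is preserved, and the length $r - \ell$ halves at each step, so after $\scO(\log |\pi|)$ iterations we reach $r = \ell + 1$, at which point we return $(x_\ell, x_r) = (x_{j^*}, x_{j^*+1})$.

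For the query complexity, since $\pi$ is simple we have $|\pi| \le n$, so the procedure issues $\scO(\log n)$ queries in total. Correctness follows from the invariant together with the uniqueness of the cut edge established above. There is no substantial obstacle here: the only subtlety is remembering to query against the seed-like endpoint $x_1 \in C_i$ (rather than against an arbitrary reference), and to note that the simplicity of $\pi$ is exactly what makes the binary search logarithmic in $n$ rather than in the weighted length of the path.
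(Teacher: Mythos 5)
Your proposal is correct and matches the paper's approach: the Observation is justified precisely by the binary search implemented in \FindCutEdge, which the paper gives only as pseudocode (Appendix~\ref{app:same}) without further argument. The one cosmetic difference is that the paper's recursion queries against the left endpoint of the \emph{current} subpath (which is always in $C_i$), whereas you fix $x_1$ as the reference throughout; these are interchangeable since both reference points lie in $C_i$, and the invariant, uniqueness of the cut edge, and the $\scO(\log n)$ bound via $|\pi|\le n$ are exactly as you state.
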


\subsection{Cluster separators}
\label{sub:sep}
We introduce the notion of cluster separator, which is at the heart of our algorithm.
\begin{definition}
A partition $(S_i, S_j)$ of $X$ is an $(i,j)$-separator of $X$ if $S_i \cap C_j = S_j \cap C_i = \emptyset$.
\end{definition}
This is similar to half-spaces in abstract closure systems~\citep{seiffarth2019maximal}, where we would have $C_i \subseteq S_i$ and $C_j \subseteq S_j$.
We use the weaker condition $S_i \cap C_j = S_j \cap C_i = \emptyset$ because in some of our algorithms $X$ will be a subset of the input set, in which case $C_j \subseteq X$ might not hold. On the other hand, we will always make sure that $C_i \subseteq X$ holds.

Now, if $C_i \subseteq X$ and $(S_i,S_j)$ is an $(i,j)$-separator for $X$, then $C_i \subseteq S_i$ but $C_j \cap S_i = \emptyset$.
Thus, if we could compute an $(i,j)$-separator for all $j \ne i$, then we could compute $C_i$ by a simple set intersection.
Unfortunately, it is not clear how to compute $(S_i,S_j)$ for an arbitrary $j$, even given the seed node $s_j$.
However, as we shall see, we can compute $(S_i,S_j)$ if we know a cut edge $(u_i,u_j)$ between $C_i$ and $C_j$.
The trick is to take each $x \in X$ and look at its distance $d_G(x,u_i)$ from $u_i$.
If $d_G(x,u_i) < \frac{1}{\gamma}$, then we learn whether $u_i \in C_i$ using \BallSeparator.
If instead $d_G(x,u_i) \ge \frac{1}{\gamma}$, then the comparison $d_G(x,u_i) \le d_G(x,u_j)$ tells us whether $x$ should be in $S_i$ or in $S_j$, without using any query.
This is implied by geodesic convexity through the following lemma, proven in Appendix~\ref{app:same}:
\begin{restatable}{relemma}{ziplemma}
\label{lem:zip}
Let $(u_i,u_j) \in G$ be a cut edge between $C_i$ and $C_j$.
For all $x \in X$ with $\frac{1}{\gamma} \le d_G(u_i,x) < \infty$, if $d_G(u_i,x) \le d_G(u_j,x)$ then $x \notin C_j$, and if $d_G(u_i,x) \ge d_G(u_j,x)$ then $x \notin C_i$.
\end{restatable}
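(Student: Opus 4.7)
My plan is to prove the two cases symmetrically by contradiction, using the geodesic convexity with margin property applied to the cluster that $x$ is alleged to belong to.

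For the first statement, suppose toward contradiction that $d_G(u_i,x) \le d_G(u_j,x)$ yet $x \in C_j$. I will build an alternative $u_j$-to-$x$ path in $G$ by prepending the edge $(u_j,u_i)$ to a shortest $u_i$-to-$x$ path $\pi$; call this path $\pi'$. Its length is $1 + d_G(u_i,x)$. I first need to check that $\pi'$ is simple: if $u_j$ lay on $\pi$, then $d_G(u_i,x) \ge 1 + d_G(u_j,x) > d_G(u_j,x)$, contradicting the hypothesis. Next I will verify the margin inequality
\[
1 + d_G(u_i,x) \,\le\, (1+\gamma)\, d_G(u_j,x),
\]
which rearranges to $1 \le \gamma\, d_G(u_j,x) + \bigl(d_G(u_j,x) - d_G(u_i,x)\bigr)$; since $d_G(u_j,x) \ge d_G(u_i,x) \ge \nicefrac{1}{\gamma}$, both summands on the right are nonnegative and the first alone is at least $1$. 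Geodesic convexity with margin applied to $C_j$ then forces every vertex of $\pi'$ into $C_j$. But $u_i \in C_i$ is on $\pi'$, contradicting that $C_i$ and $C_j$ are disjoint.

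For the second statement, I swap the roles of $u_i$ and $u_j$: assume $x \in C_i$ and $d_G(u_i,x) \ge d_G(u_j,x)$, and form the alternative $u_i$-to-$x$ path by prepending $(u_i,u_j)$ to a shortest $u_j$-to-$x$ path. The simplicity check and the margin inequality go through by exactly the same arithmetic (the condition $d_G(u_i,x) \ge \nicefrac{1}{\gamma}$ is used here directly as a lower bound on the shortest-path length in $C_i$). Geodesic convexity with margin applied to $C_i$ then places $u_j$ inside $C_i$, which is the desired contradiction.

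The only genuinely delicate point is ensuring the alternative path is \emph{simple} before invoking Definition~\ref{def:dense}(3), since the convexity clause requires a simple path. I have outlined above how each of the two simplicity checks reduces to exactly the distance inequality we are assuming, so no extra structure is needed. Everything else is just verifying the $(1+\gamma)$-slack inequality, which is immediate from $d_G(u_i,x) \ge \nicefrac{1}{\gamma}$ (respectively $d_G(u_j,x) \ge \nicefrac{1}{\gamma}$, which follows from $d_G(u_j,x) \ge d_G(u_i,x) - 1 \ge \nicefrac{1}{\gamma} - 1$—and in fact only the weaker bound $\gamma \cdot (\text{longer distance}) \ge 1$ is used).
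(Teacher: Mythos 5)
Your proof is correct and follows the same argument as the paper: form the detour path by prepending the cut edge to a shortest path, verify simplicity from the assumed distance inequality, verify the $(1+\gamma)$-slack bound using $d_G(u_i,x)\ge\nicefrac{1}{\gamma}$, and invoke geodesic convexity with margin to reach the contradiction. The only cosmetic difference is that you write out the second case explicitly whereas the paper dismisses it as symmetric.
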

\noindent The intuition is that $d_G(u_i,x) \le d_G(u_j,x)$ and $x \in C_j$ cannot both hold, because this would violate the geodesic convexity of $C_j$, and the same holds when $i$ and $j$ are exchanged.

The above discussion leads to \Cut\ (Algorithm~\ref{algo:cut_2}), whose correctness and query cost are proven in Lemma~\ref{lem:algosep}.
Clearly enough, to use \Cut\ we need to compute the cut edge $(u_i,u_j)$, and we show how to do so in the next sections.

\begin{algorithm2e}[h!]
\caption{\Cut($G,u_i,u_j$)}
\label{algo:cut_2}
$Z := \{x \in V(G) \,:\, d_G(u_i,x) < \nicefrac{1}{\gamma}\}$\;
$Z_i := $ \BallSeparator($Z,\epsilon,u_i$),\quad $Z_j := Z \setminus Z_i$\;
$U_i := \emptyset$,\quad  $U_j := \emptyset$\;
\textbf{for }{each $x \in V(G) \setminus Z$ \label{line:cycle}} \textbf{do}\;{ 
  \Indp
  \textbf{if}\, {$d_{G}(x,u_i) \le d_{G}(x,u_j)$}\, \textbf{then}\, {add $x$ to $U_i$ \label{line:cut_ui}}\, \textbf{else}\, {add $x$ to $U_j$ \label{line:cut_uj}}\;
  \Indm
}
\Return $(Z_i \cup U_i,\, Z_j \cup U_j)$\;
\end{algorithm2e}
\begin{lemma}
\label{lem:algosep}
Suppose $(u_i,u_j) \in G$ is a cut edge between $C_i$ and $C_j$.
Then, \Cut$(G,u_i,u_j)$ returns a pair $(S_i,S_j)$ that is an $(i,j)$-separator of $V(G)$, using $\scO(\PackNum^*(\beta\gamma))$ \scq\ queries.
\end{lemma}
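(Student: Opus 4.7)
The plan is to analyze \Cut\ by splitting $V(G)$ into the ``near'' part $Z$ and the ``far'' part $V(G) \setminus Z$, and to verify the separator property on each part using the two preceding lemmas. That the output is a partition of $V(G)$ is immediate: $Z_i, Z_j$ partition $Z$ by construction, and $U_i, U_j$ partition $V(G) \setminus Z$ through the if-else at lines~\ref{line:cut_ui}--\ref{line:cut_uj}. The substantive step is to verify $S_i \cap C_j = S_j \cap C_i = \emptyset$, which I will do on each half separately.

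On the near part, since $(u_i, u_j)$ is a cut edge between $C_i$ and $C_j$ we have $u_i \in C_i$, so Lemma~\ref{lem:ballsep} (applicable because $Z$ lies within a finite-radius ball around $u_i$, as I will bound below) certifies that \BallSeparator$(Z, \epsilon, u_i)$ returns $Z \cap C_i$. This yields $Z_i \subseteq C_i$ and $Z_j = Z \setminus Z_i$ disjoint from $C_i$, so $Z_i \cap C_j = \emptyset$ and $Z_j \cap C_i = \emptyset$. On the far part, every $x$ satisfies $d_G(u_i, x) \ge 1/\gamma$, so Lemma~\ref{lem:zip} applies; it forces $x \notin C_j$ when $d_G(x, u_i) \le d_G(x, u_j)$, and $x \notin C_i$ in the opposite case. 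Combined with the branching at lines~\ref{line:cut_ui}--\ref{line:cut_uj}, this rules out putting a $C_j$-point in $U_i$ or a $C_i$-point in $U_j$. Concatenating both halves, $(S_i, S_j) = (Z_i \cup U_i,\, Z_j \cup U_j)$ is an $(i, j)$-separator of $V(G)$.

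For the query complexity, the loop at line~\ref{line:cycle} only compares precomputed graph distances and issues no \scq\ query, so every query is made by \BallSeparator$(Z, \epsilon, u_i)$. To apply Lemma~\ref{lem:ballsep}, I bound the radius of $Z$: each $x \in Z$ lies at hop-distance strictly less than $1/\gamma$ from $u_i$ in $G_X(\epsilon)$, and each such hop crosses an edge whose weight is at most $\epsilon$ by the very definition of $G_X(\epsilon)$. Taking $z = u_i$ and $r = \epsilon/\gamma$ therefore places $Z$ inside $B(u_i, \epsilon/\gamma)$, and Lemma~\ref{lem:ballsep} yields a query count of $\PackNum^*(\beta\epsilon / (\epsilon/\gamma)) = \PackNum^*(\beta\gamma)$, matching the stated $\scO(\PackNum^*(\beta\gamma))$ bound.

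The main technical point to handle carefully is precisely this last step: translating hop-distance in the unweighted graph $G_X(\epsilon)$ into a bound on the semimetric radius of $Z$ so that Lemma~\ref{lem:ballsep} can be invoked with the right parameter. This is the only place where the relationship between $G_X(\epsilon)$ and the underlying $d$ really matters; once the bounding ball is fixed, everything else is straightforward bookkeeping combining Lemmas~\ref{lem:ballsep} and~\ref{lem:zip}.
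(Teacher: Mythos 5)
Your proof follows essentially the same route as the paper's: split $V(G)$ into $Z$ and its complement, verify the separator property on $Z$ via Lemma~\ref{lem:ballsep} and on $V(G)\setminus Z$ via Lemma~\ref{lem:zip}, and bound the query cost by bounding the semimetric radius of $Z$ and invoking Lemma~\ref{lem:ballsep}. The decomposition and the invocation of the two lemmas match the paper's argument exactly.

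One caveat worth flagging is the step you yourself identify as the ``main technical point'': you justify $Z \subseteq B(u_i, \epsilon/\gamma)$ by chaining at most $1/\gamma$ hops of weight at most $\epsilon$ each. That chaining step is precisely an application of the triangle inequality, which the paper explicitly declines to assume (the whole framework is for semimetrics). The paper simply asserts $Z \subseteq B(u_i, \nicefrac{\epsilon}{\gamma})$ without justification, so you are matching its level of rigor and arriving at the same place; but if one insists on a genuine semimetric where the direct distance $d(u_i,x)$ can exceed the length of a short hop-path, the containment no longer follows by your argument, and the query bound $\PackNum^*(\beta\gamma)$ would need a different justification. This is a subtlety in the paper rather than an error you introduced, but your explicit derivation makes the hidden triangle-inequality dependence visible where the paper's one-line assertion does not.
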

\begin{proof}
The query bound follows from Lemma~\ref{lem:ballsep} by observing that $Z \subseteq B(u_i, \nicefrac{\epsilon}{\gamma})$. For the correctness, we show that $(Z_i,Z_j)$ is an $(i,j)$-separator of $Z$ and $(U_i,U_j)$ is an $(i,j)$-separator of $V(G) \setminus Z$. For $Z$, Lemma~\ref{lem:ballsep} guarantees that $Z_i = C_i \cap Z$, and the algorithm sets $Z_j = Z \setminus Z_i$. So $Z_i \cap C_j = Z_j \cap C_i = \emptyset$, and $(Z_i,Z_j)$ is an $(i,j)$-separator of $Z$. Consider now any $x \in V(G) \setminus Z$. By definition of $Z$, we have $d_G(x,u_i) \ge \frac{1}{\gamma}$. Therefore, by Lemma~\ref{lem:zip}, if the algorithm assigns $x$ to $U_i$ then $x \notin C_j$, and if the algorithm assigns $x$ to $U_j$ then $x \notin C_i$. Therefore $U_i \cap C_j = U_j \cap C_i = \emptyset$, and $(U_i,U_j)$ is an $(i,j)$-separator of $V(G) \setminus Z$.
\end{proof}

\subsection{Recovering a single cluster}
\label{sub:single}
We can now describe \RecoverSingleCluster\ (Algorithm~\ref{algo:recover_single}).
The algorithm starts by computing $R_i$, the set of nodes reachable from $s_i$ in $G$, and the corresponding induced subgraph $G_i=G[R_i]$.
Note that, by the connectedness of the clusters, initially $R_i$ is simply the union of $C_i$ and zero or more other clusters.
Now, we search for some seed node $s_h \in \bs$, such that $s_h \in R_i$ but $s_h \ne s_i$.
If no such node is found, then $R_i = C_i$ and we are done.
Otherwise, we compute the shortest path $\pi$ between $s_h$ and $s_i$ in $G_i$.
By Lemma~\ref{lem:monoRi}, $\pi$ is $C_i$-prefixed, and so by Observation~\ref{obs:bin_search} we can find a cut edge $(u_i,u_j)$ of $C_i$ with $\scO(\log n)$ \scq\ queries.
With the cut edge $(u_i,u_j)$, we can compute an $(i,j)$-separator $(S_i,S_j)$ of $X=V(G)$ using \Cut.
Finally, we update $G_i$ to be the connected component of $s_i$ in $G[R_i \cap S_i]$, and $R_i$ to be its node set.
By definition of $(S_i,S_j)$, this update removes from $G_i$ all points of $C_j$, so we have reduced by at least one the number of clusters other than $C_i$ intersected by $R_i$.
After at most $k-1$ of these rounds, we will be left with $R_i=C_i$.

Unfortunately, this process has a problem: we can run out of seeds in $R_i$.
Indeed, by taking $R_i \cap S_i$, we could remove every seed node $s_h$, even if $R_i \cap S_i$ still contains points of $C_h$.
If this is the case, then, even though $R_i \ne C_i$, \RecoverSingleCluster\ would be stuck, unable to compute a new cut edge to remove some cluster from $R_i$.
One is tempted to ignore the fact that $s_h \notin R_i$, and simply compute the shortest path between $s_h$ and $s_i$ for all $h \ne i$, obtaining a set of different cut edges.
This however does not work, as all those shortest paths could use the same cut edge $(u_i,u_j)$.

We bypass this obstacle by exploiting the separators found by \RecoverSingleCluster.
By carefully analysing the cuts induced by those separators, we devise an algorithm, \FindNewSeed, that either finds some new seed $s_h \in R_i \setminus C_i$ or certifies that $R_i = C_i$.
\FindNewSeed\ is invoked by \RecoverSingleCluster\ at the beginning of each round, and we describe it in Section~\ref{sub:findseed}.

\begin{algorithm2e}[ht!]
\caption{\RecoverSingleCluster($G,\epsilon,\bs,i$)}
\label{algo:recover_single}
$G_i := \{$the connected component of $s_i$ in $G\}$,\,\,  $R_i := V(G_i)$\;
$\bu := $ an empty vector\;
\textbf{while} true \textbf{do}\; {
\Indp
  $s_h := \FindNewSeed(G,R_i,\epsilon,\bs,\bu,i)$\; \label{line:findnewseed}
  \lIf{$s_h = \nil$}{stop and \Return $R_i$ \label{line:return}}
    $\pi(s_i,s_h) := $ ShortestPath$(G[R_i],s_i,s_h)$\;
    $(u_i,u_j) :=$ \FindCutEdge$(\pi(s_i,s_h))$\; \label{line:ice} 
    add $u_i$ to $\bu$\;
    $(S_i,S_j) :=$ \Cut$(G,u_i,u_j)$\; \label{line:clustersep}
    $G_i := \{$the connected component of $s_i$ in $G[R_i \cap S_i]\}$,\,\, $R_i := V(G_i)$\; \label{line:connect-1}
\Indm
}
\end{algorithm2e}

\begin{restatable}{relemma}{singlereclemma}
\label{lem:single_recovery}
\RecoverSingleCluster$(G,\epsilon,\bs,i)$ returns $C_i$ using $\scO\big(k \log n + k \, \PackNum^*(\frac{\beta\gamma}{2+\gamma})\big)$ \scq\ queries.
\end{restatable}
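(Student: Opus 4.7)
The plan is to prove correctness and the query bound by induction on the number of iterations of the main loop, relying on the specification of \FindNewSeed\ that is stated in the paragraph preceding the algorithm (to be proved in Section~\ref{sub:findseed}).

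First, I would establish the loop invariant that at the start of each iteration, $C_i \subseteq R_i$ and $G[R_i]$ is connected with $s_i \in R_i$. The base case is immediate: $R_i$ is defined as the connected component of $s_i$ in $G$, and since $C_i$ is connected in $G$ and contains $s_i$ by connectedness of $(\beta,\gamma)$-convex clusters, we have $C_i \subseteq R_i$. For the inductive step, Lemma~\ref{lem:algosep} implies that $(S_i, S_j)$ is an $(i,j)$-separator of $V(G)$, so $C_i \cap S_j = \emptyset$, hence $C_i \subseteq S_i$. Combined with $C_i \subseteq R_i$, this gives $C_i \subseteq R_i \cap S_i$; as $C_i$ is connected in $G$ and contains $s_i$, it is entirely contained in the connected component of $s_i$ in $G[R_i \cap S_i]$, which is the new $R_i$. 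Connectedness of $G[R_i]$ is preserved by construction.

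Second, I would show that the loop runs at most $k-1$ rounds. Let $J_t = \{\,h : C_h \cap R_i \ne \emptyset\,\}$ at the start of round $t$; then $i \in J_t$ and $|J_0| \le k$ by the connectedness of clusters in $G$. If \FindNewSeed\ returns some $s_h \ne \nil$, then by its specification $s_h \in R_i \setminus C_i$, i.e., $h \in J_t \setminus \{i\}$. By the invariant $C_i \subseteq R_i$ and connectedness of $G[R_i]$, Lemma~\ref{lem:monoRi} guarantees that the shortest path in $G[R_i]$ from $s_i$ to $s_h$ is $C_i$-prefixed, so by Observation~\ref{obs:bin_search} \FindCutEdge\ returns a cut edge $(u_i,u_j)$ with $u_i \in C_i$ and $u_j \in C_j$ for some $j \ne i$. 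By Lemma~\ref{lem:algosep} the pair $(S_i, S_j)$ is an $(i,j)$-separator of $V(G)$, so $C_j \cap S_i = \emptyset$, hence $j \notin J_{t+1}$, giving $|J_{t+1}| \le |J_t| - 1$. After at most $k-1$ rounds $J_t = \{i\}$, which with the invariant forces $R_i = C_i$; at this point \FindNewSeed\ must return \nil\ (by its specification it certifies $R_i = C_i$), and the algorithm returns $R_i = C_i$ at line~\ref{line:return}.

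Third, for the query complexity, each of the at most $k$ calls to \FindNewSeed\ incurs the cost stated in its specification, which is $\scO\bigl(\log n + \PackNum^*\bigl(\tfrac{\beta\gamma}{2+\gamma}\bigr)\bigr)$; each \FindCutEdge\ call costs $\scO(\log n)$ by Observation~\ref{obs:bin_search}; and each \Cut\ call costs $\scO(\PackNum^*(\beta\gamma))$ queries by Lemma~\ref{lem:algosep}. Summing over the $\scO(k)$ rounds and using the monotonicity $\PackNum^*(\beta\gamma) \le \PackNum^*\bigl(\tfrac{\beta\gamma}{2+\gamma}\bigr)$ (since $\frac{\beta\gamma}{2+\gamma} \le \beta\gamma$ for $\gamma \in (0,1]$) yields the claimed bound $\scO\bigl(k \log n + k \, \PackNum^*\bigl(\tfrac{\beta\gamma}{2+\gamma}\bigr)\bigr)$. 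The main obstacle is not the loop analysis itself but the correctness and query cost of \FindNewSeed---in particular, that it never falsely returns \nil\ while $R_i \ne C_i$ despite seeds being possibly deleted by past separator cuts; that is deferred to Section~\ref{sub:findseed} and used here as a black box.
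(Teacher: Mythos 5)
Your correctness argument (the invariant $C_i \subseteq R_i$ with $G[R_i]$ connected, plus the decreasing count $|J_t|$ of clusters intersecting $R_i$) is structurally the same as the paper's proof, which uses invariants I1--I3 with $\kappa(R_i^{\ell}) = |J_{\ell}|$. That part is fine.

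However, there is a genuine error in the query accounting for \FindNewSeed. You quote its per-call cost as $\scO\bigl(\log n + \PackNum^*\bigl(\tfrac{\beta\gamma}{2+\gamma}\bigr)\bigr)$ and sum over $k$ rounds, but that is not what the specification (Lemma~\ref{lem:findseed}) actually promises. Each single call to \FindNewSeed\ may cost up to $k\,\PackNum^*\bigl(\tfrac{\beta\gamma}{2+\gamma}\bigr)$ \scq\ queries, because its loop iterates over all $u\in\bu$, and $|\bu|$ can be as large as $k-1$ in later rounds; there is also no $\log n$ term in \FindNewSeed\ at all (no binary search happens there). Naively summing $k$ such calls would give a $k^2\,\PackNum^*(\cdot)$ term, which is too large. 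The statement you need is the second half of Lemma~\ref{lem:findseed}: \FindNewSeed\ can be adapted so that its \emph{total} query cost across \emph{all} iterations of \RecoverSingleCluster\ is at most $k\,\PackNum^*\bigl(\tfrac{\beta\gamma}{2+\gamma}\bigr)$. The paper's proof explicitly invokes that amortized bound rather than multiplying a per-call cost by $k$. Your final expression happens to match the claimed bound, but only because you misquoted the per-call cost; the underlying reasoning would, if carried out correctly from what the lemma actually says, yield the too-large $k^2$ bound unless you invoke the amortization. The monotonicity step $\PackNum^*(\beta\gamma) \le \PackNum^*\bigl(\tfrac{\beta\gamma}{2+\gamma}\bigr)$ for the \Cut\ cost is correct and matches the paper.
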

\noindent The proof is along the lines of the discussion above, see Appendix~\ref{app:same}.

\subsection{Finding new seed nodes}
\label{sub:findseed}
We describe \FindNewSeed, which finds a node $s_h \in R_i \setminus C_i$ if $R_i \setminus C_i \ne \emptyset$, and otherwise detects that $R_i = C_i$ and returns $\nil$. The key idea behind \FindNewSeed\ is the following: if $R_i$ does not contain any seed $s_h \in \bs$ other than $s_i$, then for each $h \ne i$ either $C_h \cap R_i = \emptyset$, or, by the connectedness of $G[C_h]$, the cut $\edgecut(R_i)$ must contain some edge of $G[C_h]$. Therefore, the task boils down to finding such an edge, or deciding that no such edge exists. Clearly, we cannot just check all edges in $\edgecut(R_i)$, as this would require too many queries. Thus, we proceed as follows.

Consider the beginning of a generic iteration of \RecoverSingleCluster, and let $\bu$ be the set of all nodes $u_i$ that appeared in a cut edge used in some previous iteration. First, for every $u \in \bu$, we consider the set $Z$ of nodes $x \in R_i$ such that $d_G(u,x) < \frac{2}{\gamma}+1$. Then, like we did in \Cut, we use \BallSeparator\ to recover efficiently the subset $Z \setminus C_i$. If this subset is nonempty, then we just return any $x \in Z \setminus C_i$ and we are done. If after considering every $u \in \bu$ we have not found a seed, then we turn to the remaining nodes, that is, all nodes $x$ such that $d_G(u,x) \ge \frac{2}{\gamma}+1$ for all $u \in \bu$. In this case, as a consequence of geodesic convexity with margin we prove the following structural result: if $x$ has an edge $(x,y) \in \edgecut(R_i)$, then $x \notin C_i$. So, if any such $x$ exists, which we can check without making any query, then we can again return $x$. If both attempts to find $x \in R_i \setminus C_i$ fail, we can show that necessarily $R_i=C_i$.

Lemma~\ref{lem:findseed} below states the guarantees of \FindNewSeed.
Its proof is found in Appendix~\ref{app:same}.
\begin{algorithm2e}[h!]
\caption{\FindNewSeed($G,R_i,\epsilon,\bs,\bu,i$)}
\label{algo:find_seed}
\lIf{$\bs \cap R_i \ne \{s_i\}$ \label{line:if1}}{
  \Return any $s \in \bs \cap R_i \setminus \{s_i\}$ \label{line:ret_sh}
}
\textbf{for} {each $u \in \bu$} \textbf{do} {\label{line:for}}\; {
\Indp
   	$Z = \{x \in R_i \,:\, d_G(u,x) < \nicefrac{2}{\gamma}+1\}$\;
    $Z_i :=$ \BallSeparator($Z,\epsilon,u$)\; \label{line:Zi}
    \lIf{$Z \setminus Z_i \ne \emptyset$ \label{line:Z_Zi}}{
      \Return any $x \in Z \setminus Z_i$ \label{line:ret_ballsep}
    }
\Indm
}
\lIf{$\exists \, (x,y) \in \edgecut(R_i)$ such that $\forall u \in \bu \,:\, d_G(u,x) \ge \nicefrac{2}{\gamma}+1$ \label{line:if2}}{
  \Return any such $x$ \label{line:ret_cross}
}\lElse{
 \Return \nil
}
\end{algorithm2e}
\begin{restatable}{relemma}{findseedlemma}
\label{lem:findseed}
Consider the beginning of any iteration of \RecoverSingleCluster$(G,\epsilon,\bs,i)$.
Then, the call to \FindNewSeed$(G,R_i,\epsilon,\bs,\bu)$ returns a point $x \in R_i \setminus C_i$ if $R_i \ne C_i$, or \nil\ if $R_i = C_i$, using at most $k \, \PackNum^*(\frac{\beta\gamma}{2+\gamma})$ \scq\ queries.
Moreover, \FindNewSeed\ can be adapted so as to make at most $k \, \PackNum^*(\frac{\beta\gamma}{2+\gamma})$ \scq\ queries over the entire execution of \RecoverSingleCluster.
\end{restatable}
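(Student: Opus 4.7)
The plan is to verify correctness first, then bound the queries per call, then describe an amortization that yields the global bound. For correctness I will check the three return points of \FindNewSeed\ in turn. Line~\ref{line:ret_sh} returns a seed $s \in \bs \cap R_i \setminus \{s_i\}$, which belongs to some $C_h \ne C_i$ and therefore to $R_i \setminus C_i$. Line~\ref{line:ret_ballsep} returns some $x \in Z \setminus Z_i$: since every $u \in \bu$ was appended at line~\ref{line:ice} of \RecoverSingleCluster\ (so $u \in C_i$), Lemma~\ref{lem:ballsep} gives $Z_i = Z \cap C_i$, whence $x \in R_i \setminus C_i$. Line~\ref{line:ret_cross} returns the endpoint $x$ of a cut edge of $R_i$ that is far from every $u \in \bu$; the structural claim below guarantees $x \notin C_i$. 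To show \nil\ is returned only when $R_i = C_i$, I will use the invariant $C_i \subseteq R_i$ (which holds because every separator $(S_i,S_j)$ satisfies $S_j \cap C_i = \emptyset$ and $G[C_i]$ is connected): if $R_i \supsetneq C_i$ then, by connectedness of $G[C_h]$ for some $C_h$ partially in $R_i$, there is an edge $(x,y) \in \edgecut(R_i)$ with $x \in C_h \cap R_i$, and this $x$ would have triggered either the for-loop (when some $u \in \bu$ is close to $x$, since then $x \in Z \setminus Z_i$) or line~\ref{line:if2} otherwise.

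The structural claim I need is: if, at the beginning of some iteration, $x \in C_i \cap R_i$ has a neighbor $y \notin R_i$, then some $u \in \bu$ satisfies $d_G(u,x) < 2/\gamma + 1$. I will pick the iteration $t^*$ at which $y$ was first removed from $R_i$ via the separator built from a cut edge $(u_i^{(t^*)}, u_j^{(t^*)})$, so $u_i^{(t^*)} \in \bu$. Assume for contradiction that $L := d_G(u_i^{(t^*)}, x) \ge 2/\gamma + 1$. Since $(x,y)$ is an edge of $G$, $d_G(u_i^{(t^*)}, y) \ge L - 1 \ge 2/\gamma > 1/\gamma$, so $y$ lies outside the radius-$1/\gamma$ ball used by \Cut. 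Therefore $y$ was placed in $U_j$ via line~\ref{line:cut_uj}, giving $d_G(y, u_j^{(t^*)}) < d_G(y, u_i^{(t^*)}) \le L + 1$ and hence $d_G(u_j^{(t^*)}, y) \le L$. I will then construct the simple path $u_i^{(t^*)} \to u_j^{(t^*)} \to \cdots \to y \to x$ by prepending the cut edge to a shortest $u_j^{(t^*)} \to y$ path and appending the edge $(y,x)$; its length is at most $L + 2$. Simplicity follows by ruling out that the shortest $u_j^{(t^*)} \to y$ path passes through $u_i^{(t^*)}$ (which would force $d_G(u_j^{(t^*)}, y) = 1 + d_G(u_i^{(t^*)}, y)$, contradicting $d_G(u_j^{(t^*)}, y) < d_G(u_i^{(t^*)}, y)$) or through $x$ (handled by switching to the $u_i^{(t^*)} \to u_j^{(t^*)} \to \cdots \to x$ variant, which becomes a shortest path of length at most $L$). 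For $L \ge 2/\gamma$ we have $L + 2 \le (1+\gamma) L$, so geodesic convexity with margin forces the path into $C_i$, contradicting $u_j^{(t^*)} \notin C_i$. This geodesic-convexity step is the main obstacle, since the case analysis on path simplicity must be handled carefully.

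For the per-call query bound, every call to \BallSeparator\ inside the for-loop is on a set contained in $B(u, \epsilon(2/\gamma+1))$, because an unweighted $G$-distance less than $2/\gamma+1$ combined with edge weights at most $\epsilon$ gives metric distance less than $\epsilon(2/\gamma+1)$; by Lemma~\ref{lem:ballsep} it costs at most $\PackNum^*\!\bigl(\beta\epsilon / (\epsilon(2/\gamma+1))\bigr) = \PackNum^*\!\bigl(\beta\gamma/(2+\gamma)\bigr)$ queries. Summing over $|\bu| \le k-1$ (one element is appended per iteration of \RecoverSingleCluster, and there are at most $k-1$ iterations) yields the per-call bound $k\, \PackNum^*\!\bigl(\beta\gamma/(2+\gamma)\bigr)$. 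For the amortized global bound, the adaptation is to cache, the first time $u$ is added to $\bu$, the output of \BallSeparator\ on the full ball $W_u = \{x \in V(G) : d_G(u,x) < 2/\gamma+1\}$, storing $W_u \cap C_i$. Since $R_i$ only shrinks across iterations, the set $Z$ used by \FindNewSeed\ at any later iteration equals $W_u \cap R_i$, and its intersection with $C_i$ is $(W_u \cap C_i) \cap R_i$, read off the cache at no query cost. Hence each $u$ triggers at most one \BallSeparator\ call over the entire execution, and summing over $|\bu| \le k-1$ gives the claimed $k\, \PackNum^*\!\bigl(\beta\gamma/(2+\gamma)\bigr)$ global bound.
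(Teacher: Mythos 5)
Your proposal is essentially the paper's argument, merged into a single inline structural claim. The paper factors the key step into two auxiliary lemmas, Lemma~\ref{lem:cross_edge} (if $(x,y)\in\edgecut(S_i)$ and $d_G(u_i,x)\ge\nicefrac{2}{\gamma}+1$ then $x\notin C_i$) and Lemma~\ref{lem:border} (every cut edge of $R_i^{\ell}$ is a cut edge of some earlier $S_i^{\tau}$), and your structural claim is exactly their composition. Two cosmetic differences: in the case where $x$ lies on the chosen shortest $u_j\to y$ path you switch to the shorter violating path $u_i\to u_j\to\cdots\to x$, whereas the paper instead derives $d_G(u_j,y)\ge d_G(u_i,y)$ from $x\in S_i$ to get a direct contradiction — both work; and your amortization (cache the \BallSeparator\ output on the full ball $W_u$ once, then intersect with the current $R_i$) is equivalent to the paper's incremental maintenance of $Z\setminus Z_i$.

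There is one place where you assert something you still need to prove. When you pick the iteration $t^*$ at which $y$ was first removed from $R_i$ and then state ``therefore $y$ was placed in $U_j$,'' you are implicitly using $y\notin S_i^{t^*}$. But a priori $y$ could leave $R_i^{t^*}$ not because it was assigned to $S_j^{t^*}$, but because it became disconnected from $s_i$ in $G[R_i^{t^*}\cap S_i^{t^*}]$ (line~\ref{line:connect-1} keeps only the connected component of $s_i$). Ruling this out is precisely the content of the paper's Lemma~\ref{lem:border}, and its proof is short but not vacuous: since the $R_i$'s are nested and $x$ survives to the current iteration, $x\in R_i^{t^*+1}\subseteq R_i^{t^*}\cap S_i^{t^*}$; by minimality of $t^*$ also $y\in R_i^{t^*}$; so if $y\in S_i^{t^*}$ then $y$ is adjacent to $x$ inside $G[R_i^{t^*}\cap S_i^{t^*}]$, hence reachable from $s_i$, hence $y\in R_i^{t^*+1}$ --- a contradiction. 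Insert that sentence and your proof is complete.
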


\section{Extension to clusters with different radii}
\label{sec:general}
In this section we generalize the notion of $(\beta,\gamma)$-convexity (Definition~\ref{def:dense}) so to allow each cluster $C_i$ to have its own radius, denoted by $\rad_i$.
Then, by using \seed\ queries, we will extend our cluster recovery algorithm to this generalized setting.
For technical reasons, we need to strengthen geodesic convexity in a hereditary fashion.
\begin{definition}[generalized $(\beta,\gamma)$-convex clustering]
\label{def:dense2}
For any $\beta,\gamma \in (0,1]$, a $k$-clustering $\clus=(C_1,\ldots,C_k)$ of $X$ is $(\beta,\gamma)$-convex if $\,\forall\, i \in [k] : \exists\,\rad_i > 0$ such that the following properties hold:
\begin{enumerate}[leftmargin=17pt,topsep=1pt,parsep=2pt,itemsep=0pt]
    \item \emph{connectedness}: the subgraph induced by $C_i$ in $G_X(\rad_i)$ is connected 
    \item \emph{local metric margin}: for all $x,y \in X$, if $x \in C_i$ and $y \notin C_i$, then $d(x,y) > \beta\,\! \epsilon_i$
    \item \emph{geodesic convexity with margin}: for any $\epsilon \le \rad_i$, if $x,y \in C_i$ and $d_{G_X(\epsilon)}(x,y) < \infty$, then in $G_X(\epsilon)$ any simple path between $x$ and $y$ of length at most $(1+\gamma) d_{G_X(\epsilon)}(x,y)$ lies entirely in $C_i$ 
\end{enumerate}
\end{definition}
In Lemma~\ref{lem:mineps} in the Appendix, we show that if the conditions above are satisfied, then they are satisfied in particular by the smallest $\rad_i$ such that $C_i$ is connected in $G_X(\rad_i)$.\footnote{Note that this is different from requiring that $G_X(\rad_i)[C_i]$ is connected; here we are only requiring that any two points of $C_i$ have a connecting path in $G_X(\rad_i)$. Lemma~\ref{lem:mineps} however shows that the smallest $\rad_i$ that satisfies one condition is also the smallest $\rad_i$ that satisfies the other condition.}
Therefore, without loss of generality, we assume that each $\rad_i$ satisfies this minimality assumption.

We turn to the recovery of $\clus$.
We show that, with some care, the problem can be reduced to the case of identical radii, at the price of making $\scO(k^2)$ \seed\ queries.
This gives an algorithm \ReduceRecover\ with the following guarantees, where $\beps=(\epsilon_1,\ldots,\epsilon_k)$ is the vector of the radii:
\begin{restatable}{retheorem}{diffradtheorem}
\label{th:diff_rad}
Suppose $\clus$ is $(\beta,\gamma)$-convex (Definition~\ref{def:dense2}).
Then, \ReduceRecover$(X,\beps,\bs)$ deterministically returns $\clus$, has the same runtime as \RecoverClustering$(X,\beps,\bs)$, and uses the same number of \scq\ queries as \RecoverClustering$(X,\beps,\bs)$, plus at most $\scO(k^2)$ \seed\ queries.
\end{restatable}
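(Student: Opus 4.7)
The plan is to reduce the different-radii problem to several instances of the uniform-radius problem (Definition~\ref{def:dense}), processed in order of increasing $\rad_i$. After relabeling assume $\rad_1\le\cdots\le\rad_k$. For $i=1,\ldots,k$ in turn, \ReduceRecover\ sets $X^{(i)} = X \setminus \bigcup_{j<i} C_j$ and $G^{(i)} = G_{X^{(i)}}(\rad_i)$, computes the connected component $R_i$ of $s_i$ in $G^{(i)}$, and recovers $C_i$ from inside $R_i$ using a \seed-adapted version of \RecoverSingleCluster\ run at the single radius $\rad_i$ (since $C_i\subseteq X^{(i)}$ is connected in $G_X(\rad_i)[C_i]$, we have $C_i\subseteq R_i$).

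The central claim is that, for each $i$, the family $(C_j\cap R_i)_{j\ge i}$ is a $(\beta,\gamma)$-convex clustering of $R_i$ in the \emph{uniform-radius} sense of Definition~\ref{def:dense}, with common radius $\rad_i$. Granting this, every subroutine of Section~\ref{sec:same} (\BallSeparator, \FindCutEdge, \Cut) runs on $G^{(i)}[R_i]$ unchanged and, by Lemma~\ref{lem:single_recovery}, $C_i$ is recovered using $\scO(k\log n + k\,\PackNum^*(\beta\gamma/(2+\gamma)))$ \scq-queries. The local metric margin at scale $\rad_i$ is immediate, since any remaining cluster has $\rad_j\ge\rad_i$ and therefore $d(x,y)>\beta\rad_j\ge\beta\rad_i$. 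For geodesic convexity at radius $\rad_i$, I would invoke the hereditary clause of Definition~\ref{def:dense2}(3) with $\rad=\rad_i\le\rad_j$; to transfer it from $G_X(\rad_i)$ to the subgraph $G^{(i)}[R_i]$, the key observation is that any $G_X(\rad_i)$-shortest path between two points of $C_j$ lies inside $C_j\subseteq X^{(i)}$ by geodesic convexity and, starting in $R_i$, never exits $R_i$. Hence $d_{G^{(i)}[R_i]}$ and $d_{G_X(\rad_i)}$ agree on $C_j$, so any $(1+\gamma)$-approximate path in $G^{(i)}[R_i]$ is also $(1+\gamma)$-approximate in $G_X(\rad_i)$ and therefore lies in $C_j$. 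Connectedness of each $C_j\cap R_i$ in $G^{(i)}[R_i]$ follows from the same observation, as any two points of $C_j\cap R_i$ are joined by a shortest $G_X(\rad_i)$-path that lies in $C_j\cap R_i$.

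Finally, the input vector $\bs$ supplies $s_i$ but no seed of any other cluster inside the current $R_i$, so I would replace the \scq-based \FindNewSeed\ by the following \seed-based routine: maintain a candidate set $J\subseteq[k]\setminus\{i\}$, initialized as $[k]\setminus\{i\}$; at each outer iteration, scan $J$ and issue $\seed(R_i,j)$ until either it returns a point (used as the foreign seed $s_h$, after which \FindCutEdge\ and \Cut\ proceed verbatim) or $J$ is exhausted (in which case $R_i=C_i$ and we return). Each successful \seed-query is charged to one of the $\le k-1$ outer iterations for $C_i$, while each \nil-answer permanently deletes an index from $J$; this yields at most $2(k-1)$ \seed-queries per cluster and $\scO(k^2)$ in total, while the \scq-query and runtime bounds are inherited directly from applying Theorem~\ref{th:same_radii} once per cluster. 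The main obstacle is the technical transfer in the second paragraph: ensuring that the geodesic-convexity property of a large-radius cluster $C_j$ still governs shortest paths at the smaller working scale $\rad_i$ inside $R_i$. This is precisely what the hereditary quantifier in Definition~\ref{def:dense2}(3) is designed to enable; without it, Lemma~\ref{lem:zip}, which crucially compares $G$-distances from the two endpoints of a cut edge to an arbitrary point, could not be applied to cut edges between $C_i$ and such a $C_j$.
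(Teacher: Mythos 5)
Your proof is correct and follows the paper's main strategy: order clusters by increasing radius, and prove that after removing the already-recovered clusters, the clustering induced on the connected component of $s_i$ in $G_{X^{(i)}}(\rad_i)$ is $(\beta,\gamma)$-convex in the uniform-radius sense (this is the paper's Lemma~\ref{lem:still_dense}, with essentially your argument: shortest paths between points of $C_j$ stay inside $C_j$ by the hereditary clause, and a connected component absorbs every path starting inside it, so $d_{G^{(i)}[R_i]}$ and $d_{G_X(\rad_i)}$ coincide on $C_j\cap R_i$). Where you genuinely diverge is the handling of seeds: the paper issues $\seed(X^*,j)$ for $j=i+1,\ldots,k$ \emph{once} at the start of each outer iteration, obtaining a full seed vector $\bs^*$ for the current component, and then runs \RecoverSingleCluster\ (including the \scq-based \FindNewSeed) completely unchanged; you instead \emph{replace} \FindNewSeed\ by an interleaved \seed-routine with the $J$-amortization you describe. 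Both yield $\scO(k^2)$ \seed\ queries; your variant needs to modify \RecoverSingleCluster\ but removes the $\scq$ cost of \FindNewSeed, while the paper's choice treats \RecoverSingleCluster\ as a black box (which in turn keeps Lemma~\ref{lem:single_recovery} directly applicable without re-proving termination for a modified inner loop). Your closing remark about the hereditary quantifier in Definition~\ref{def:dense2}(3) being what allows Lemma~\ref{lem:zip} to apply to a larger-radius cluster at the working scale $\rad_i$ correctly identifies the crux of the reduction.
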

The basic idea of \ReduceRecover\ is to invoke \RecoverSingleCluster\ for each $i \in [k]$, as we did for the case of identical radii, using the graph $G_X(\epsilon_i)$ when we want to recover $C_i$. This does not work straight away, however: in $G_X(\epsilon_i)$, any cluster $C_j$ with $\epsilon_j < \epsilon_i$ is by definition not required to satisfy geodesic convexity, which means that \RecoverSingleCluster\ can fail. However, we can show that this approach works if we adopt the following precautions:
\begin{enumerate}\itemsep2pt
    \item recover the clusters in nondecreasing order of radius
    \item when recovering $C_i$, restrict $G_X(\rad_i)$ to its connected component containing $s_i$
    \item after recovering $C_i$, delete it from $X$.
\end{enumerate}
Note that, for each $i$, this procedure works on a potentially different graph --- thresholded at a different radius, and containing only a subset of the original points. Thus, its correctness may not be obvious. In particular, it may not be obvious that the clustering induced by the sub-instance used at the $i$-th iteration is $(\beta,\gamma)$-convex, which is necessary for \RecoverSingleCluster\ to work. We show that it is: for every $i$, at the $i$-th iteration, the residual clustering is $(\beta,\gamma)$-convex. Thus the input to \RecoverSingleCluster\ satisfies the hypotheses of Lemma~\ref{lem:single_recovery}, and by that lemma, \RecoverSingleCluster\ will return $C_i$ using $\scO\big(k \log n + k \, \PackNum^*(\frac{\beta\gamma}{2+\gamma})\big)$ \scq\ queries, as desired. In all this, the role of \seed\ queries is to find one seed node $s_h$ for each cluster in the connected component of $G_X(\rad_i)$ containing $s_i$, as required by \RecoverSingleCluster.

The formal construction of \ReduceRecover\ and the proof of Theorem~\ref{th:diff_rad} are given in Appendix~\ref{app:general}.

\section{Learning the radii and the convexity parameters}
\label{sec:learn_params}
In this section we show how to use \seed\ queries to learn the cluster radii and to deal with the case where one of $\beta$ or $\gamma$ is unknown.
For learning the radii, we prove:
\begin{restatable}{retheorem}{epsitheorem}
\label{thm:epsilons}
Suppose $\clus$ is $(\beta,\gamma)$-convex (Definition~\ref{def:dense2}).
Then, the cluster radii $\rad_1,\ldots,\rad_k$ can be learned using $\scO(k \log n)$ \seed\ queries in time $\scO(m \, \alpha(m,n) + k n \log n)$, where $\alpha(m,n)$ is the functional inverse of the Ackermann function.\footnote{For all practical purposes, $\alpha$ can be considered constant. For instance, $\alpha(m,m) \le 4$ for all $m \le \frac{1}{8}2^{2^{2^{2^{65536}}}}$.}
\end{restatable}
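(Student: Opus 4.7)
The plan is to exploit the characterization of $\epsilon_i$ given by Lemma~\ref{lem:mineps}, which states that $\epsilon_i$ is the smallest threshold at which every vertex of $C_i$ lies in the connected component of $s_i$ in $G_X(\epsilon)$. This reduces the learning of each $\epsilon_i$ to a binary search whose each step is resolved by a single \seed\ query. The key enabler is the classical MST bottleneck identity: the connected components of $G_X(\epsilon)$ coincide with those of $T_\epsilon := \{e \in T : w(e) \le \epsilon\}$, where $T$ is any minimum spanning forest of $\scG$.

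First, obtain seeds $s_i := \seed(X, i)$ for each $i \in [k]$ at a cost of $k$ \seed\ queries. Then compute $T$ in $\scO(m \, \alpha(m, n))$ time using Chazelle's deterministic MST algorithm. For each cluster $i$, root the component of $T$ containing $s_i$ at $s_i$ and compute, via one DFS in $\scO(n)$ time, the bottleneck value $f(v) := \max\{w(e) : e \text{ on the } s_i\text{-to-}v \text{ path in } T\}$ for every vertex $v$; then sort the vertices by $f$-value in $\scO(n \log n)$ time into a list $v_1, \ldots, v_N$. For any index $j$, the prefix $S_j := \{v_1, \ldots, v_j\}$ equals the component of $s_i$ in $T_{f(v_j)}$ and, by the MST identity, also in $G_X(f(v_j))$. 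Binary search for the smallest $j$ with $\seed(X \setminus S_j, i) = \nil$, using $\scO(\log n)$ \seed\ queries, and output $\epsilon_i := f(v_j)$.

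Correctness is immediate: by Lemma~\ref{lem:mineps} and the MST-bottleneck identity, membership of $C_i$ inside the component of $s_i$ is monotone in $\epsilon$, so binary search over the sorted list of $f$-values locates the smallest index $j^*$ for which $C_i \subseteq S_{j^*}$, and the \seed\ test detects this exactly since $\seed(X \setminus S_j, i) = \nil$ iff $C_i \cap (X \setminus S_j) = \emptyset$. Summing the costs yields $\scO(k + k \log n) = \scO(k \log n)$ \seed\ queries in total, and a running time of $\scO(m \, \alpha(m, n) + k \, n \log n)$, with the latter decomposing into the MST cost plus $\scO(n \log n)$ per cluster for the DFS, the sort, and the $\scO(\log n)$ binary-search steps (each taking $\scO(n)$ time to assemble the argument $X \setminus S_j$). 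The main technical hurdle is keeping the per-step cost of the binary search sub-linear in $m$: this is resolved by precomputing the MST once and converting each connected-component query into an $\scO(n)$ prefix test on the $f$-sorted list, thereby avoiding any recomputation of component membership in $\scG$.
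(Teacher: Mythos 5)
Your proof is correct and takes essentially the same approach as the paper: compute an MST of $\scG$, use the MST--bottleneck identity (Claim~\ref{claim:mst}) together with Lemma~\ref{lem:mineps} to reduce learning each $\rad_i$ to finding a monotone connectivity threshold, and locate that threshold by binary search with $\scO(\log n)$ \seed\ queries per cluster. Your variant pre-sorts vertices by bottleneck distance from the seed and tests prefixes with a single \seed\ query each, whereas the paper's \GetEpsilon\ binary-searches over the distinct MST edge weights and invokes \IsConnected\ (two \seed\ queries) at each step; the two are interchangeable and give the same query and time bounds.
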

This result hinges on three observations. First, as said above, $\epsilon_i$ is actually the smallest $\epsilon$ such that all nodes of $C_i$ belong to a single connected component of $G_X(\epsilon)$. Second, with $\scO(1)$ \seed\ queries, we can test whether $C_i$ belongs to a single connected component of $G$, for any given graph $G$, see Claim~\ref{claim:isconnected}. Third, if $T$ is any minimum spanning tree of $\scG$, then the connected components of $G_X(\epsilon)$ are exactly the connected components of $T(\epsilon)$, see Claim~\ref{claim:mst}. Our algorithm starts by computing $T$, which takes time $\scO(m\, \alpha(m,n))$ where $\alpha$ is inverse Ackermann, see~\citep{ChazelleMST}. Then, for each cluster $C_i$, we perform a binary search to find the smallest edge weight $\epsilon$ such that $C_i$ is connected in $T(\epsilon)$. All details are given in Appendix~\ref{app:learn_params}. In Section~\ref{sec:lb}, we also prove a nearly-matching lower bound of $\Omega(k \log\frac{n}{k})$ queries.

We conclude by discussing the case where one among $\beta$ and $\gamma$ is unknown.
Equipped with the \seed\ queries, we make a series of halving guesses for $\beta$ or $\gamma$, until we detect that the clustering is correct.
This yields the following result (proof in Appendix~\ref{app:learn_params}):
\begin{restatable}{retheorem}{guessingtheorem}
\label{lem:guessing}
Suppose $\clus$ is $(\beta,\gamma)$-convex (Definition~\ref{def:dense2}), and let $R = \log(\frac{4}{\beta\gamma})\dens(X)$.
If only one between $\beta$ and $\gamma$ is unknown, then we can recover $\clus$ with a multiplicative overhead of $\scO(R)$ in both query cost and running time, plus $\scO(k^2 R)$ \scq\ queries and $\scO(k R)$ \seed\ queries.
This applies to each one of our algorithms (i.e., with radii that are identical or not, known or unknown).
\end{restatable}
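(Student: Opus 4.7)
WLOG, suppose $\beta$ is the unknown parameter; the argument for unknown $\gamma$ is symmetric. I will use a \emph{guess-and-verify} halving search: run the recovery algorithm repeatedly with guesses $\tilde\beta\in\{1,\tfrac12,\tfrac14,\ldots\}$, and after each run use \seed\ queries to certify whether the returned clustering equals $\clus$. The correctness of the loop rests on the monotonicity of Definition~\ref{def:dense2} in $\beta$: all three properties use $\beta$ only as a lower bound on a margin, so they are preserved as $\beta$ shrinks. Hence any $\tilde\beta\le\beta$ yields a correct recovery, so the loop terminates at the first guess in $[\beta/2,\beta]$, i.e.\ after $\lceil\log_2(1/\beta)\rceil+1=\scO(R)$ rounds. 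For earlier rounds with $\tilde\beta>\beta$ the algorithm may return something wrong or violate invariants; I handle this by capping its outer loops (e.g., at $k$ iterations of \RecoverSingleCluster), so every round halts and hands a candidate output to the verifier.

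The verifier uses $k$ \seed\ queries per round. Given $\tilde{\clus}=(\tilde C_1,\ldots,\tilde C_k)$, for each $i$ identify the block $\tilde C_{(i)}$ containing the known seed $s_i\in C_i$ and issue $\seed(X\setminus\tilde C_{(i)},\,i)$. If all $k$ answers are $\nil$, then $\tilde C_{(i)}\supseteq C_i$ for all $i$; since both families partition $X$, equality $\tilde C_{(i)}=C_i$ follows and we accept. If $\tilde{\clus}\ne\clus$, at least one such query returns a witness point of miscoverage and we reject. In variants where seeds are not part of the input, they are obtained by one \seed\ query per cluster at the very beginning and cached across rounds, which is absorbed into the overall $\scO(kR)$ \seed\ budget.

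For the query accounting, per-round costs split into two parts. The packing-dependent term $k^2(6/(\tilde\beta\gamma))^{\dens(X)}$ dominates its predecessors geometrically in $\tilde\beta$, so its sum across rounds is absorbed by the $\scO(R)$ multiplicative overhead in the statement; the $\scO(k^2\log n)$ term pays a factor of $\scO(R)$ across rounds, giving the $\scO(k^2R)$ additional \scq\ budget; and verification adds $k$ \seed\ queries per round, totalling $\scO(kR)$. Running times track the query costs with identical arithmetic. Because the argument wraps the recovery algorithm as a black box, the same reduction applies verbatim to all variants --- identical radii (\RecoverClustering), different known radii (\ReduceRecover), and unknown radii obtained via Theorem~\ref{thm:epsilons}. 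The main subtlety, which I expect to be the principal obstacle, is ensuring that rounds with $\tilde\beta>\beta$ neither loop forever nor silently return a wrong-yet-plausible clustering; both failure modes are dispatched by the loop-capping argument together with the soundness of the verifier.
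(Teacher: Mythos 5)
Your proposal is correct and follows essentially the same route as the paper: a halving search over $\hat{\beta}=2^{-j}$ (or $\hat{\gamma}=2^{-j}$), justified by the downward monotonicity of Definition~\ref{def:dense2} in the margin parameter, combined with a per-round verifier that uses $\seed$ queries to check whether the candidate output equals $\clus$, and then an accounting of the per-round costs against the number of rounds $R$. The only cosmetic difference is in the verifier: you exploit the known seeds $s_i$ directly so each round needs $k$ $\seed$ queries and no extra $\scq$ queries, whereas the paper relabels an arbitrary $x\in\hat{C}$ with $\scO(k)$ $\scq$ queries per block and then one $\seed$ query --- both yield $\scO(k^2R)$ $\scq$ and $\scO(kR)$ $\seed$ queries, so the difference is immaterial. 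Your explicit discussion of rounds with $\hat{\beta}>\beta$ (capping loop iterations so each round halts and defers to the verifier) addresses a detail the paper leaves implicit, and it is a welcome clarification; the one thing you elide that is worth stating is that the verifier should first confirm that the candidate output is indeed a $k$-partition of $X$ with nonempty blocks (a query-free $\scO(n)$ check), since your partition argument for soundness relies on it --- without that check the step ``since both families partition $X$, equality follows'' is not licensed when $\hat{\beta}>\beta$ produces a degenerate output.
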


\section{Bounds on the running time}
\label{sec:runtime}
All our algorithms admit efficient implementations, with a running time linear in the size of $\scG$ (or essentially linear, see Section~\ref{sec:learn_params}). Here, we give a quick overview of these implementations; for a more complete discussion, see Appendix~\ref{app:runtime}.

Recall that our input is the weighted graph $\scG=(X,\scE,d)$, where $(u,v) \in \scE$ if and only if $d(u,v) < \infty$. Recall also that $n=|X|$ and $m=|\scE|$.
We assume that $d$ can be accessed in constant time, which can be achieved with a hash map, whose construction takes time $O(m)$~\citep{perfecthash}.
We further assume that, for any graph $G$ and for any $x \in V(G)$, the edges incident to $x$ can be listed in constant time per edge.
Under these assumptions, the following basic fact holds:
\begin{observation}
\label{obs:G_x_eps_time}
Given $\scG$, for any $\epsilon$, the graph $G_X(\epsilon)$ can be computed in time $\scO(n+m)$. This holds in general for thresholding any subgraph of $\scG$.
\end{observation}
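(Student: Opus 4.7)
The plan is to give a direct constructive algorithm whose running time is bounded by a standard handshaking argument, leveraging the two access assumptions stated just above the observation (constant-time distance lookup through a hash map, and constant-time-per-edge enumeration of the edges incident to any vertex).

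First I would initialize the output graph $G_X(\epsilon)$ with vertex set $X$ and an empty adjacency list, which takes time $\scO(n)$. Then I would iterate once over the vertices $x \in X$ and, for each $x$, enumerate its incident edges in $\scG$. For every edge $(x,y) \in \scE$ encountered, I would query $d(x,y)$ in $\scO(1)$ time and insert $(x,y)$ into the adjacency list of $G_X(\epsilon)$ if and only if $d(x,y) \le \epsilon$. To avoid inserting each edge twice, a single comparison on vertex identifiers (or a standard visited flag) suffices; this adds only $\scO(1)$ per edge.

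The total cost is then $\scO(n)$ for initialization plus $\scO(1)$ work per incidence, and the sum of degrees in $\scG$ equals $2m$ by the handshaking lemma, giving overall running time $\scO(n+m)$. For the final sentence of the observation, exactly the same procedure applies when the input is an arbitrary subgraph $H$ of $\scG$: we run the same enumeration over $V(H)$ and the edges of $H$, yielding time $\scO(|V(H)| + |E(H)|) = \scO(n+m)$ in the worst case. No obstacle arises here; the only point worth noting is that we rely explicitly on the two access assumptions from the preceding paragraph, without which even listing the edges with $d(x,y) \le \epsilon$ could cost more than linear time.
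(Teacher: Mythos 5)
Your proposal is correct and is exactly the argument the paper has in mind: the paper states Observation~\ref{obs:G_x_eps_time} without an explicit proof, relying on the two access assumptions stated immediately before it (constant-time distance lookup via a hash map, constant-time-per-edge enumeration of incident edges), and the intended justification is precisely the single pass over vertices and incidences with a constant-time threshold test per edge, totalling $\scO(n+m)$ by handshaking. Your extension to an arbitrary subgraph is also the intended reading.
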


Using Observation~\ref{obs:G_x_eps_time}, we can easily implement \RecoverSingleCluster\ so that it runs in time $\scO(k^2(n+m))$: essentially, for $\scO(k^2)$ times the algorithm computes the distances of all nodes of $G_X(\epsilon)$ from some given node $u$, which takes time $\scO(n+m)$ via breadth-first search. Since \RecoverSingleCluster\ is invoked once per each cluster, this would give a total running time of $\scO(k^3(n+m))$ for both \RecoverClustering\ and \ReduceRecover. With some extra effort, however, we show how to adapt \RecoverSingleCluster\ so that it runs $\scO(k(n+m))$, by amortizing in particular the cost of its subroutine \FindNewSeed. In the end, we obtain a total running time of $\scO(k^2(n+m))$ for both our cluster recovery algorithms, \RecoverClustering\ and \ReduceRecover.

\section{Lower Bounds}
\label{sec:lb}
In this section we show that some of our parameters and assumptions are necessary, and in particular: (1) in general, to recover a $(\beta,\gamma)$-convex clustering, any algorithm needs $\Omega(2^{\dens(X)})$ queries; (2) to recover a $(\beta,\gamma)$-convex clustering without initial seed nodes, any algorithm needs $\Omega(n)$ \scq\ queries; (3) to learn the radii of $k$ clusters, any algorithm needs $\scO(k \log \frac{n}{k})$ \scq\ and/or \seed\ queries.
The full proofs are deferred to Appendix~\ref{app:lb}.

\begin{restatable}[Dependence on $\dens(X)$.]{retheorem}{packdeptheorem}
\label{th:pack_dep}
Choose any $\beta,\gamma \in (0,1)$.
There is a distribution of $(\beta,\gamma)$-convex $2$-clusterings $\clus$ (Definition~\ref{def:dense} or Definition~\ref{def:dense2}), where $n=|X|=2^{\dens(X)}$ is arbitrarily large, such that any algorithm (even randomized) needs $\Omega(2^{\dens(X)})$ \scq\ and/or \seed\ queries to recover $\clus$ with constant probability.
This holds even if $\beta,\gamma,\epsilon$ are known.
\end{restatable}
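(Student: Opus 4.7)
The plan is to construct a semimetric on $n=2^{\dens(X)}$ points admitting $2^{n-2}$ distinct valid $(\beta,\gamma)$-convex clusterings, and then to use an adversary-plus-information-theoretic argument showing that each query reveals only $O(1)$ bits about the true clustering, forcing $\Omega(n)$ queries.

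\emph{Construction and verification.} Fix any $\beta,\gamma\in(0,1)$ and any $\epsilon>0$; let $X=\{s_1,s_2,x_1,\ldots,x_{n-2}\}$ with $d(s_1,s_2)=2\epsilon$ and $d(u,v)=\epsilon$ for every other pair (which is in fact a metric). I put $s_1\in C_1$, $s_2\in C_2$, and draw $\clus$ by placing each $x_j$ independently and uniformly in $C_1$ or $C_2$. In $G_X(\epsilon)$, every pair except $(s_1,s_2)$ is an edge, so each cluster induces a clique (hence connected); cross-cluster distances equal $\epsilon>\beta\epsilon$; and any two same-cluster points are adjacent in $G_X(\epsilon)$, so simple paths of length at most $1+\gamma<2$ are single edges and stay in the cluster. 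Finally, $B(x_1,\epsilon)=X$ with all pairwise distances exceeding $\epsilon/2$, so $\PackNum(B(x_1,\epsilon),\epsilon/2)=n$, giving $\mu(X)=n$ and hence $\dens(X)=\log_2 n$.

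\emph{Adversary and information-theoretic lower bound.} By Yao's minimax principle, it suffices to lower-bound deterministic algorithms under the uniform prior over the $2^{n-2}$ clusterings. I fix an adversarial oracle: $\seed(S,i)$ returns the minimum-index element of $S\cap C_i$ under some fixed total order on $X$, or $\nil$ if that intersection is empty. Let $T'=S\cap\{x_1,\ldots,x_{n-2}\}$; under the uniform prior, this oracle's answer distribution places mass $2^{-k}$ on the $k$-th element of $T'$ (in the chosen order) and mass $2^{-|T'|}$ on $\nil$, so its Shannon entropy is at most $\sum_{k\ge 1}k\cdot 2^{-k}=O(1)$. An $\scq$-answer trivially has entropy at most $1$. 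By the chain rule, $I(\text{transcript};\clus)\le\sum_t H(a_t\mid H_{t-1}) = O(T)$; Fano's inequality, applied to identifying $\clus$ from $2^{n-2}$ equally likely candidates with constant success probability, then forces $T=\Omega(n)=\Omega(2^{\dens(X)})$.

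\emph{Main obstacle.} The delicate step is controlling the per-query entropy of $\seed$: syntactically, $\seed(S,i)$ admits up to $|S|+1$ distinct answers, and a naive counting argument yields only $\Omega(n/\log n)$. The ``first-in-order'' tie-breaking rule concentrates nearly all answer probability on a single candidate in a geometric-tail fashion, bounding the per-query entropy by a universal constant independent of $|S|$. A secondary subtlety is that adaptive queries can reshape the posterior over $\clus$ so that individual $\seed$-queries exhibit super-constant entropy; this is handled by an amortization argument showing that reaching any such ``bad'' posterior already consumes enough previous queries to keep the aggregate mutual information at $O(T)$.
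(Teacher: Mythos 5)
Your construction is essentially the paper's: the paper uses the complete graph on $n$ equidistant points (all weights $1$) with a uniformly random bipartition, and checks exactly as you do that $(\beta,\gamma)$-convexity holds for $\epsilon$ equal to the common distance and that $\dens(X)=\log_2 n$ because any ball of the common radius contains all $n$ points at pairwise distance exceeding half that radius. Your extra detail of fixing anchors $s_1,s_2$ and setting $d(s_1,s_2)=2\epsilon$ is an inessential variant; both versions verify Definition~\ref{def:dense} and Definition~\ref{def:dense2} correctly.

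Where you genuinely add something is the lower-bound argument: the paper disposes of it in one sentence (``as long as $x$ has not been returned by a \seed\ query or queried via \scq, $x$ is 50/50''), which is not literally correct for an arbitrary deterministic \seed\ oracle (a returned point reveals which earlier candidates were absent from $C_i$). Your Yao-plus-entropy-plus-Fano framing, together with the observation that a first-in-order tie-break gives the \seed\ answer a geometric tail, is the right way to make this precise. However, the step you flag as a ``secondary subtlety'' is in fact the crux, and your sketch leaves it open: the $O(1)$ per-answer entropy you compute is under the \emph{prior} on $\clus$, whereas the chain rule requires $H(a_t\mid H_{t-1})=O(1)$ under the \emph{posterior}, and it is not a priori obvious that adaptive queries cannot manufacture a posterior under which the first-hit index has $\omega(1)$ entropy. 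Saying this ``is handled by an amortization argument'' without giving the argument is a gap. It can be closed --- for instance by strengthening the oracle to explicitly reveal, after each answer, every label it newly determines, which keeps the undetermined coordinates product-uniform and makes the first-hit index a function of a truncated geometric; or equivalently by tracking the expected number of newly determined labels as a potential and showing each \scq\ adds $O(1)$ and each \seed\ adds $O(1)$ in expectation --- but as written your proof does not supply the missing piece.
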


\begin{restatable}[Necessity of seeds.]{retheorem}{necseedstheorem}
\label{th:nec_seeds}
Choose any $\beta,\gamma \in (0,1]$.
There is a distribution of $(\beta,\gamma)$-convex $2$-clusterings $\clus$ (Definition~\ref{def:dense} or Definition~\ref{def:dense2}), where $X \subseteq \R^2$ with $n=|X|$ arbitrarily large and $d$ the Euclidean distance, such that any algorithm (even randomized) needs $\Omega(n)$ \scq\ queries to recover $\clus$ with constant probability if no seed nodes are given.
This holds even if $\gamma, \alpha, \epsilon$ are known.
\end{restatable}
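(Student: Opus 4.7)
The plan is to exhibit, for any $\beta\in(0,1)$ and $\gamma\in(0,1]$, a distribution over $(\beta,\gamma)$-convex $2$-clusterings of a planar Euclidean point set under which any (possibly randomized) algorithm needs $\Omega(n)$ \scq\ queries to succeed with constant probability. The argument will be by Yao's minimax principle, applied to a hard uniform distribution over a family of $\Theta(n)$ nearly-identical clusterings.

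First I will build a ``caterpillar'' point set $X \subset \R^2$ consisting of $n/2$ \emph{bases} $g_i=(i,0)$ for $i \in [n/2]$ and $n/2$ \emph{danglers} $d_i=(i,+1)$ for $i$ even and $d_i=(i,-1)$ for $i$ odd, and set $\epsilon=1$. A short distance check shows that $G_X(\epsilon)$ is exactly the tree formed by the path $g_1-g_2-\cdots-g_{n/2}$ together with each $d_i$ attached as a pendant leaf to $g_i$; the up-down alternation is precisely what ensures that no two danglers lie at Euclidean distance at most $1$. I then define the family $\mathcal{F}=\{\clus^{(i)}:i\in[n/2]\}$, where $\clus^{(i)}$ isolates the $i$-th dangler, i.e.\ $C_2^{(i)}=\{d_i\}$ and $C_1^{(i)}=X\setminus\{d_i\}$, and verify that each $\clus^{(i)}$ is $(\beta,\gamma)$-convex in the sense of Definition~\ref{def:dense}: connectedness of $C_1^{(i)}$ follows because removing a leaf from a tree leaves a tree; the local metric margin holds because $d(d_i,y)\ge 1>\beta\epsilon$ for every $y\ne d_i$; and because $G_X(\epsilon)$ is a tree, shortest paths between any two nodes of $C_1^{(i)}$ are unique and cannot route through the leaf $d_i$, so geodesic convexity holds vacuously for any $\gamma$.

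Next I will apply Yao's minimax principle to the uniform distribution on $\mathcal{F}$. Classifying any \scq\ query $(x,y)$, I will observe that a query between two bases is answered $1$ under every $\clus^{(i)}$ and so rules nothing out; a query between a base and a dangler $d_a$ is answered $0$ only under $\clus^{(a)}$; and a query between two distinct danglers $d_a,d_b$ is answered $0$ only under $\clus^{(a)}$ or $\clus^{(b)}$. Thus each query shrinks the set of still-consistent clusterings by at most two. Modelling any deterministic algorithm by its binary decision tree, correctness on a constant fraction of $\mathcal{F}$ forces the tree to contain $\Omega(n)$ distinct leaves; combined with the ``at most two inputs on one side of every useful split'' constraint, the recurrence $T(K)\ge K+T(2)+T(K-2)$ with $T(2)=2$ gives an average depth of $\Omega(n)$ under the uniform prior. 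Yao's inequality then transfers this bound to the worst case over inputs of any randomized algorithm, yielding the claim.

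The main obstacle is the boundary case $\beta=1$, for which the above argument breaks down: the local-margin condition would then require each singleton $\{d_i\}$ to lie at distance strictly greater than $\epsilon$ from every other point, and asking this simultaneously over all $i$ is incompatible with $C_1^{(i)}$ being connected in $G_X(\epsilon)$. I would dispatch this case by switching to Definition~\ref{def:dense2} and giving each singleton $C_2^{(i)}$ its own arbitrarily small radius $\epsilon_2^{(i)}$, which makes its margin condition vacuous, while keeping $\epsilon_1^{(i)}=1$ for the outer cluster; the remaining verifications of connectedness and geodesic convexity, as well as the Yao-based lower bound, then go through unchanged.
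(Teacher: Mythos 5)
Your construction and argument are essentially the same as the paper's: a caterpillar $G_X(\epsilon)$ whose pendants serve as candidate singleton clusters, a uniformly random choice of which pendant is $C_2$, and the observation that each same-cluster query can distinguish at most two candidates, yielding the $\Omega(n)$ bound. The paper places all pendants on one side of the backbone and spaces them two apart, while you alternate them above and below; both choices keep the pendants pairwise farther than $\epsilon$ and both work. You also spell out the $\Omega(n)$ bound via Yao's principle, which the paper only asserts; the cleanest accounting there is simply that after $q$ queries at least $K-2q$ candidates remain mutually indistinguishable and equally likely, so constant success probability forces $q=\Omega(K)=\Omega(n)$.

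The boundary case $\beta=1$ is where you and the paper diverge, and where both run into trouble. You correctly flag that the local metric margin fails at $\beta=1$ (the paper glosses over this despite writing $\beta\in(0,1]$ in the statement). But your proposed repair is incorrect: the margin condition that fails is the one attached to $C_1$, not to $C_2$. Definition~\ref{def:dense2} requires $d(x,y)>\beta\epsilon_1$ for all $x\in C_1$ and $y\notin C_1$, and by Lemma~\ref{lem:mineps} the radius $\epsilon_1$ is forced to be the smallest value making $C_1$ connected, which in your caterpillar equals the pendant edge length $1$; hence $d(g_i,d_i)=1\not>\beta\epsilon_1=1$. Shrinking $\epsilon_2$ only loosens $C_2$'s own margin, which was never the binding constraint. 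In fact no construction can rescue $\beta=1$: the margin then forbids any $G_X(\epsilon_i)$-edge between a cluster and its complement, so together with connectedness each cluster is exactly one connected component of $G_X(\max_i\epsilon_i)$, and the clustering is determined by the input alone once $\epsilon$ is known, making any nontrivial query lower bound impossible. The correct resolution is the one the paper itself adopts in Theorem~\ref{th:pack_dep}: restrict to the open interval $\beta\in(0,1)$.
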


\begin{restatable}[Cost of learning the radii.]{retheorem}{radiilowertheorem}
\label{thm:learn_radii_lb}
For any $k \ge 2$, and any sufficiently large $n$, there exists a distribution of $(\nicefrac{1}{2},1)$-convex $k$-clusterings (Definition~\ref{def:dense} or Definition~\ref{def:dense2}) on $n$ points such that any algorithm (even randomized) needs $\Omega(k \log \frac{n}{k})$ \seed\ and/or \scq\ queries to learn the radii of all clusters with constant probability.
\end{restatable}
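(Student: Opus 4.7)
My approach is to apply Yao's minimax principle and exhibit a distribution of $(1/2,1)$-convex $k$-clusterings on $n$ points such that no deterministic algorithm can identify the radii with constant probability unless it makes $\Omega(k\log(n/k))$ queries. The idea is to pack $k/2$ independent gadgets into $X$, each encoding a single hidden parameter that controls one cluster's radius, and to argue via an adversary that every query (even a \seed\ query) leaks at most one bit about these parameters.

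\emph{Construction.} Assume $k$ is even and let $m := 2n/k$ (the odd case is handled by adding one trivial extra singleton cluster). Partition $X$ into $k/2$ well-separated regions: region $i$ contains $m$ points $p_{i,1},\ldots,p_{i,m}$ placed at the triangular positions $P_j = j(j-1)/2$ on a line (shifted by a large offset), so that $d(p_{i,j},p_{i,j+1}) = j$. Draw $\theta_i$ uniformly at random from $\{m/2+1,\ldots,m-2\}$ independently across $i$ and split region $i$ into the prefix cluster $C_i^- = \{p_{i,1},\ldots,p_{i,\theta_i}\}$ and the suffix cluster $C_i^+ = \{p_{i,\theta_i+1},\ldots,p_{i,m}\}$. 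This gives $k$ clusters in total, with $\epsilon_i^- = \theta_i - 1$ and $\epsilon_i^+ = m-1$.

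\emph{Verifying $(1/2,1)$-convexity.} A short arithmetic check, using that consecutive-gap sums in the triangular chain grow quadratically, shows that for any $\theta_i > m/2$ the graph $G_X(\epsilon_i^\pm)$ contains no chord joining $C_i^-$ and $C_i^+$, so each cluster is a connected component of a chordless path; connectedness and geodesic convexity with $\gamma=1$ are then automatic. The only within-region cross-cluster gap has size $d(p_{i,\theta_i},p_{i,\theta_i+1}) = \theta_i$, and the inequality $\theta_i > (m-1)/2 = \epsilon_i^+/2$ (ensured by $\theta_i \ge m/2+1$) gives the local metric margin for $\beta = 1/2$; the margin for $C_i^-$ reduces to $\theta_i > (\theta_i-1)/2$, which is trivial. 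Taking the inter-region separation $L \gg m^2$ dominates every radius, so cross-region margins hold as well.

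\emph{Adversary lower bound.} Distinct tuples $(\theta_1,\ldots,\theta_{k/2})$ produce distinct labelled radii vectors, and there are $(m/2-2)^{k/2} = 2^{\Omega(k\log(n/k))}$ of them, so $\log_2|A_0| = \Omega(k\log(n/k))$. Maintaining the set $A$ of tuples still consistent with past answers, I would show each query shrinks $|A|$ by at most a factor of $2$. For \scq\ this is immediate since the reply is a single bit. For \seed$(S,m)$ where cluster $m$ corresponds to $C_i^-$, only $S_i := S\cap \text{region }i = \{p_{i,j_1},\ldots,p_{i,j_s}\}$ with $j_1<\cdots<j_s$ affects the answer; the adversary's two most permissive replies are $p_{i,j_1}$ (consistent with the marginal $\{\theta_i \ge j_1\}$) and \nil\ (consistent with $\{\theta_i < j_1\}$), and these form a complementary partition of the $\theta_i$-marginal, so the larger one preserves at least $|A|/2$. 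The case $m = C_i^+$ is symmetric with $j_s$ in place of $j_1$, and queries targeting the wrong region are determined by the marginals of the other coordinates. Therefore $T \ge \log_2|A_0| = \Omega(k\log(n/k))$.

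\emph{Main obstacle.} The delicate step is bounding a \seed\ query to a single bit, since the oracle returns a named point and could in principle leak up to $\log n$ bits. The chain-with-linear-gaps design is chosen precisely so that, whatever $S$ the algorithm picks, the adversary's admissible replies collapse to a binary threshold test on the hidden $\theta_i$; this is what caps the information per query at one bit and yields the matching $\log(n/k)$-per-cluster lower bound, consistent with the fact that binary search with \seed\ can actually attain this rate.
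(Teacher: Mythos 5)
Your construction and lower-bound argument are essentially the same as the paper's: a chain per gadget with strictly increasing consecutive gaps, a uniformly random split point $\theta_i$ that determines the prefix cluster's radius, $k/2$ independent gadgets, and an adversary that answers any \seed\ query by returning the extreme admissible index (or \nil), thereby reducing each query to a one-bit threshold test on $\theta_i$. The paper uses a slightly different chain (an explicit path graph with weights $1+\beta j/n$, so there are no long-range edges at all), while you place points at triangular positions in the real line; both instantiations carry out the same idea.

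There is one verification step you should tighten. You claim that for $\theta_i > m/2$ the graph $G_X(\epsilon_i^{\pm})$ contains no edge joining $C_i^-$ and $C_i^+$, and conclude that each cluster is a separate connected component, from which geodesic convexity follows automatically. This is false for the suffix cluster: in $G_X(\epsilon_i^+)$ with $\epsilon_i^+ = m-1$, the consecutive edge $(p_{i,\theta_i},p_{i,\theta_i+1})$ of weight $\theta_i \le m-2 < m-1$ is present, so $C_i^-$ and $C_i^+$ lie in one connected component. Geodesic convexity still holds, but for a different reason: your constraint $\theta_i \ge m/2+1$ guarantees that this is the \emph{unique} edge of $G_X(m-1)$ crossing between $C_i^-$ and $C_i^+$ (the next-smallest crossing, of weight $2\theta_i-1$, already exceeds $m-1$), and a simple path that uses a unique bridge edge cannot return, so no simple path between two points of $C_i^+$ can intersect $C_i^-$. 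With this fix the construction is sound. The paper sidesteps the issue entirely by making $\scE$ contain only consecutive edges, so there are never any long-range edges to worry about; your version additionally has to rule them out by arithmetic, which you do correctly for chords but not for the bridge edge. Other than this, your argument matches the paper's, including the per-gadget information count $\log(m/2-2)=\Theta(\log(n/k))$ and the aggregation over $k/2$ gadgets.
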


\acks{The authors gratefully acknowledge partial support by the Google Focused Award ``Algorithms and Learning for AI'' (ALL4AI).  Nicolò Cesa-Bianchi is also supported by the MIUR PRIN grant Algorithms, Games, and Digital Markets (ALGADIMAR) and by the EU Horizon 2020 ICT-48 research and innovation action under grant agreement 951847, project ELISE (European Learning and Intelligent Systems Excellence).}

\bibliography{biblio}

\begin{thebibliography}{31}
\providecommand{\natexlab}[1]{#1}
\providecommand{\url}[1]{\texttt{#1}}
\expandafter\ifx\csname urlstyle\endcsname\relax
  \providecommand{\doi}[1]{doi: #1}\else
  \providecommand{\doi}{doi: \begingroup \urlstyle{rm}\Url}\fi

\bibitem[Afshani et~al.(2007)Afshani, Chiniforooshan, Dorrigiv, Farzan,
  Mirzazadeh, Simjour, and Zarrabi-Zadeh]{afshani2007complexity}
Peyman Afshani, Ehsan Chiniforooshan, Reza Dorrigiv, Arash Farzan, Mehdi
  Mirzazadeh, Narges Simjour, and Hamid Zarrabi-Zadeh.
\newblock On the complexity of finding an unknown cut via vertex queries.
\newblock In \emph{Proc.\ of COCOON}, pages 459--469. Springer, 2007.

\bibitem[Ashtiani et~al.(2016)Ashtiani, Kushagra, and
  Ben-David]{ashtiani2016clustering}
Hassan Ashtiani, Shrinu Kushagra, and Shai Ben-David.
\newblock Clustering with same-cluster queries.
\newblock In \emph{Advances in Neural Information Processing Systems 29}, pages
  3216--3224, 2016.

\bibitem[Attenberg and Provost(2010)]{Attenberg2010}
Josh Attenberg and Foster Provost.
\newblock Why label when you can search? {A}lternatives to active learning for
  applying human resources to build classification models under extreme class
  imbalance.
\newblock In \emph{Proc.\ of ACM KDD}, page 423–432, 2010.

\bibitem[Balcan and Hanneke(2012)]{BalcanHanneke12}
Maria~Florina Balcan and Steve Hanneke.
\newblock Robust interactive learning.
\newblock In \emph{Proc.\ of COLT}, volume~23, pages 20.1--20.34, 2012.

\bibitem[Balcan and Long(2013)]{balcan2013active}
Maria-Florina Balcan and Phil Long.
\newblock Active and passive learning of linear separators under log-concave
  distributions.
\newblock In \emph{Proc.\ of COLT}, pages 288--316, 2013.

\bibitem[Beygelzimer et~al.(2016)Beygelzimer, Hsu, Langford, and
  Zhang]{Beygelzimer16-seedqueries}
Alina Beygelzimer, Daniel~J Hsu, John Langford, and Chicheng Zhang.
\newblock Search improves label for active learning.
\newblock In \emph{Advances in Neural Information Processing Systems},
  volume~29, 2016.

\bibitem[Bressan et~al.(2020)Bressan, Cesa{-}Bianchi, Lattanzi, and
  Paudice]{DBLP:conf/nips/0002CLP20}
Marco Bressan, Nicol{\`{o}} Cesa{-}Bianchi, Silvio Lattanzi, and Andrea
  Paudice.
\newblock Exact recovery of mangled clusters with same-cluster queries.
\newblock In \emph{Advances in Neural Information Processing Systems 33}, 2020.

\bibitem[Bressan et~al.(2021)Bressan, Cesa{-}Bianchi, Lattanzi, and
  Paudice]{BCLP21margin}
Marco Bressan, Nicol{\`{o}} Cesa{-}Bianchi, Silvio Lattanzi, and Andrea
  Paudice.
\newblock On margin-based cluster recovery with oracle queries.
\newblock \emph{CoRR}, abs/2106.04913, 2021.
\newblock URL \url{https://arxiv.org/abs/2106.04913}.

\bibitem[Cesa-Bianchi et~al.(2010)Cesa-Bianchi, Gentile, Vitale, and
  Zappella]{cesa2010active}
N~Cesa-Bianchi, C~Gentile, F~Vitale, and G~Zappella.
\newblock Active learning on trees and graphs.
\newblock In \emph{Proc.\ of COLT}, pages 320--332, 2010.

\bibitem[Chazelle(2000)]{ChazelleMST}
Bernard Chazelle.
\newblock A minimum spanning tree algorithm with inverse-{A}ckermann type
  complexity.
\newblock \emph{J. ACM}, 47\penalty0 (6):\penalty0 1028–1047, November 2000.
\newblock ISSN 0004-5411.
\newblock \doi{10.1145/355541.355562}.

\bibitem[Chien et~al.(2020)Chien, Tulino, and Llorca]{chien2020active}
Eli Chien, Antonia Tulino, and Jaime Llorca.
\newblock Active learning in the geometric block model.
\newblock In \emph{Proc.\ of AAAI}, volume~34, pages 3641--3648, 2020.

\bibitem[Dasarathy et~al.(2015)Dasarathy, Nowak, and Zhu]{DasarathyS2}
Gautam Dasarathy, Robert Nowak, and Xiaojin Zhu.
\newblock S2: An efficient graph based active learning algorithm with
  application to nonparametric classification.
\newblock In \emph{Proc.\ of COLT}, pages 503--522, 2015.

\bibitem[Doyle et~al.(2011)Doyle, Monaco, Feldman, Tomaszewski, and
  Madabhushi]{Doyle2011}
Scott Doyle, James Monaco, Michael Feldman, John Tomaszewski, and Anant
  Madabhushi.
\newblock An active learning based classification strategy for the minority
  class problem: application to histopathology annotation.
\newblock \emph{{BMC} Bioinformatics}, 12\penalty0 (1), 2011.

\bibitem[Eriksson et~al.(2011)Eriksson, Dasarathy, Singh, and
  Nowak]{eriksson2011active}
Brian Eriksson, Gautam Dasarathy, Aarti Singh, and Rob Nowak.
\newblock Active clustering: Robust and efficient hierarchical clustering using
  adaptively selected similarities.
\newblock In \emph{Proc.\ of AISTATS}, pages 260--268. JMLR Workshop and
  Conference Proceedings, 2011.

\bibitem[Fredman et~al.(1982)Fredman, Komlos, and Szemeredi]{perfecthash}
Michael~L. Fredman, Janos Komlos, and Endre Szemeredi.
\newblock Storing a sparse table with {O}(1) worst case access time.
\newblock In \emph{Proc.\ of IEEE SFCS}, page 165–169, 1982.

\bibitem[Gadde et~al.(2016)Gadde, Gad, Avestimehr, and Ortega]{gadde2016active}
Akshay Gadde, Eyal~En Gad, Salman Avestimehr, and Antonio Ortega.
\newblock Active learning for community detection in stochastic block models.
\newblock In \emph{Proc.\ of IEEE ISIT}, pages 1889--1893. IEEE, 2016.

\bibitem[Gottlieb and Krauthgamer(2013)]{gottlieb2013proximity}
Lee-Ad Gottlieb and Robert Krauthgamer.
\newblock Proximity algorithms for nearly doubling spaces.
\newblock \emph{SIAM Journal on Discrete Mathematics}, 27\penalty0
  (4):\penalty0 1759--1769, 2013.

\bibitem[Gottlieb et~al.(2017)Gottlieb, Kontorovich, and
  Nisnevitch]{gottlieb2017nearly}
Lee-Ad Gottlieb, Aryeh Kontorovich, and Pinhas Nisnevitch.
\newblock Nearly optimal classification for semimetrics.
\newblock \emph{The Journal of Machine Learning Research}, 18\penalty0
  (1):\penalty0 1233--1254, 2017.

\bibitem[Guillory and Bilmes(2011)]{guillory2011active}
Andrew Guillory and Jeff Bilmes.
\newblock Active semi-supervised learning using submodular functions.
\newblock In \emph{Proc.\ of UAI}, pages 274--282, 2011.

\bibitem[Hanneke(2009)]{HannekePHD}
Steve Hanneke.
\newblock \emph{Theoretical Foundations of Active Learning}.
\newblock PhD thesis, Carnegie Mellon University, USA, 2009.
\newblock AAI3362265.

\bibitem[Krishnamurthy et~al.(2012)Krishnamurthy, Balakrishnan, Xu, and
  Singh]{krishnamurthy2012efficient}
Akshay Krishnamurthy, Sivaraman Balakrishnan, Min Xu, and Aarti Singh.
\newblock Efficient active algorithms for hierarchical clustering.
\newblock In \emph{Proc.\ of ICML}, pages 267--274, 2012.

\bibitem[Mai et~al.(2013)Mai, He, Hubig, Plant, and B{\"o}hm]{mai2013active}
Son~T Mai, Xiao He, Nina Hubig, Claudia Plant, and Christian B{\"o}hm.
\newblock Active density-based clustering.
\newblock In \emph{Proc.\ of IEEE ICDM}, pages 508--517. IEEE, 2013.

\bibitem[Mazumdar and Saha(2017)]{NIPS2017_7054}
Arya Mazumdar and Barna Saha.
\newblock Query complexity of clustering with side information.
\newblock In \emph{Advances in Neural Information Processing Systems 30}, pages
  4682--4693, 2017.

\bibitem[Pelayo(2013)]{pelayo2013geodesic}
Ignacio~M. Pelayo.
\newblock \emph{Geodesic convexity in graphs}.
\newblock Springer-Verlag New York, 2013.
\newblock ISBN 978-1-4614-8698-5.
\newblock \doi{10.1007/978-1-4614-8699-2}.

\bibitem[Seiffarth et~al.(2019)Seiffarth, Horv{\'a}th, and
  Wrobel]{seiffarth2019maximal}
Florian Seiffarth, Tam{\'a}s Horv{\'a}th, and Stefan Wrobel.
\newblock Maximal closed set and half-space separations in finite closure
  systems.
\newblock In \emph{Proc.\ of ECML PKDD}, pages 21--37. Springer, 2019.

\bibitem[Shamir and Tishby(2011)]{shamir2011spectral}
Ohad Shamir and Naftali Tishby.
\newblock Spectral clustering on a budget.
\newblock In \emph{Proc.\ of AISTATS}, pages 661--669. JMLR Workshop and
  Conference Proceedings, 2011.

\bibitem[Thiessen and Gärtner(2020)]{mlg2020_40}
Maximilian Thiessen and Thomas Gärtner.
\newblock Active learning on graphs with geodesically convex classes.
\newblock In \emph{Proc.\ of MLG}, 2020.

\bibitem[Tong and Chang(2001)]{Tong01-mistake-queries}
Simon Tong and Edward Chang.
\newblock Support vector machine active learning for image retrieval.
\newblock In \emph{Proc.\ of ACM ICM}, page 107–118, 2001.

\bibitem[Vershynin(2018)]{HDPbook}
Roman Vershynin.
\newblock \emph{High-Dimensional Probability: An Introduction with Applications
  in Data Science}.
\newblock Cambridge Series in Statistical and Probabilistic Mathematics.
  Cambridge University Press, 2018.
\newblock \doi{10.1017/9781108231596}.

\bibitem[Wang and Davidson(2010)]{wang2010active}
Xiang Wang and Ian Davidson.
\newblock Active spectral clustering.
\newblock In \emph{Proc.\ of IEEE ICDM}, pages 561--568. IEEE, 2010.

\bibitem[Zhang et~al.(2014)Zhang, Moore, and Zdeborov{\'a}]{zhang2014phase}
Pan Zhang, Cristopher Moore, and Lenka Zdeborov{\'a}.
\newblock Phase transitions in semisupervised clustering of sparse networks.
\newblock \emph{Physical Review E}, 90\penalty0 (5):\penalty0 052802, 2014.

\end{thebibliography}

\clearpage
\appendix

\section{Supplementary material for Section~\ref{sec:prelim}}
\label{app:prelim}
\begin{lemma}
\label{lem:doubling}
If $X$ is a metric space, then $\doubl(X) \le \dens(X) \le 2 \doubl(X)$ where $\doubl(X)$ and $\dens(X)$ are respectively the doubling dimension and the density dimension of $X$.
\end{lemma}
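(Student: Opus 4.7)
Writing $\lambda(X) = 2^{\doubl(X)}$ for the doubling constant and $\mu(X) = 2^{\dens(X)}$ for the density constant, the claim is equivalent to $\lambda(X) \le \mu(X) \le \lambda(X)^2$. Both inequalities reduce to standard packing--covering comparisons on balls, so the plan is simply to do each one on an arbitrary ball $B(x,r)$ and then take a supremum.

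For $\lambda(X) \le \mu(X)$, I would fix an arbitrary ball $B(x,r)$ and let $A \subseteq B(x,r)$ be a maximal subset with pairwise distances strictly greater than $r/2$; by the definition \eqref{eq:muX} we have $|A| \le \mu(X)$. Maximality means no point of $B(x,r)$ can be adjoined to $A$, so every $y \in B(x,r)$ admits some $a \in A$ with $d(y,a) \le r/2$. Hence $\{B(a, r/2) : a \in A\}$ covers $B(x,r)$ with at most $\mu(X)$ balls, giving $\CoverNum(B(x,r), r/2) \le \mu(X)$. Taking the supremum over $x$ and $r$ yields $\lambda(X) \le \mu(X)$.

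For $\mu(X) \le \lambda(X)^2$, I would iterate the doubling property: any $B(x,r)$ is covered by at most $\lambda(X)$ balls of radius $r/2$, each of which is in turn covered by at most $\lambda(X)$ balls of radius $r/4$, producing a cover of $B(x,r)$ by at most $\lambda(X)^2$ balls of radius $r/4$. Now if $P \subseteq B(x,r)$ is any set with pairwise distances strictly greater than $r/2$, then two points of $P$ sharing a common radius-$r/4$ ball of the cover would be at mutual distance at most $r/2$ by the triangle inequality, a contradiction. Thus each covering ball contains at most one element of $P$, so $\PackNum(B(x,r), r/2) \le \lambda(X)^2$, and taking the supremum over $x,r$ gives $\mu(X) \le \lambda(X)^2$. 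The only real subtlety, and the only reason the lemma is stated for metrics rather than semimetrics, is the triangle-inequality step used to bound how many $r/2$-separated points fit inside a ball of radius $r/4$; everything else is definition-chasing.
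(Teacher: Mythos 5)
Your proof is correct and follows essentially the same route as the paper: the paper cites the standard packing--covering inequalities $\PackNum(K,2\eta) \le \CoverNum(K,\eta) \le \PackNum(K,\eta)$ and chains them with one iteration of the doubling property, while you unfold exactly those two inequalities from scratch (maximal packing gives a cover; double iteration of doubling plus the triangle inequality bounds the packing). The content and the two key steps are identical, so this is the same argument with the cited facts spelled out rather than invoked.
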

\begin{proof}
Recall that $\doubl(X) = \log_2 D(X)$, where $D(X)$ is the doubling constant of $X$:
\begin{align}
    D(X) = \min\left\{ D \in \Nat \,:\, (x \in X) \wedge (r > 0) \Rightarrow \CoverNum\!\left(B(x,r), \frac{r}{2}\right) \le D \right\}
\end{align}
where $\CoverNum(K,\eta)$ is the covering number of $K$, that is, the smallest number of closed balls of radius $\eta$ whose union contains $K$.
We recall from Section~\ref{sec:prelim} that $\dens(X)= \log_2 \mu(X)$, where: 
\begin{align}
    \mu(X) = \min\left\{ \mu \in \Nat \,:\, (x \in X) \wedge (r > 0) \Rightarrow \PackNum\!\left(B(x,r), \frac{r}{2}\right) \le \mu \right\} 
\end{align}
Now, since we are in a metric space, we have the well-known relationship:
\begin{align}
    \PackNum(K,2\eta) \le \CoverNum(K,\eta) \le \PackNum(K,\eta)
\end{align}
On the one hand, $\CoverNum(K,\eta) \le \PackNum(K,\eta)$ implies $D(X) \le \mu(X)$, and therefore $\doubl(X) \le \dens(X)$.
On the other hand, $\PackNum(K,2\eta) \le \CoverNum(K,\eta)$ implies:
\begin{align}
    \PackNum(B(x,r),\nicefrac{r}{2}) &\le \CoverNum(B(x,r),\nicefrac{r}{4})
    \\ &\le \CoverNum(B(x,r),\nicefrac{r}{2}) \cdot D(X)
    \\ &\le D(X) \cdot D(X)
\end{align}
Therefore, $\mu(X) \le D(X)^2$, and $\dens(X) \le 2\doubl(X)$.
\end{proof}

\section{Supplementary material for Section~\ref{sec:same}}
\label{app:same}

\subsection{Pseudocode of \FindCutEdge}
\begin{algorithm2e}[h!]
\label{algo:ice}
\caption{\FindCutEdge($\pi(s_i,s_h)$)}
\lIf{$|\pi(s_i,s_h)| = 1$}{
  \Return $\pi(s_i,s_h)$
}
  choose a median node $x \in \pi(s_i,s_h)$\;
  \lIf{\scq$(s_i,x) = +1$}{\Return~\FindCutEdge($\pi(x,s_h)$)}
  \lElse{\Return~\FindCutEdge($\pi(s_i,x)$)}
\end{algorithm2e}

\subsection{Proof of Lemma~\ref{lem:zip}}
\ziplemma*
\begin{proof}
Suppose $d_G(u_i,x) \le d_G(u_j,x)$ and $x \in C_j$; we show this leads to a contradiction.
Consider the path $\pi=\pi(x,u_i)+(u_i,u_j)$.
First, $\pi$ is a simple path.
If this was not the case, then we would have $u_j \in \pi(x,u_i)$; but this would imply $d_G(u_j,x) \le d_G(u_i,x)-1$, contradicting our assumption $d_G(u_i,x) \le d_G(u_j,x)$.
Second, we have:
\begin{align}
|\pi| &= d_{G}(x,u_i)+1
\\ &\le d_{G}(x,u_j)+1 && \text{since } d_G(u_i,x) \le d_G(u_j,x)
\\ &\le d_{G}(x,u_j) \, (1+\gamma) && \text{since }d_{G}(x,u_j) \ge d_{G}(x,u_i) \ge \frac{1}{\gamma}
\end{align}
Thus, $\pi$ is a simple path between two nodes of $C_j$, with length at most $(1+\gamma)$ times their distance in $G$, and containing a node of $C_i$.
This violates the geodesic convexity of $C_j$.
We conclude that $x \notin C_j$.
The other case is symmetric.
\end{proof}

\subsection{Proof of Lemma~\ref{lem:single_recovery}}
\singlereclemma*
\begin{proof}
First, we prove the correctness.
For each $\ell=1,2,\ldots$, we denote by $R_i^{\ell}$ the set $R_i$ at the beginning of the $\ell$-th iteration, and by $\kappa(R_i^{\ell})$ the number of distinct clusters that $R_i^{\ell}$ intersects.
We show that, at the beginning of the $\ell$-th iteration, the following invariants hold:
\begin{enumerate}[topsep=2pt,parsep=0pt,itemsep=2pt]
\item[I1.] $G[R_i^{\ell}]$ is connected
\item[I2.] $C_i \subseteq R_i^{\ell}$
\item[I3.] $\kappa(R_i^{\ell}) \le \kappa(R_i^1) - (\ell-1)$
\end{enumerate}
Note that I2 and I3 imply that the algorithm terminates by returning $C_i$ (line \ref{line:return}) after at most $k$ iterations.
Invariant I1 holds by the construction of $R_i^{\ell}$, so we focus on proving I2 and I3.

Suppose first $\ell=1$.
Then, I2 holds by the assumptions of the lemma, and I3 is trivial.
Suppose then I1, I2, I3 hold for some $\ell \ge 1$, and that iteration $\ell+1$ exists; we show that I2, I3 hold at iteration $\ell+1$ as well.
First, if iteration $\ell+1$ exists, then $C_i \subsetneq R_i^{\ell}$.
In this case, by Lemma~\ref{lem:findseed}, \FindNewSeed\ will return a node $s_h \in R_i^{\ell} \setminus C_i$.
Since $G[R_i^{\ell}]$ is connected by I1, the shortest path $\pi(s_i,s_h)$ exists in $G[R_i^{\ell}]$.
Because of Lemma~\ref{lem:monoRi} applied to $R=R_i^{\ell}$, such a path is also $C_i$-prefixed.
Therefore, \FindCutEdge$(\pi(s_i,s_h))$ returns a cut edge $(u_i,u_j)$ of $C_i$ in $\pi(s_i,s_h)$.
At this point the hypotheses of Lemma~\ref{lem:algosep} are satisfied, and therefore the output $(S_i,S_j)$ of \Cut$(G,u_i,u_j)$ is an $(i,j)$-separator of $V(G)$.
Hence, $C_i \subseteq S_i$ and $C_j \cap S_i = \emptyset$.
Therefore, $C_i \subseteq R_i^{\ell} \cap S_i$.
This implies that the connected component of $s_i$ in $G[R_i^{\ell} \cap S_i]$ still contains $C_i$.
The vertex set of this connected component is precisely $R_i^{\ell+1}$ (line~\ref{line:connect-1}).
Therefore, I2 holds at iteration $\ell+1$.
Moreover, observe that $u_j \in R_i^{\ell}$, since $u_j \in \pi(s_i,s_h) \subseteq R_i^{\ell}$.
Therefore $R_i^{\ell} \cap C_j \ne \emptyset$.
However, by construction, $R_i^{\ell+1} \subseteq  R_i^{\ell} \cap S_i$ and $S_i \cap C_j = \emptyset$ since $(S_i,S_j)$ is an $(i,j)$-separator of $V(G)$.
Therefore, $\kappa(R_i^{\ell+1}) \le \kappa(R_i^{\ell})-1.$
Because I3 holds at $\ell$, $\kappa(R_i^{\ell+1}) \le \kappa(R_i^{1})-(\ell-1)-1 = \kappa(R_i^{1})-((\ell+1)-1)$.
So, I3 holds at iteration $\ell+1$, too.

Finally, we bound the number of queries.
First, by Lemma~\ref{lem:findseed}, \FindNewSeed\ makes at most $k \, \PackNum^*(\frac{\beta\gamma}{2+\gamma})$ \scq\ in total across all iterations.
Second, \FindCutEdge\ makes $\scO(\log n)$ queries at each iteration, see Observation~\ref{obs:bin_search}.
Third, \Cut\ makes at most $\PackNum^*(\beta\gamma)$ queries at each iteration, see Lemma~\ref{lem:algosep}.
Summing the three terms we obtain the claimed bound.
\end{proof}

\subsection{Proof of Lemma~\ref{lem:findseed}}
\findseedlemma*
We need to prove two technical results, Lemma~\ref{lem:cross_edge} and Lemma~\ref{lem:border}.
Then, we will prove Lemma~\ref{lem:findseed}.
\begin{lemma}
\label{lem:cross_edge}
Let $G$ be such that $C_i \subseteq V(G)$, and let $(S_i,S_j)$ be an $(i,j)$-separator for $V(G)$ obtained from $\Cut(V(G),u_i,u_j)$.
If $(x,y) \in \edgecut(S_i)$ and $d_G(u_i,x) \ge \frac{2}{\gamma}+1$, then $x \notin C_i$.
\end{lemma}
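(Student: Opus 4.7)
The plan is a proof by contradiction: assume $x \in C_i$, and use geodesic convexity with margin to derive a contradiction by exhibiting a simple path from $x$ to $u_i$ whose length lies within the $(1+\gamma)$ margin but which visits a vertex outside $C_i$. Throughout, $G = G_X(\epsilon)$ as declared at the start of Section~\ref{sec:same}.

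My first step is to decode what \Cut\ does with $x$ and its neighbor $y$. Because $d_G(u_i,x) \ge 2/\gamma + 1 > 1/\gamma$, the vertex $x$ lies in $V(G) \setminus Z$ and was placed in $S_i$ via line~\ref{line:cut_ui}, giving $d_G(x,u_i) \le d_G(x,u_j)$. The edge $(x,y)$ then implies $d_G(u_i,y) \ge d_G(u_i,x) - 1 \ge 2/\gamma \ge 1/\gamma$, so $y \notin Z$ as well. Since $(x,y) \in \edgecut(S_i)$ forces $y \in S_j$, the vertex $y$ was added via line~\ref{line:cut_uj}, yielding the strict inequality $d_G(y,u_j) < d_G(y,u_i)$. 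Invoking Lemma~\ref{lem:zip} on $y$ (second conclusion) then produces $y \notin C_i$, which is the fact I will contradict.

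Next I would construct the witness path $\pi' = (x,y) \cdot \pi(y,u_j) \cdot (u_j,u_i)$, where $\pi(y,u_j)$ is any shortest path in $G$. Its length is $|\pi'| = d_G(y,u_j) + 2$; combining the integer-valued strict inequality $d_G(y,u_j) + 1 \le d_G(y,u_i)$ with $d_G(y,u_i) \le d_G(x,u_i) + 1$ (triangle inequality via $(x,y)$), this is at most $d_G(x,u_i) + 2$. Since $d_G(x,u_i) \ge 2/\gamma + 1$ implies $\gamma\, d_G(x,u_i) > 2$, we obtain $|\pi'| \le (1+\gamma)\, d_G(x,u_i)$, the exact bound required by property~(3) of Definition~\ref{def:dense}. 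Applying geodesic convexity with margin to the pair $x, u_i \in C_i$ then forces $\pi' \subseteq C_i$, contradicting $y \notin C_i$.

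The main obstacle, easy to overlook, will be verifying that $\pi'$ is a \emph{simple} path (the margin property is stated only for simple paths). I would check that neither $u_i$ nor $x$ appears on the internal segment $\pi(y,u_j)$. If $u_i$ did, then as a prefix of a shortest path this would yield $d_G(y,u_i) \le d_G(y,u_j)-1$, contradicting $d_G(y,u_j) < d_G(y,u_i)$. If $x$ did, the shortest-path decomposition would yield $d_G(y,u_j) \ge 1 + d_G(x,u_j) \ge 1 + d_G(x,u_i)$, while the triangle inequality across $(x,y)$ gives $d_G(y,u_i) \le 1 + d_G(x,u_i)$, again contradicting the strict inequality enforced at line~\ref{line:cut_uj}. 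Both checks use only that shortest paths decompose into shortest subpaths and the strict comparison imposed by \Cut.
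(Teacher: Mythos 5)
Your proof is correct and follows essentially the same route as the paper: both proceed by contradiction from $x \in C_i$, decode \Cut\ to place $x \in U_i$ and $y \in U_j$ (giving $d_G(y,u_j) < d_G(y,u_i)$), construct the same witness path (yours is $(x,y)\cdot\pi(y,u_j)\cdot(u_j,u_i)$, the paper's is the reverse), verify simplicity by the same two case checks, and bound its length by $(1+\gamma)\,d_G(u_i,x)$. The one small detour is your invocation of Lemma~\ref{lem:zip} to establish $y \notin C_i$ as the contradiction point; the paper instead observes that the path already passes through $u_j$, which is not in $C_i$ for free because $(u_i,u_j)$ is a cut edge of $C_i$, so no extra lemma is needed. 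Your extra step is harmless but unnecessary.
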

\begin{proof}
We show that $x \in C_i$ violates the geodesic convexity of $C_i$.
First $(x,y) \in \edgecut(S_i)$ implies $y \in S_j$.
Moreover,
$
    d_G(u_i,y) \ge d_G(u_i,x)-1 > \frac{1}{\gamma}
$.
Now recall the code of \Cut.
Since $d_G(u_i,y) > \frac{1}{\gamma}$, then $y \notin Z_j$.
But $y \in S_j = Z_j \cup U_j$, and therefore $y \in U_j$.
This means that \Cut\ executed line~\ref{line:cut_uj}, which happens only if:
\begin{align}
d_G(u_j,y) < d_G(u_i,y) \label{eq:d_uj_ui}
\end{align}
Now consider the path $\pi = (u_i,u_j)+\pi(u_j,y)+(y,x)$ from $u_i$ to $x$ where $\pi(u_j,y)$ has length $d_G(u_j,y)$.
First, we observe that $\pi$ is a simple path.
Suppose indeed by contradiction that $\pi$ is not simple.
Since $\pi(u_j,y)$ is simple (it is a shortest path), and since $u_i \ne x$ (because $d_G(u_i,x) \ge 1$ by assumption), we must have $u_i \in \pi(u_j,y)$ or $x \in \pi(u_j,y)$.
If $u_i \in \pi(u_j,y)$, then $d_G(u_j,y) > d_G(u_i, y)$, which contradicts~\eqref{eq:d_uj_ui}.
If instead $x \in \pi(u_j,y)$, then $d_G(u_j,y) > d_G(u_j,x)$, which gives:
\begin{align}
d_G(u_j, y) &> d_G(u_j, x)
\\ &\ge d_G(u_i, x) && \text{since } x \in S_i
\\ &\ge d_G(u_i, y) - 1 && \text{since } (x,y) \in E(G)
\end{align}
which, since $d_G$ is integral, implies $d_G(u_j, y) \ge d_G(u_i, y)$.
This contradicts again~\eqref{eq:d_uj_ui}.
Thus, $\pi$ is a simple path.

Now we show that $|\pi| \le d_G(u_i,x) \, (1+\gamma)$.
From~\eqref{eq:d_uj_ui}, we have:
\begin{align}
d_G(u_j,y) &\le d_G(u_i,y) - 1 && \text{since } d_G \in \Nat
\\ &\le d_G(u_i,x) + d_G(x,y) - 1 && \text{since } (x,y) \in E(G)
\\ &= d_G(u_i,x)
\end{align}
And therefore:
\begin{align}
|\pi| &= d_G(u_j,y) + 2
\\ &\le d_G(u_i,x) + 2 && \text{since } d_G(u_j,y) \le d_G(u_i,x)
\\ &\le d_G(u_i,x) \, (1+\gamma) && \text{since } d_G(u_i,x) \ge \frac{2}{\gamma}
\end{align}
Therefore $\pi$ is a simple path between two nodes of $C_i$ that violates the geodesic convexity of $C_i$.
So $x \notin C_i$, as claimed.
\end{proof}
Now recall \RecoverSingleCluster$(G,\epsilon,\bs,i)$.
Let $R_i^{\ell}$ be the value of $R_i$ at the beginning of the $\ell$-th iteration, and let $(S_i^{\ell},S_{j_{\ell}})$ be the separator computed by \Cut\ at the $\ell$-th iteration (note: the first cluster is always $i$, but the second cluster varies with $\ell$).
\begin{lemma}
\label{lem:border}
If $(x,y) \in \edgecut(R_i^{\ell})$, then $(x,y) \in \edgecut(S_i^{\tau})$ for some $\tau \in \{1,\ldots,\ell-1\}$.
\end{lemma}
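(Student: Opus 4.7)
The plan is to trace the edge $(x,y)$ back through the history of $R_i$ and find the iteration where $y$ was first dropped. Without loss of generality assume $x \in R_i^{\ell}$ and $y \notin R_i^{\ell}$. First I would observe that $R_i^1$ is, by construction, the connected component of $s_i$ in $G$; since $x \in R_i^{\ell} \subseteq R_i^1$ and $(x,y) \in E(G)$, this forces $y \in R_i^1$. Hence the set of indices $\{t \in [1,\ell-1] : y \in R_i^t \text{ and } y \notin R_i^{t+1}\}$ is nonempty; let $\tau$ be any such index (e.g.\ the largest one).

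Next I would invoke the update rule at line~\ref{line:connect-1}, which guarantees $R_i^{\tau+1} \subseteq R_i^{\tau} \cap S_i^{\tau}$, and iterate it to obtain $R_i^{\ell} \subseteq R_i^{\tau+1} \subseteq S_i^{\tau}$. In particular $x \in S_i^{\tau}$. So it only remains to prove that $y \notin S_i^{\tau}$, which will immediately give $(x,y) \in \edgecut(S_i^{\tau})$ and finish the argument.

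The core step is this last claim, and I would prove it by contradiction. Suppose $y \in S_i^{\tau}$. Combined with $y \in R_i^{\tau}$, this puts $y$ in $R_i^{\tau} \cap S_i^{\tau}$, so $y$ is a vertex of $G[R_i^{\tau} \cap S_i^{\tau}]$. Since $(x,y) \in E(G)$ and both endpoints lie in $R_i^{\tau} \cap S_i^{\tau}$, the edge $(x,y)$ is present in $G[R_i^{\tau} \cap S_i^{\tau}]$ as well. But $x \in R_i^{\tau+1}$ means $x$ is in the connected component of $s_i$ in $G[R_i^{\tau} \cap S_i^{\tau}]$; having $y$ adjacent to $x$ in this graph forces $y$ into the same connected component, i.e.\ $y \in R_i^{\tau+1}$, contradicting the choice of $\tau$.

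The whole argument is essentially bookkeeping on how $R_i$ shrinks across iterations, and the only subtlety I anticipate is keeping the indexing consistent: specifically, justifying $R_i^{\ell} \subseteq S_i^{\tau}$ for every $\tau \le \ell-1$, which chains the inclusions $R_i^{\ell} \subseteq R_i^{\ell-1} \cap S_i^{\ell-1} \subseteq \ldots \subseteq R_i^{\tau+1} \subseteq S_i^{\tau}$. Once this monotonicity is laid out explicitly, the contradiction in the previous paragraph closes the proof.
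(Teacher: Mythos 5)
Your proof is correct and follows essentially the same approach as the paper's: both identify the transition iteration $\tau$ (where the edge first becomes a cut edge of $R_i$), observe $x \in S_i^{\tau}$ via $R_i^{\tau+1} \subseteq R_i^{\tau} \cap S_i^{\tau}$, and rule out $y \in S_i^{\tau}$ by the connected-component argument that would pull $y$ into $R_i^{\tau+1}$. Your version is a bit more explicit (first establishing $y \in R_i^1$ by the fact that $R_i^1$ is a connected component of $G$, and then tracking when $y$ drops out) while the paper defines $\tau$ as $\min\{t : (x,y)\in\edgecut(R_i^{t+1})\}$, but these are the same index and the rest of the argument coincides.
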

\begin{proof}
Let:
\begin{align}
    \tau = \min\theset{1 \leq t \leq \ell-1}{(x,y) \in \edgecut(R_i^{t+1})}
\end{align}
First, we have $x \in S_i^{\tau}$. Indeed, by construction $R_i^{\tau+1} \subseteq R_i^{\tau} \cap S_i^{\tau}$, and we know $x \in R_i^{\tau+1}$. Second, we have $y \notin S_i^{\tau}$. Suppose indeed by contradiction that $y \in S_i^{\tau}$.
Since $x \in R_i^{\tau}$, and since $(x,y) \notin \edgecut(R_i^{\tau})$, then $y \in R_i^{\tau}$.
Therefore, $y \in S_i^{\tau} \cap R_i^{\tau}$. So $y$ would be connected to $x$ in $G[S_i^{\tau} \cap R_i^{\tau}]$, and therefore we would have $y \in R_i^{\tau+1}$ as well, by construction of $R_i^{\tau+1}$ as a connected component. But then, $y \in R_i^{\tau+1}$ would imply $(x,y) \notin \edgecut(R_i^{\tau+1})$ which contradicts our hypothesis. Therefore $x \in S_i^{\tau}$ and $y \notin S_i^{\tau}$, which implies $(x,y) \in \edgecut(S_i^{\tau})$, as claimed.
\end{proof}
\begin{proof}[\textbf{of Lemma~\ref{lem:findseed}}]
The first bound on the number of queries follows by Lemma~\ref{lem:ballsep}, by observing that $Z \subseteq B(u, \epsilon(\frac{2}{\gamma}+1)) = B(u, \epsilon\frac{2+\gamma}{\gamma})$.
Let us now prove the correctness; the claim on the adapted version will follow afterwards.

First, suppose that $R_i = C_i$.
Then $\bs \cap R_i = \{s_i\}$.
Moreover, at each iteration of the loop, by Lemma~\ref{lem:ballsep} $Z_i=Z$, so $Z \setminus Z_i = \emptyset$.
Finally, no edge $(x,y) \in \edgecut(R_i)$ exists such that $d_G(u,x) \ge \frac{2}{\gamma}+1$ for all $u \in \bu$.
Indeed, if such an edge $(x,y)$ existed, by Lemma~\ref{lem:border} we would have $(x,y) \in \edgecut(S_i)$ at some previous round of \RecoverSingleCluster, which by Lemma~\ref{lem:cross_edge} implies $x \notin C_i$ --- a contradiction, since $x \in R_i=C_i$.
Hence, \FindNewSeed\ reaches the last line and returns \nil.

Suppose now that $R_i \ne C_i$.
If $s_h \in R_i$ for some $s_h \ne s_i$, then \FindNewSeed\ returns $s_h$ at line~\ref{line:ret_sh}.
Otherwise, we must have $C_h \cap R_i \ne \emptyset$ but $C_h \nsubseteq R_i$ for some $h \ne i$.
By the connectedness of $C_h$ in $G$, this implies the existence of an edge $(x,y) \in \edgecut(R_i)$ with $x \in C_h$.
If any such edge exists with $d_G(u,x) < \frac{2}{\gamma}+1$ for some $u \in \bu$, then line~\ref{line:ret_ballsep} will find such an $x$ and return it.
Otherwise, any such edge has $d_G(u,x) \ge \frac{2}{\gamma}+1$ for all $u \in \bu$.
In this case, line~\ref{line:ret_cross} will return some $x$ such that $(x,y) \in \edgecut(R_i)$ and $d_G(u,x) \ge \frac{2}{\gamma}+1$ for all $u \in \bu$, which is correct since by Lemma~\ref{lem:border} and Lemma~\ref{lem:cross_edge} we have $x \notin C_i$.

To make \FindNewSeed\ use at most $k \, \PackNum^*(\frac{\beta\gamma}{2+\gamma})$ queries over the whole execution of \RecoverSingleCluster, we keep track of $Z \setminus Z_i$ in the following way.
At each invocation, we compute the set $Z^{(u)} = \{x \in R_i \,:\, d_G(u,x) < \frac{2}{\gamma}+1\}$, where $u$ is the last node added to $\bu$.
Then we invoke \BallSeparator\ only on $Z^{(u)}$, obtaining $Z^{(u)}_i$, and we then add $Z^{(u)} \setminus Z^{(u)}_i$ to $Z \setminus Z_i$.
Finally, we remove from $Z \setminus Z_i$ all points not in $R_i$.
This sequence of operations costs $ \PackNum^*(\frac{\beta\gamma}{2+\gamma})$ \scq\ queries, and the resulting set $Z \setminus Z_i$ will be exactly the one computed by \FindNewSeed\ above.
Hence the behavior of the algorithm is unchanged, but the total number of queries is at most $k \, \PackNum^*(\frac{\beta\gamma}{2+\gamma})$.
\end{proof}

\section{Supplementary material for Section~\ref{sec:general}}
\subsection{Lemma~\ref{lem:mineps}}
\label{app:general}
\begin{lemma}
\label{lem:mineps}
Let $\clus$ be a $(\beta,\gamma)$-convex $k$-clustering of $X$ (Definition~\ref{def:dense2}), and for $i\in\{1,\ldots,k\}$ let:
\begin{align}
  \zeta_i &= \min\{ \zeta : \forall \, x,y \in C \,:\, d_{G_{X}(\zeta)}(x,y)<\infty\} \label{eq:zeta1}
\\ 
 \zeta_i^* &= \min\{ \zeta : \rho(G_{C_i}(\zeta))=1\} \label{eq:zeta2}
\end{align}
Then $\zeta_i=\zeta_i^*$, and all properties of Definition~\ref{def:dense2} hold when $\rad_i=\zeta_i=\zeta_i^*$.
\end{lemma}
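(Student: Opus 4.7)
The plan is to show the two inequalities $\zeta_i \le \zeta_i^*$ and $\zeta_i^* \le \zeta_i$ separately, and then verify the three properties of Definition~\ref{def:dense2} at the common value $\rad_i := \zeta_i = \zeta_i^*$. Throughout, I will use the fact that, by hypothesis, the clustering is $(\beta,\gamma)$-convex, so there exists at least \emph{some} $\rad_i^\circ > 0$ for which properties (1)--(3) hold; in particular, $G_X(\rad_i^\circ)[C_i]$ is connected, which directly gives $\zeta_i^* \le \rad_i^\circ$.

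The easy direction is $\zeta_i \le \zeta_i^*$: whenever $G_{C_i}(\zeta)$ is connected, every pair $x,y\in C_i$ is connected using only edges of $\scG$ of weight at most $\zeta$ and endpoints in $C_i$, which is a fortiori a path in $G_X(\zeta)$. So any $\zeta$ realizing the minimum in~\eqref{eq:zeta2} also satisfies the condition in~\eqref{eq:zeta1}.

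The nontrivial direction is $\zeta_i^* \le \zeta_i$, and this is where geodesic convexity with margin is doing the work. Pick $\zeta = \zeta_i$ and fix $x,y \in C_i$. By definition of $\zeta_i$ and the easy observation $\zeta_i \le \zeta_i^* \le \rad_i^\circ$, we have $\zeta \le \rad_i^\circ$, so property~(3) of Definition~\ref{def:dense2} applies to the graph $G_X(\zeta)$: since $d_{G_X(\zeta)}(x,y) < \infty$, any simple path between $x$ and $y$ of length at most $(1+\gamma)\, d_{G_X(\zeta)}(x,y)$ lies entirely in $C_i$. Taking in particular a shortest $xy$-path in $G_X(\zeta)$ (which trivially satisfies this length bound), all its vertices are in $C_i$, hence this path is actually a path in $G_{C_i}(\zeta)$. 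Therefore $G_{C_i}(\zeta)$ is connected, proving $\zeta_i^* \le \zeta_i$. The combination yields $\zeta_i=\zeta_i^*$, and this common value is $\le \rad_i^\circ$.

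It then remains to check that properties (1)--(3) of Definition~\ref{def:dense2} hold with $\rad_i = \zeta_i=\zeta_i^*$. Property (1) is immediate from the definition of $\zeta_i^*$. Property (2) follows by monotonicity: the local metric margin condition $d(x,y) > \beta\rad_i^\circ$ is strictly stronger than $d(x,y)>\beta\zeta_i$ since $\zeta_i\le \rad_i^\circ$. Property (3) is quantified over all $\epsilon \le \rad_i$, and since $\rad_i=\zeta_i\le \rad_i^\circ$, any such $\epsilon$ also satisfies $\epsilon\le\rad_i^\circ$, so the property holds by hypothesis on $\rad_i^\circ$. The one subtle point to flag is the threshold used when invoking property (3) in the proof of $\zeta_i^* \le \zeta_i$: it must be applied at radius $\zeta_i$, not at the original witness $\rad_i^\circ$, and this is legitimate precisely because Definition~\ref{def:dense2}(3) is stated hereditarily for every $\epsilon \le \rad_i^\circ$ rather than only at $\epsilon=\rad_i^\circ$. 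This hereditary formulation is exactly what makes the lemma go through and justifies the ``for technical reasons'' comment in Section~\ref{sec:general}.
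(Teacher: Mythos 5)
Your proof is correct and follows essentially the same route as the paper's: show $\zeta_i \le \zeta_i^*$ by noting that the candidate set in~\eqref{eq:zeta1} is a superset of that in~\eqref{eq:zeta2}; show $\zeta_i^* \le \zeta_i$ by applying the hereditary geodesic-convexity property at radius $\zeta_i$ to conclude shortest paths between points of $C_i$ stay inside $C_i$, hence $G_{C_i}(\zeta_i)$ is connected; then verify properties (1)--(3) at the common value by monotonicity in the radius. If anything, your write-up is slightly more careful than the paper's in explicitly recording the chain $\zeta_i \le \zeta_i^* \le \rad_i^\circ$ before invoking property~(3) at $\zeta_i$, which is the small step the paper leaves implicit.
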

\begin{proof}
First, observe that $\zeta_i \le \zeta_i^*$, since $\rho(G_{C_i}(\zeta))=1$ implies $\forall \, x,y \in C \,:\, d_{G_{X}(\zeta)}(x,y)<\infty$, and thus the minimum in~\eqref{eq:zeta1} is taken over a superset of that of~\eqref{eq:zeta2}.
Now, consider the graph $G_{X}(\zeta_i)$.
For any two nodes $x,y \in C_i$, since $\zeta_i \le \zeta_i^*$ and $d_{G_{X}(\zeta_i)}(x,y)<\infty$, the geodesic convexity implies that any shortest path in $G_{X}(\zeta_i)$ between $x$ and $y$ lies in $C_i$.
But this means that the subgraph induced by $C_i$ in $G_{X}(\zeta_i)$ is connected. By definition of $\zeta_i^*$ this implies that $\zeta_i^* \le \zeta_i$.
We conclude that $\zeta_i = \zeta_i^*$.

For the second claim, we consider each property in turn when $\rad_i=\zeta_i^*$.
The connectedness of $G_{C_i}(\zeta_i^*)$ holds by definition of $\zeta_i^*$.
Now let $\zeta$ be any value such that the three properties hold when $\rad_i=\zeta$.
Then $\zeta \ge \zeta_i^*$, because for $\rad_i < \zeta_i^*$ the connectedness fails, by definition of $\zeta_i^*$.
This implies that the local metric margin and the geodesic convexity with margin hold for $\rad_i=\zeta_i^*$, since they hold for $\rad_i = \zeta \ge \zeta_i^*$.
\end{proof}

\subsection{Pseudo-code of \ReduceRecover\ and proof of Theorem~\ref{th:diff_rad}}
\diffradtheorem*
To prove the theorem, we need a technical lemma.
It guarantees that, if we take the connected component of the cluster with smallest radius $\rad^*$ in $G_X(\rad^*)$, then the clustering induced by $\clus$ on that subgraph is $(\beta,\gamma)$-convex (Definition~\ref{def:dense}) with radius $\rad^*$.

\begin{lemma}
\label{lem:still_dense}
Let $\clus$ be a $(\beta,\gamma)$-convex $\kappa$-clustering of $X$ (Definition~\ref{def:dense2}).
Let $\rad^*$ be the smallest radius of any cluster of $\clus$, and let $G^*$ be the connected component of $G_X(\rad^*)$ that contains a cluster with radius $\rad^*$.
Finally, let $X^*=V(G^*)$, and let $\clus^* = \big(C_1 \cap X^*, \ldots, C_{\kappa} \cap X^*\big)$.
Then, $\clus^*$ is a $(\beta,\gamma)$-convex clustering of $X^*$ with radius $\rad^*$ (Definition~\ref{def:dense}).
\end{lemma}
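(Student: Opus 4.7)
The plan is to verify the three properties of Definition~\ref{def:dense} (with common radius $\rad^*$) for the induced clustering $\clus^*$ on $X^*$, cluster by cluster. Throughout, let $C^*$ denote a cluster of $\clus$ achieving the minimum radius $\rad^*$; by the connectedness requirement in Definition~\ref{def:dense2} applied to $C^*$ with radius $\rad^*$, we have $C^* \subseteq X^*$, so that $X^*$ is nonempty and contains at least one full cluster. Fix any cluster $C_i$ with $C_i \cap X^* \ne \emptyset$, and note that $\rad_i \ge \rad^*$ by the minimality of $\rad^*$.

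First, I would handle the \emph{geodesic convexity with margin}, since it is the cleanest and will feed into the other two properties. Take $x,y \in C_i \cap X^*$. Since $X^*$ is a single connected component of $G_X(\rad^*)$, we have $d_{G_{X^*}(\rad^*)}(x,y) = d_{G_X(\rad^*)}(x,y) < \infty$. As $\rad^* \le \rad_i$, the hereditary clause of property (3) in Definition~\ref{def:dense2} applies at level $\rad^*$: every simple $x$--$y$ path in $G_X(\rad^*)$ of length at most $(1+\gamma)d_{G_X(\rad^*)}(x,y)$ lies in $C_i$. Any such path in $G_{X^*}(\rad^*)$ is in particular such a path in $G_X(\rad^*)$, and its vertices then lie in $C_i \cap X^*$ (they are in $C_i$ by the above, and in $X^*$ by connectivity of the path together with $x \in X^*$).

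Next I would derive \emph{connectedness} as a direct corollary: given $x,y \in C_i \cap X^*$, apply the geodesic convexity with margin (with $\gamma \ge 0$) to any shortest $x$--$y$ path in $G_{X^*}(\rad^*)$; this path witnesses that $x$ and $y$ lie in the same connected component of $G_{X^*}(\rad^*)[C_i \cap X^*]$. For the \emph{local metric margin}, take $x \in C_i \cap X^*$ and $y \notin C_i \cap X^*$, and split into two cases. If $y \notin X^*$, then $(x,y)$ cannot be an edge of $G_X(\rad^*)$, so $d(x,y) > \rad^* \ge \beta\rad^*$. If $y \in X^* \setminus C_i$, then $y \notin C_i$, so property (2) of Definition~\ref{def:dense2} gives $d(x,y) > \beta\rad_i \ge \beta\rad^*$.

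The only subtle point, and the part I would take most care with, is the reduction to the level $\rad^*$ in the proof of geodesic convexity; everything rests on the fact that Definition~\ref{def:dense2}(3) is stated for \emph{every} $\epsilon \le \rad_i$ (the hereditary strengthening mentioned in the paper), which is precisely what allows one to ``shrink'' from $\rad_i$ to $\rad^*$ without losing convexity. Once that observation is in place, the three properties of Definition~\ref{def:dense} for $\clus^*$ at the common radius $\rad^*$ follow, completing the proof.
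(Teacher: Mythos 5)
Your proof is correct and takes essentially the same route as the paper: verify the three properties of Definition~\ref{def:dense} at level $\rad^*$ by invoking the hereditary clause of Definition~\ref{def:dense2}(3) at $\epsilon=\rad^*\le\rad_i$, together with the fact that $G_{X^*}(\rad^*)=G^*$ since $X^*$ is a maximal connected component of $G_X(\rad^*)$. Your reorganization --- proving geodesic convexity first and then reading off connectedness from the shortest path --- is a minor cleanup of the same argument (the paper's connectedness proof also leans on the original geodesic convexity), and your extra case $y\notin X^*$ in the local metric margin is harmless but unnecessary, since Definition~\ref{def:dense}(2) only quantifies over $x,y$ in the ground set $X^*$.

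One small omission: the lemma claims that $\clus^*$ has radius \emph{exactly} $\rad^*$, and by the paper's convention the radius is the \emph{smallest} $\epsilon$ for which the three properties hold. You show the properties hold at $\rad^*$, but you do not observe that they fail for every $\epsilon<\rad^*$. The paper handles this in one line: the cluster $C^*\subseteq X^*$ realizing the minimum has $\rad^*=\min\{\zeta:\rho(G_{C^*}(\zeta))=1\}$ (Lemma~\ref{lem:mineps}), so $C^*$ becomes disconnected in $G_{X^*}(\epsilon)$ for any $\epsilon<\rad^*$, making the connectivity property fail. Adding that sentence closes the argument.
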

\begin{proof}
We verify that the three properties of Definition~\ref{def:dense} hold with radius $\rad^*$.
This implies that $\rad^*$ is also the smallest such value, since the corresponding cluster becomes disconnected in $G^*(\rad)$ for any $\rad < \rad^*$ --- and thus $\rad^*$ is indeed the radius of the clustering.
We define $C_i^* = C_i \cap X^*$ for all $i \in [\kappa]$.
Note that the graph on which we verify the properties is $G_{X^*}(\rad^*)$, which is precisely $G^*$ by the maximality of $X^*$ as a connected component.
Hence, from now on we write $G^*$ for $G_{X^*}(\rad^*)$.

\vspace*{5pt}
\noindent\emph{Connectivity:} the subgraph induced by $C_i^*$ in $G^*$ is connected.

\noindent\emph{Proof.} 
Consider two points $x,y \in C_i^*$; obviously $x,y \in C_i$. Since $G^*$ is connected, $d_{G^*}(x,y) < \infty$.
Moreover, $d_{G^*}(x,y) = d_{G_X(\rad^*)}(x,y)$, since by construction $G^*$ is the connected component of $G_X(\rad^*)$ containing $x$ and $y$.
Thus, $d_{G_X(\rad^*)}(x,y) < \infty$.
Moreover, $\rad^* \le \rad_i$ by assumption.
Thus, $x,y \in C_i$, and $d_{G_X(\rad^*)}(x,y) < \infty$ with $\rad^* \le \rad_i$.
Then, by the geodesic convexity of $\clus$ on $X$ (Definition~\ref{def:dense2}), any shortest path $\pi$ between $x$ and $y$ in $G_X(\rad^*)$ lies entirely in $C_i$.
Since again $G^*$ is the connected component of $G_X(\rad^*)$ containing $x,y$, then $\pi$ must lie in $X^*$.
We conclude that $\pi \in C_i \cap X^* = C_i^*$.
This holds for any choice of $x,y$.
Therefore, $G^*[C_i^*]$ is connected.

\vspace*{5pt}
\noindent\emph{Local metric margin:} for all $x,y \in X^*$, if $x \in C_i^*$ and $y \notin C_i^*$, then $d(x,y) > \beta\rad^*$.

\noindent\emph{Proof.} 
Since $x \in C_i^*$ and $y \notin C_i^*$, then $x \in C_i$ and $y \notin C_i$.
By the local metric margin of $\clus$, and since $\rad_i \ge \rad^*$, we have $d(x,y) > \beta \rad_i \ge \beta \rad^*$.

\vspace*{5pt}
\noindent\emph{Geodesic convexity with margin:} if $x,y \in C_i^*$, then in $G^*$ any simple path between $x$ and $y$ of length at most $(1+\gamma) d_{G^*}(x,y)$ lies entirely in $C_i^*$.

\noindent\emph{Proof.}
By the same argument of connectivity, we invoke the geodesic convexity (Definition~\ref{def:dense2}) for $x,y \in C_i$, for $\epsilon = \rad^* \le \rad_i$.
We obtain that $G_X(\rad^*)$ contains no simple path of length at most $(1+\gamma)d_{G_X(\rad^*)}(x,y)$ between $x$ and $y$ that leaves $C_i$.
This implies that no such path exists in $G^*$ as well, since $G^* \subseteq G_X(\rad^*)$.
Moreover, since $C_i^*=C_i \cap X^*$, no such path exists in $G^*$ that leaves $C_i^*$ (otherwise it would leave $C_i$).
Recalling that $(1+\gamma)d_{G_X(\rad^*)}(x,y)=(1+\gamma)d_{G^*}(x,y)$, we deduce that in $G^*$ there is no path of length at most $(1+\gamma)d_{G^*}(x,y)$ between $x$ and $y$ that leaves $C_i^*$.
This is the geodesic convexity of $C_i^*$ in $G^*$ (Definition~\ref{def:dense}).
\end{proof}
We can now present the algorithm for recovering $(\beta,\gamma)$-convex clusters with different radii.
\begin{algorithm2e}[ht!]
\caption{\ReduceRecover($X,\beps,\bs$)}
\label{algo:reduce_recover}
assume $\epsilon_1 \le \ldots \le \epsilon_k$\;
\textbf{for} {$i=1,\ldots,k$} \textbf{do}\; {\Indp
  $G^* :=$ the connected component of $s_i$ in $G_{X}(\epsilon_i)$, \, $X^* := V(G^*)$ \label{line:Gprime}\;
  \lFor{$j=i+1,\ldots,k$}{$s^*_j := \seed(X^*, j)$}
  $\bs^* := (s_i,s^*_{i+1},\ldots,s^*_k)$\;
  $\hat{C_i} := \RecoverSingleCluster(G^*,\rad_i,\bs^*,i)$\;
  output $\hat{C_i}$\;
  $X := X \setminus \hat{C_i}$\; 
\Indm}
\end{algorithm2e}

\begin{proof}[\textbf{of Theorem~\ref{th:diff_rad}}]
For each $i=1,\ldots,k$ let $X_i$ be the value of $X$ at the beginning of the $i$-th iteration of \ReduceRecover$(X,\beps,\bs)$.
We show that the following three invariants holds:
\begin{enumerate}\itemsep2pt
\item $X_i = C_i \cup \ldots \cup C_k$
\item $(C_i, \ldots, C_k)$ is a $(\beta,\gamma)$-convex clustering of $X_i$ (Definition~\ref{def:dense2})
\item if $i > 1$ then the algorithm has output $C_1,\ldots,C_{i-1}$ so far
\end{enumerate}
When $i=1$ we have $X_i=X$, and all invariants clearly hold.
Now assume that they hold at the beginning of the $i$-th iteration for some $i \ge 1$.
We will show that the algorithm sets $\hat{C_i}=C_i$.
This will imply that the three invariants hold at iteration $i+1$ as well.
For the first and third invariant, this is trivial.
For the second, simply observe that deleting a cluster never invalidates the three properties of Definition~\ref{def:dense2}; thus, $(C_{i+1}, \ldots, C_k)$ will be a $(\beta,\gamma)$-convex clustering of $X_{i+1} = C_{i+1} \cup \ldots \cup C_k$.

Thus, we prove that $\hat{C_i}=C_i$.
To this end, consider the subgraph $G^*$ and its node set $V^*$ computed at line~\ref{line:Gprime}.
Let $\clus_i=(C_i, \ldots, C_k)$, and let $\clus_i^*=(C_i \cap X^*, \ldots, C_k \cap X^*)$.
Since $\rad_i$ is the smallest radius of all clusters in $\clus_i$, then by Lemma~\ref{lem:still_dense} with $\rad^*=\rad_i$, $\clus_i^*$ is a $(\beta,\gamma)$-convex clustering for $X_i$ with radius $\rad_i$ (Definition~\ref{def:dense}).
Furthermore, by construction $\bs^*$ contains one seed for each nonempty cluster in $\clus^*$.
Therefore, by Lemma~\ref{lem:single_recovery}, $\RecoverSingleCluster(G^*,\epsilon,\bs^*,i)$ returns $C_i$, so $\hat{C_i}=C_i$.

The bound on the number of queries is straightforward.
\end{proof}

\section{Supplementary material for Section~\ref{sec:learn_params}}
\label{app:learn_params}
\subsection{\GetEpsilons\ and proof of Theorem~\ref{thm:epsilons}}
\epsitheorem*
We start with a simple routine for testing the connectedness of a cluster using \seed\ queries. 

\begin{algorithm2e}[ht!]
\label{algo:isconn}
\caption{\IsConnected($G,i$)}
 $u := \seed(V(G),i)$  \;
 $U := $ the connected component of $u$ in $G$\;
 \Return $(\seed(V(G) \setminus U,i) = \nil)$
\end{algorithm2e}
\begin{claim}
\label{claim:isconnected}
If $V(G) \cap C_i \ne \emptyset$, then \IsConnected$(G,i)$ uses two \seed\ queries and returns \textsc{true} if and only if $d_G(x,y) < \infty$ for all $x,y \in C_i$.
\end{claim}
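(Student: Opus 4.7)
The statement is a direct correctness-and-cost argument about the three-line algorithm, so the plan is to verify each half of the biconditional using only the specifications of the \seed\ query and the definition of a connected component. I will implicitly work under the natural reading that $C_i \subseteq V(G)$, so that $d_G(x,y)$ is defined for $x,y \in C_i$.

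\textbf{Query count.} The algorithm issues exactly one \seed\ query on line~1 and exactly one on line~3, for a total of two, independent of the answers.

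\textbf{Correctness, forward direction.} First, since $V(G) \cap C_i \neq \emptyset$, the first query returns a genuine point $u \in V(G) \cap C_i$ rather than \nil. Then $U$ is well-defined as the connected component of $G$ containing $u$, and by the definition of a connected component we have $d_G(u,v) < \infty$ for every $v \in U$ and $d_G(u,v) = \infty$ for every $v \in V(G) \setminus U$. Now suppose $d_G(x,y) < \infty$ for all $x,y \in C_i$. In particular, $d_G(u,x) < \infty$ for every $x \in C_i$, so $C_i \subseteq U$, and hence $(V(G) \setminus U) \cap C_i = \emptyset$. By the specification of \seed, the second query returns \nil, and the algorithm returns \textsc{true}.

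\textbf{Correctness, reverse direction.} Conversely, suppose the algorithm returns \textsc{true}. Then $\seed(V(G) \setminus U, i) = \nil$, which by the specification of \seed\ means $(V(G) \setminus U) \cap C_i = \emptyset$, i.e., $C_i \subseteq U$. Since all points of $U$ lie in a single connected component of $G$, we have $d_G(x,y) < \infty$ for every pair $x,y \in C_i$, as required.

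\textbf{Main obstacle.} There is essentially no obstacle here: the argument reduces to unfolding the definition of a connected component together with the semantics of the \seed\ oracle. The only subtlety worth flagging is the implicit assumption that $C_i \subseteq V(G)$ so that $d_G$ restricted to $C_i$ is meaningful; without that assumption one would instead state the conclusion in terms of $C_i \cap V(G)$, and the same argument would apply verbatim.
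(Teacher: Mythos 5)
Your proof is correct and is precisely the routine verification the paper omits: Claim~\ref{claim:isconnected} is stated without proof, presumably because the argument is exactly this unfolding of the \seed\ specification and the definition of a connected component. Your remark that the reverse direction implicitly requires $C_i \subseteq V(G)$ (or the convention $d_G(x,y)=\infty$ when $x\notin V(G)$) is a legitimate subtlety; in the paper's only use of the claim, inside \GetEpsilon, the graph is $T(\epsilon)$ on the full vertex set $X$, so the assumption holds automatically.
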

Let $T$ be a minimum spanning tree of the weighted graph $\scG$.
For any $\epsilon > 0$, let $T(\epsilon)$ be the forest obtained by keeping only the edges $(x,y)$ of $T$ such that $d(x,y) \le \epsilon$.
Recall the following basic fact: 
\begin{claim}
\label{claim:mst}
The connected components of $T(\epsilon)$ are the connected components of $G_X(\epsilon)$.
\end{claim}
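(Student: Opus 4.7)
The plan is to show the two inclusions separately. The easy direction is that connectivity in $T(\epsilon)$ implies connectivity in $G_X(\epsilon)$: this follows immediately from the fact that $T(\epsilon)$ is a subgraph of $G_X(\epsilon)$ (every edge of $T$ is an edge of $\scG$, and we keep exactly those edges whose weight is at most $\epsilon$, which are therefore edges of $G_X(\epsilon)$).

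For the nontrivial direction, I would argue as follows. Fix two vertices $x,y$ that lie in the same connected component of $G_X(\epsilon)$, so that there is a path $P$ in $\scG$ from $x$ to $y$ whose every edge has weight at most $\epsilon$. I want to show that the unique path $\pi$ from $x$ to $y$ in $T$ has every edge of weight at most $\epsilon$; this will establish that $\pi \subseteq T(\epsilon)$ and hence that $x,y$ lie in the same connected component of $T(\epsilon)$.

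The key step is an application of the MST cut property. For any edge $(u,v)$ on $\pi$, removing it from $T$ splits $T$ into two subtrees, partitioning $X$ into sets $A$ and $B$ with (say) $x \in A$ and $y \in B$. The path $P$ begins in $A$ and ends in $B$, so it must contain at least one edge $(u',v')$ with $u' \in A$ and $v' \in B$; by assumption $d(u',v') \le \epsilon$. Since $(u,v)$ is an edge of the MST $T$ crossing the cut $(A,B)$, the cut property forces $d(u,v) \le d(u',v') \le \epsilon$. Hence $(u,v) \in T(\epsilon)$, as desired.

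The main obstacle, if any, is simply to invoke the cut property cleanly; everything else is routine. (Equivalently, one could phrase the argument through the cycle/exchange property: if some edge of $\pi$ had weight greater than $\epsilon$, swapping it for the crossing edge of $P$ would yield a spanning tree strictly lighter than $T$, contradicting its minimality.) Combining the two inclusions, $x$ and $y$ lie in the same component of $T(\epsilon)$ if and only if they lie in the same component of $G_X(\epsilon)$, which is what we wanted.
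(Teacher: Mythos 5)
Your proof is correct; the cut/exchange argument is the standard way to establish this well-known fact about minimum spanning trees. The paper itself invokes Claim~\ref{claim:mst} as a ``basic fact'' without proof, so there is no alternative argument in the paper to compare against.
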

As a consequence, we have:
\begin{claim}
$\IsConnected(T(\epsilon), i)=\IsConnected(G_X(\epsilon), i)$, for any $i \in [k]$ and any $\epsilon > 0$.
\end{claim}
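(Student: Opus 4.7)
The plan is to derive the claim as an immediate consequence of Claim~\ref{claim:mst}, by observing that the output of \IsConnected$(G,i)$ depends on $G$ only through the partition of $V(G)$ into connected components. Concretely, inspecting the pseudocode of \IsConnected: after picking a seed $u \in C_i \cap V(G)$ and setting $U$ to the connected component of $G$ containing $u$, the algorithm returns \textsc{true} if and only if $\seed(V(G)\setminus U, i)=\nil$, i.e., if and only if $C_i \cap V(G) \subseteq U$. Equivalently, \IsConnected$(G,i)$ returns \textsc{true} precisely when all points of $C_i \cap V(G)$ lie in a single connected component of $G$.

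First I would note that both $T(\epsilon)$ and $G_X(\epsilon)$ have vertex set $X$, so the vertex sets agree. Next, by Claim~\ref{claim:mst}, the two graphs induce the same partition of $X$ into connected components. Therefore, for any seed $u \in C_i \cap X$, the connected component of $u$ in $T(\epsilon)$ equals the connected component of $u$ in $G_X(\epsilon)$. Applying the characterization above, the condition ``$C_i \subseteq U$'' holds in $T(\epsilon)$ if and only if it holds in $G_X(\epsilon)$, giving the claimed equality of outputs.

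The only subtle point is that the \seed\ oracle is allowed to return any point of $C_i \cap X$, so the first \seed\ call in the two invocations may yield different seeds $u$ and $u'$. This does not affect the argument: for any choice of seed, the property being tested (namely, that all of $C_i$ lies inside one connected component) is an intrinsic property of the partition, not of the chosen representative. Thus both invocations return the same boolean, and there is no genuine obstacle; the claim is essentially a restatement of Claim~\ref{claim:mst} filtered through the semantics of \IsConnected.
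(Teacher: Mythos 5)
Your proposal is correct and matches the paper's intent: the paper states this claim with no proof, introducing it with ``As a consequence, we have:'' immediately after Claim~\ref{claim:mst}, precisely because the output of \IsConnected\ depends on the input graph only through its vertex set and its partition into connected components. You have simply spelled out this implicit reasoning, including the worthwhile observation that the possible non-determinism of the \seed\ oracle is harmless since the tested property is invariant under the choice of representative.
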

We introduce the algorithm for learning the radius of a single cluster.
\begin{algorithm2e}
\label{algo:epsilon}
\caption{\GetEpsilon($T, i$)}
$\pmb{w}:=(w_0,w_1,\ldots,w_{\ell})$, the distinct edge weights of $T$ in increasing order, with $w_0=0$\;
$lo := 0$, \quad $hi := \ell$\;
\textbf{while} $w_{lo} < w_{hi}$ \textbf{do}\;
  \Indp
  $mid := \lfloor \frac{lo+hi}{2} \rfloor$\;
  \textbf{if} {\IsConnected$(T(w_{mid}), i)$} \label{line:test_neps} \textbf{then} $hi:=mid$ \textbf{else} $lo:=mid+1$\;
  \Indm
\Return $w_{hi}$\;
\end{algorithm2e}
\begin{lemma}
If $T$ is a MST of $\scG=(X,\scE,d)$, then \GetEpsilon$(T, i)$ returns $\rad_i$ in time $\scO(n \log n)$ using $\scO(\log n)$ \seed\ queries.
\end{lemma}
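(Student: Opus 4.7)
The overall strategy is to show that the condition $\IsConnected(T(\epsilon), i)$, viewed as a function of $\epsilon$, is a monotone predicate whose threshold is exactly $\rad_i$, and that this threshold always coincides with one of the distinct edge weights $w_0,\ldots,w_\ell$ of $T$. Once these structural facts are in place, the result is just the standard analysis of binary search.

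First I would establish the characterization of $\rad_i$ in terms of $T$. By Lemma~\ref{lem:mineps}, $\rad_i$ is the smallest $\epsilon$ such that every pair of points in $C_i$ is connected in $G_X(\epsilon)$. By Claim~\ref{claim:mst}, the connected components of $G_X(\epsilon)$ and $T(\epsilon)$ coincide, so $\rad_i$ is the smallest $\epsilon$ such that $C_i$ lies in a single connected component of $T(\epsilon)$. Since $T(\epsilon)$ changes only when $\epsilon$ crosses an edge weight of $T$, the function $\epsilon \mapsto \IsConnected(T(\epsilon),i)$ is a step function whose jumps occur at weights in $\{w_1,\ldots,w_\ell\}$; consequently $\rad_i \in \{w_1,\ldots,w_\ell\}$ (or $\rad_i = w_0 = 0$ if $|C_i|=1$). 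Monotonicity is immediate: adding edges to a forest cannot disconnect a set, so if $\IsConnected(T(w_{mid}),i)$ is true then it remains true for every larger weight.

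Next I would verify the loop invariant of the binary search: at the beginning of each iteration, $\rad_i \in \{w_{lo},\ldots,w_{hi}\}$, with $\IsConnected(T(w_{hi}),i)$ holding whenever $w_{hi}$ has been tested. Initially this is trivial. When $\IsConnected(T(w_{mid}),i)$ holds, monotonicity implies $\rad_i \le w_{mid}$, so updating $hi := mid$ preserves the invariant; when it fails, $\rad_i > w_{mid}$, so $\rad_i \ge w_{mid+1}$ and updating $lo := mid+1$ preserves it. Correctness of \IsConnected\ itself is Claim~\ref{claim:isconnected}. When the loop terminates, $w_{lo}=w_{hi}$ and the invariant forces $\rad_i = w_{hi}$, which is the returned value.

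Finally, for the complexity bounds: the list $\pmb{w}$ has $\ell \le n-1$ entries and can be constructed by sorting the edges of $T$ in time $\scO(n\log n)$. The binary search performs $\scO(\log \ell)=\scO(\log n)$ iterations; each iteration calls \IsConnected\ once, which by Claim~\ref{claim:isconnected} uses exactly two \seed\ queries, giving the $\scO(\log n)$ query bound. The dominant per-iteration work is computing the connected component of the chosen seed in $T(w_{mid})$, which can be done in $\scO(n)$ by a BFS/DFS over the edges of $T$ of weight at most $w_{mid}$. Summed over $\scO(\log n)$ iterations and combined with the initial sorting, the total running time is $\scO(n\log n)$. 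The only mildly subtle step is the first one, ensuring that $\rad_i$ is actually realized at an edge weight of $T$ rather than strictly between two consecutive weights; this rests squarely on Claim~\ref{claim:mst} and Lemma~\ref{lem:mineps}, so once those are in hand, the rest is routine.
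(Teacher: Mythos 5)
Your proposal is correct and matches the paper's proof in structure and substance: both reduce the problem via Lemma~\ref{lem:mineps} and Claim~\ref{claim:mst} to a binary search over the distinct edge weights of $T$, verify the standard bracketing invariant, and account for the cost via $\scO(n)$ per connectivity test. Your version is slightly more explicit about monotonicity of the predicate and tighter about the size of $\pmb{w}$ (you note $\ell \le n-1$, whereas the paper loosely bounds it by $\scO(n^2)$, which still yields $\scO(\log n)$ iterations), but these are presentational refinements rather than a different argument.
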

\begin{proof}
It is straightforward to see that the algorithm stops within $\scO(\log n)$ iterations, since $\pmb{w}$ has at most $m = \scO(n^2)$ entries and $(hi-lo)$ decreases by a constant factor at each iteration.
For the running time, since $T$ has $\scO(n)$ edges, every call to \IsConnected$(T(w_{mid}), i)$ takes time $\scO(n)$.
This gives the time bound of $\scO(n \log n)$.

Now we show that the algorithm returns $\rad_i$.
By Lemma~\ref{lem:mineps}, this is equivalent to prove that the algorithm returns $w^* = \min\{ w \in \pmb{w} : C_i \text{ is connected in } G_{X}(w) \}$.
Consider the beginning of a generic iteration, when the test $w_{lo} < w_{hi}$ is performed.
We claim that $w^* \le w_{hi}$.
To this end, observe that $C_i$ is connected in $G_X(w_{hi})$.
This is true since it holds at the beginning of the first iteration, when $w_{hi} = w_{\ell}$, and because at each iteration $hi$ is set to $mid$ only if \IsConnected$(T(w_{mid}), i)=\textsc{true}$.
We now claim that $w^* \ge w_{lo}$.
This holds since $w^* \ge w_{0} = 0$ at the first iteration, and because at each iteration $lo$ is set to $mid+1$ only if \IsConnected$(T(w_{mid}), i)=\textsc{false}$.
Therefore, when the algorithm stops, we have $w_{lo}=w_{hi}=w^*$, as claimed.
\end{proof}
We conclude with the algorithm to learn all the radii.
We denote by $\MST(m)$ the time needed for computing the MST of a connected graph (note that we can always assume $\scG$ is connected, otherwise we can just compute its connected components in time $\scO(m)$ and use each one of them in turn).
It is known that $\MST(m)=O(m\, \alpha(m,m))$, where $\alpha(m,m)$ is the classic functional inverse of Ackermann's function~\citep{ChazelleMST}.
Lemma~\ref{lem:epsilons} below follows immediately from these observations.
\begin{algorithm2e}
\label{algo:epsilons}
\caption{\GetEpsilons($\scG=(X,\scE,d), k$)}
$T:= \MST(\scG)$\;
\textbf{for} {$i=1,\ldots,k$} \textbf{do}\;
  \Indp
  $\hat{\rad_i} := \GetEpsilon(T,i)$\;
\Indm
\Return $\hat{\rad_1},\ldots,\hat{\rad_k}$\;
\end{algorithm2e}
\begin{lemma}
\label{lem:epsilons}
\GetEpsilons$(\scG, k)$ returns the radii $\epsilon_1,\ldots,\epsilon_k$ in time $\scO(m \, \alpha(m,n) + k n \log n)$ using $\scO(k \log n)$ \seed\ queries.
\end{lemma}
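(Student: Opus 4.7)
The lemma is essentially an assembly of the previous results: it combines the guarantees of \GetEpsilon\ (just established) with the complexity of computing a minimum spanning tree on $\scG$. My plan is to verify each of the three conclusions (correctness, running time, \seed-query cost) in turn, with very little work beyond invoking the previous lemma.

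First, I would argue correctness. Without loss of generality, I would assume $\scG$ is connected: otherwise one computes the connected components in time $\scO(m)$ and applies the argument separately inside the component containing each cluster, which by the local metric margin condition is well-defined (each $C_i$ lies inside a single connected component of $\scG$). On the connected graph $\scG$, \MST\ produces a valid minimum spanning tree $T$. Then, by the previous lemma, each invocation $\GetEpsilon(T,i)$ returns $\rad_i$. Hence $\GetEpsilons$ outputs the vector $(\rad_1,\ldots,\rad_k)$.

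Next, for the running time, the call to \MST\ takes time $\scO(m\,\alpha(m,n))$ by the result of \citet{ChazelleMST}. The $k$ calls to $\GetEpsilon(T,i)$ each run in time $\scO(n\log n)$ by the previous lemma, contributing $\scO(kn\log n)$ overall. Summing gives $\scO(m\,\alpha(m,n) + kn\log n)$, as claimed.

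Finally, for the \seed-query cost, \MST\ makes no queries at all (it only uses the weights of $\scG$), and each call to $\GetEpsilon$ makes $\scO(\log n)$ \seed\ queries. Summing across the $k$ clusters yields $\scO(k\log n)$ \seed\ queries in total. There is no genuine obstacle here; the only point requiring a mild remark is the reduction to the connected case, which is handled by computing connected components of $\scG$ in linear time.
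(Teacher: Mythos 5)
Your proof is correct and follows exactly the same route as the paper: compute an MST once in time $\scO(m\,\alpha(m,n))$, then invoke \GetEpsilon\ $k$ times, each costing $\scO(n\log n)$ time and $\scO(\log n)$ \seed\ queries. The only small slip is attributing the fact that each $C_i$ lies in a single connected component of $\scG$ to the local metric margin --- it actually follows from the \emph{connectedness} property (each $C_i$ is connected in $G_X(\epsilon_i)\subseteq\scG$) --- but this does not affect the argument.
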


\subsection{Proof of Theorem~\ref{lem:guessing}}
\guessingtheorem*
\begin{proof}
Suppose first $\beta$ is unknown and $\gamma$ is known.
Recall that $\beta \le 1$.
We make a succession of guesses $\hat{\beta} = 2^{-j}$ for $j=0,1,\dots$.
For each guess, we run our algorithm with $\beta = \hat{\beta}$ and look at the output clustering $\hat{\clus}$.
Clearly, if $\clus$ is $(\beta,\gamma)$-convex, then $\clus$ is $(\hat{\beta},\gamma)$-convex for any $\hat{\beta} \le \beta$ as well.
Thus, as soon as $\hat{\beta} \le \beta$, our algorithm will return $\hat{\clus} = \clus$.
So, after each run, we need only to check whether $\hat{\clus} = \clus$, and stop in the affirmative case.

To check whether $\hat{\clus} = \clus$, we do as follows.
First, we check if $|\hat{\clus}| \ne |\clus|$.
If this is the case, then the only possibility for $\hat{\clus} \ne \clus$ is that some cluster $C_i$ intersects both $\hat{C}$ and $X \setminus \hat{C}$, for some cluster $\hat{C} \in \hat{\clus}$.
Therefore, we take each cluster $\hat{C} \in \hat{\clus}$ in turn.
We then take any node $x \in \hat{C}$, and we learn the label of $i$ with $\scO(k)$ \scq\ queries.
Then, we invoke $\seed(X \setminus \hat{C}, i)$.
If we get a node in return, we know that $C_i$ has points in $\hat{C}$ and $X \setminus \hat{C}$, and therefore $\hat{\clus} \ne \clus$.
Otherwise, we continue to the next cluster.
If the outputs of the $\seed$ are all $\nil$, then we deduce that $\hat{\clus} = \clus$.

The process will stop with $\hat{\beta} \ge \frac{1}{2} \beta$, which happens after $R=\scO(\log \PackNum^*(\frac{\beta\gamma}{2}))$ rounds.
Since $\PackNum^*(\frac{\beta\gamma}{2}) \le \big(\frac{4}{\beta\gamma}\big)^{\dens(X)}$, see Section~\ref{sec:prelim}, then $\log \PackNum^*\big(\frac{\beta\gamma}{2}\big) = \scO(\dens(X)\log\big(\frac{4}{\beta\gamma}\big))$.
At each round, the algorithm uses $k^2$ \scq\ queries plus $2k$ \seed\ queries.
Thus, in total we use $\scO(k^2 R)$ \scq\ queries and $\scO(k R)$ \seed\ queries.
The case with $\gamma$ is unknown and $\beta$ is known is symmetric.
\end{proof}

\section{Supplementary material for Section~\ref{sec:runtime}}
\label{app:runtime}

\subsection{Running time with identical radii}
We prove:
\begin{theorem}
\label{thm:same_eps_runtime}
\RecoverClustering$(X,\epsilon,\bs)$ runs in time $\scO(k^2(n+m))$.
\end{theorem}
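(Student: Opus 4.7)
Since \RecoverClustering\ invokes \RecoverSingleCluster\ once per cluster, it suffices to show that each invocation runs in time $\scO\big(k(n+m)\big)$. From the invariants established in the proof of Lemma~\ref{lem:single_recovery}, each invocation performs at most $k$ outer iterations. I will bound the per-iteration cost at $\scO(n+m)$ outside of \FindNewSeed, and then argue that the amortized cost of \FindNewSeed\ over the whole invocation is also $\scO(k(n+m))$.

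Within a single iteration (excluding the call to \FindNewSeed), the dominant operations are: a breadth-first search in $G[R_i]$ to compute $\pi(s_i,s_h)$; the binary search inside \FindCutEdge\ along this path; two BFS calls from $u_i$ and $u_j$ in $G$ to produce the distance maps used by \Cut; a call to \BallSeparator\ on the ball $Z$; and the extraction of the connected component of $s_i$ in $G[R_i \cap S_i]$. Each of these runs in $\scO(n+m)$: the BFS calls are standard, \FindCutEdge\ traverses the path in logarithmic rounds with constant work per round once the path has been materialized, and \BallSeparator\ reduces to computing the connected components of the thresholded subgraph $G_Z(\beta\epsilon)$, which by Observation~\ref{obs:G_x_eps_time} takes time $\scO(n+m)$. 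Summing over the at most $k$ iterations therefore contributes $\scO(k(n+m))$ outside of \FindNewSeed.

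For \FindNewSeed, I would use the adapted variant sketched at the end of the proof of Lemma~\ref{lem:findseed}: rather than looping over $\bu$ from scratch at every iteration, maintain a persistent set $W$ that aggregates $\bigcup_{u \in \bu}\big(Z^{(u)} \setminus Z^{(u)}_i\big)$, and update it only when $\bu$ grows. Each time a new vertex $u$ is appended to $\bu$, perform one truncated BFS from $u$ to compute $Z^{(u)}=\{x \in V(G) : d_G(u,x)<\nicefrac{2}{\gamma}+1\}$, call \BallSeparator\ on $Z^{(u)}$ to obtain $Z^{(u)}_i$ (another $\scO(n+m)$ by Observation~\ref{obs:G_x_eps_time}), and merge $Z^{(u)}\setminus Z^{(u)}_i$ into $W$; at the same time, flag every node lying in $Z^{(u)}$ as \emph{close} to some element of $\bu$. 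Since $|\bu| \le k$, the cumulative cost of these updates is $\scO(k(n+m))$. Between updates, the two remaining tests in \FindNewSeed\ reduce to (i) checking whether $W \cap R_i$ is nonempty, and (ii) scanning $\edgecut(R_i)$ for an endpoint whose ``close'' flag is false; both scans cost $\scO(n+m)$ per iteration, hence $\scO(k(n+m))$ across the invocation.

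The main subtlety is to verify that this amortization does not affect correctness as $R_i$ shrinks. The sets $Z^{(u)}$ and $Z^{(u)}_i$ are computed with respect to the $R_i$ that was current when $u$ was added, but points can only \emph{leave} $R_i$, so intersecting $W$ with the current $R_i$ at query time recovers exactly the set used by the non-amortized \FindNewSeed, and the ``close to some $u$'' flags on the remaining nodes are unchanged. Combining the $\scO(k(n+m))$ bound on the non-\FindNewSeed\ work with the $\scO(k(n+m))$ amortized bound on \FindNewSeed\ gives $\scO(k(n+m))$ per call to \RecoverSingleCluster, and multiplying by the $k$ calls made by \RecoverClustering\ yields the claimed $\scO(k^2(n+m))$.
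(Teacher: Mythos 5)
Your proposal follows the same route as the paper: decompose \RecoverClustering\ into $k$ calls to \RecoverSingleCluster, bound each call at $\scO(k(n+m))$ by charging $\scO(n+m)$ per iteration outside of \FindNewSeed, and amortize \FindNewSeed\ by maintaining a persistent set of candidates $\bigcup_{u\in\bu}\bigl(Z^{(u)}\setminus Z^{(u)}_i\bigr)$ together with a ``close-to-some-$u$'' flag (the paper's ``inactive'' mark), updating both with a single truncated BFS and a single \BallSeparator\ call whenever $\bu$ grows, and re-intersecting with the shrinking $R_i$ at query time. This is precisely the paper's Lemma~\ref{lem:single_runtime} plus Lemma~\ref{lem:findseed_runtime}. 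One small thing you do not address is the first test in \FindNewSeed\ (whether $\bs\cap R_i\ne\{s_i\}$): the paper also amortizes it, by storing $s_i$ separately and rebuilding $\bs\cap R_i$ in $\scO(n)$ per iteration after $R_i$ is updated. This omission is harmless, since a dictionary lookup over $R_i$ per iteration already stays within the $\scO(n+m)$ budget, but it is worth noting that there are three, not two, tests whose cost must be controlled.
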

\begin{proof}
First, recall from Section~\ref{sec:runtime} that computing $G=G_X(\epsilon)$ takes time $\scO(n+m)$.
Then, each call to \RecoverSingleCluster$(G,\epsilon,\bs,i)$ takes time  $\scO(k(n+m))$ by Lemma~\ref{lem:single_runtime}.
Since there are $k$ clusters, the claim follows.
\end{proof}

In the rest of this appendix we prove Lemma~\ref{lem:single_runtime}, through a sequence of intermediate steps.
\begin{lemma}
\label{lem:abs_runtime}
\BallSeparator$(Z,\epsilon,u_i)$ runs in time $\scO(n+m)$.
\end{lemma}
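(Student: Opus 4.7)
The plan is to bound the time of each step of \BallSeparator\ individually and observe that they all fit within $\scO(n+m)$.

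First, I would construct the unweighted graph $G_Z(\beta\epsilon)$ explicitly. Since $Z \subseteq X$ and $G_Z(\beta\epsilon)$ is obtained by thresholding the subgraph of $\scG$ induced by $Z$, Observation~\ref{obs:G_x_eps_time} guarantees this can be done in $\scO(|Z|+m_Z) = \scO(n+m)$ time, where $m_Z$ is the number of edges of $\scG$ between vertices of $Z$.

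Next, I would enumerate the connected components of $G_Z(\beta\epsilon)$ by a standard BFS (or DFS), which runs in time linear in the size of the graph, i.e., $\scO(n+m)$. The traversal naturally yields, for every vertex, a pointer to a canonical representative of its component, which is needed for the inner loop of the algorithm.

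For the body of the loop itself, each component contributes exactly one \scq\ query (constant time under the standard oracle cost model) and, possibly, a linear scan of its vertices to insert them into $U$. Since components are disjoint, the total work across all components is $\scO(n)$, and each \scq\ call adds $\scO(1)$ overhead for a grand total of at most $\scO(n)$ on top of the graph construction and BFS. Summing the three contributions gives the claimed $\scO(n+m)$ bound; there is no real obstacle, the only thing to be careful about is invoking Observation~\ref{obs:G_x_eps_time} on the subgraph induced by $Z$ rather than on the whole $\scG$, which is legitimate since that observation applies to thresholding of arbitrary subgraphs of $\scG$.
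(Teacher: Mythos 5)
Your proof is correct and follows essentially the same approach as the paper's: construct the thresholded graph $G_Z(\beta\epsilon)$ via Observation~\ref{obs:G_x_eps_time}, find its connected components by graph traversal in linear time, and account for the loop. The only small difference is cosmetic — the paper thresholds $G$ and then restricts to $Z$ in two steps, whereas you directly threshold the induced subgraph on $Z$; you are also slightly more explicit about the $\scO(n)$ cost of the loop body and the \scq\ calls, which the paper leaves implicit.
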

\begin{proof}
First, we construct $G(\beta\epsilon)$ by thresholding $G$, which takes time $\scO(n+m)$.
Then, we keep only the edges of $G(\beta\epsilon)$ which have both endpoints in $Z$, which takes again time $\scO(n+m)$.
Once we have $G_Z(\beta\epsilon)$, listing its connected components takes once again time $\scO(n+m)$.
\end{proof}

\begin{lemma}
\label{lem:bsearch_runtime}
Given a simple $C_i$-prefixed path $\pi$, \FindCutEdge$(\pi)$ runs in time $\scO(\log n)$.
\end{lemma}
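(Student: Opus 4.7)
The plan is to analyze \FindCutEdge\ as a standard binary search on the vertex sequence of $\pi$. First I would observe that a simple path $\pi$ can be represented as an array of its vertices $(x_1,\ldots,x_{|\pi|})$, with $|\pi| \le n$, and that we may assume this representation is passed in by the caller (since \RecoverSingleCluster\ obtains $\pi$ from a shortest-path computation, which naturally produces such an array). Under this representation, locating the median node $x$ and extracting either the prefix $\pi(s_i,x)$ or the suffix $\pi(x,s_h)$ can be done in $\scO(1)$ time using index arithmetic, without copying the path.

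Next I would bound the recursion depth. At each recursive call the length of the path passed to \FindCutEdge\ is at most $\lceil |\pi|/2 \rceil + 1$, since splitting at a median yields two subpaths of roughly equal length, each of size at most half the original plus one (to account for the inclusion of the split vertex). Hence the recursion bottoms out (reaching $|\pi|=1$, at which point the algorithm returns the single remaining edge) after $\scO(\log |\pi|) = \scO(\log n)$ calls.

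Finally, I would account for the work per call: each recursive invocation performs $\scO(1)$ index arithmetic, issues one \scq\ query (which, by our standing assumption that oracle answers are obtained in constant time, costs $\scO(1)$), and makes a single recursive call. Summing over the $\scO(\log n)$ levels of recursion gives total running time $\scO(\log n)$, as claimed.

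The only minor obstacle is fixing the data structure for $\pi$ so that sub-path extraction is genuinely $\scO(1)$ rather than $\scO(|\pi|)$; this is handled by the array-with-indices representation described above, and is consistent with how \RecoverSingleCluster\ computes $\pi$ via breadth-first search in $G[R_i]$ (see the running-time discussion in Section~\ref{sec:runtime}).
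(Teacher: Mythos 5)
Your proof is correct and follows essentially the same reasoning the paper appeals to: \FindCutEdge\ is a binary search over the vertices of $\pi$, so with an array representation of the path (and constant-time \scq\ answers) the recursion has depth $\scO(\log|\pi|) \le \scO(\log n)$ with $\scO(1)$ work per level. The paper simply declares this ``straightforward'' and points to Observation~\ref{obs:bin_search} and the pseudocode; you have spelled out the same argument in more detail, including the representational point needed to make per-call work genuinely $\scO(1)$.
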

\begin{proof}
Straightforward, see Observation~\ref{obs:bin_search} and the code of \FindCutEdge.
\end{proof}

\begin{lemma}
\label{lem:cut_runtime}
\Cut$(G,u_i,u_j)$ runs in time $\scO(n+m)$.
\end{lemma}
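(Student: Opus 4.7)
The plan is to walk through the four phases of \Cut\ and bound each by $\scO(n+m)$. Since $G$ is unweighted (it is an induced subgraph of $G_X(\epsilon)$), shortest-path distances from a single source can be computed by breadth-first search in time $\scO(n+m)$, and this is what drives the argument.

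First, to build the set $Z = \{x \in V(G) \,:\, d_G(u_i,x) < \nicefrac{1}{\gamma}\}$, I would run a single BFS from $u_i$ in $G$, storing the distance $d_G(u_i,x)$ at each vertex $x$; then a linear scan selects those with distance $< \nicefrac{1}{\gamma}$. This takes $\scO(n+m)$ time. Second, computing $Z_i = \BallSeparator(Z,\epsilon,u_i)$ costs $\scO(n+m)$ by Lemma~\ref{lem:abs_runtime}, and producing $Z_j = Z \setminus Z_i$ is a linear scan in $\scO(n)$.

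For the third phase, namely the loop at line~\ref{line:cycle} that partitions $V(G) \setminus Z$ based on the comparison $d_G(x,u_i) \le d_G(x,u_j)$, the distances from $u_i$ are already cached from the first BFS; I would then run a second BFS from $u_j$ in $G$ in time $\scO(n+m)$ to obtain $d_G(x,u_j)$ for every $x \in V(G)$. A final linear scan of $V(G) \setminus Z$ compares the two cached distances at each vertex and places it into $U_i$ or $U_j$ in $\scO(1)$ per vertex, totalling $\scO(n)$. Assembling the returned pair $(Z_i \cup U_i,\, Z_j \cup U_j)$ is another $\scO(n)$ operation.

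Summing the costs of the four phases yields $\scO(n+m)$ overall, as claimed. There is no real obstacle here: the only point worth flagging is that the BFS from $u_i$ should be performed once and its distance array reused both to define $Z$ and to compare with $d_G(x,u_j)$ inside the loop, since recomputing these distances inside the loop would blow up the running time.
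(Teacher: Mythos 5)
Your proof is correct and takes the same approach as the paper: two BFS passes (from $u_i$ and $u_j$) to cache the shortest-path distances in $\scO(n+m)$, Lemma~\ref{lem:abs_runtime} for \BallSeparator, and linear scans for the remaining set operations. The only difference is that you spell out the details (reusing the cached distance arrays, the final assembly of the output pair) more explicitly than the paper does.
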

\begin{proof}
Computing $d_G(u_i,x)$ and $d_G(u_j,x)$ for all $x \in G$ takes time $\scO(n+m)$ using a BFS from $u_i$ and $u_j$.
Thereafter, we can compute $Z$ in time $\scO(n)$.
Running \BallSeparator($Z,\epsilon,u_i$) takes time $\scO(n+m)$, see above.
Finally, the loop at line~\ref{line:cycle} takes time $\scO(n)$.
\end{proof}

\begin{lemma}
\label{lem:findseed_runtime}
In \RecoverSingleCluster$(G,\epsilon,\bs,i)$, each call to \FindNewSeed$(G,R_i,\epsilon,\bs,\bu,i)$ takes time $\scO(k(n+m))$.
By adapting both algorithms, this can be reduced to $\scO(n+m)$ while adding at most an additive $\scO(n+m)$ to the running time of each iteration of \RecoverSingleCluster.
\end{lemma}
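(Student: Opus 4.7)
The plan is to first analyze the running time of \FindNewSeed\ as written in Algorithm~\ref{algo:find_seed}, and then describe the amortized version that was already used in the proof of Lemma~\ref{lem:findseed} to control the query count, showing that the same data structure also controls the running time.

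\textbf{Naive bound.} Inspecting Algorithm~\ref{algo:find_seed}: the test at line~\ref{line:if1} takes time $\scO(k)$. The \textbf{for} loop at line~\ref{line:for} iterates over $u\in\bu$, and $|\bu|\le k$ since one element is appended to $\bu$ per iteration of \RecoverSingleCluster\ and there are at most $k$ such iterations. For a given $u$, computing the set $Z=\{x\in R_i: d_G(u,x)<\nicefrac{2}{\gamma}+1\}$ takes one breadth-first search from $u$ in $G$, which costs $\scO(n+m)$ (the graph $G=G_X(\epsilon)$ is constructed once, outside \RecoverSingleCluster, in time $\scO(n+m)$ by Observation~\ref{obs:G_x_eps_time}). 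The call to \BallSeparator$(Z,\epsilon,u)$ costs $\scO(n+m)$ by Lemma~\ref{lem:abs_runtime}, and comparing $Z$ against $Z_i$ costs $\scO(n)$. Hence each iteration of the loop is $\scO(n+m)$, for a total of $\scO(k(n+m))$. The final test at line~\ref{line:if2} can be implemented by first computing, for each $x\in X$, the value $\min_{u\in\bu}d_G(u,x)$ via $|\bu|$ BFSes, and then scanning the edges incident to $R_i$; this costs $\scO(k(n+m))$ as well. The first claim of the lemma follows.

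\textbf{Amortized version.} The adapted routine maintains, across all iterations of \RecoverSingleCluster, two persistent pieces of state: a set $D$ which accumulates the nodes discovered in $Z\setminus Z_i$ over successive values of $u$, and a Boolean array $F[\cdot]$ such that $F[x]=\textsc{true}$ iff $d_G(u,x)\ge\nicefrac{2}{\gamma}+1$ for every $u$ currently in $\bu$. Both are initialized when \RecoverSingleCluster\ starts (time $\scO(n)$). Whenever a new node $u$ is appended to $\bu$ immediately after line~\ref{line:ice} of \RecoverSingleCluster, we do the following once (this is the additive $\scO(n+m)$ per iteration claimed by the lemma): run a single BFS from $u$ in $G$ to obtain $d_G(u,\cdot)$; compute $Z^{(u)}=\{x\in R_i:d_G(u,x)<\nicefrac{2}{\gamma}+1\}$; invoke \BallSeparator$(Z^{(u)},\epsilon,u)$ to get $Z^{(u)}_i$; add $Z^{(u)}\setminus Z^{(u)}_i$ to $D$; and for every $x$ with $d_G(u,x)<\nicefrac{2}{\gamma}+1$ set $F[x]:=\textsc{false}$. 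This block takes time $\scO(n+m)$ by the same arguments as above.

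\textbf{Per-call cost.} With these structures in place, each call to \FindNewSeed\ only needs to (i) refresh $D$ by discarding its elements that do not lie in the current $R_i$, which is $\scO(n)$; (ii) return any element of $D$ if nonempty, which replaces the loop at line~\ref{line:for}; and (iii) if $D$ is empty, enumerate the edges of $\edgecut(R_i)$ by scanning, for each $x\in R_i$, its neighbors in $G$, and return any endpoint $x\in R_i$ with $F[x]=\textsc{true}$ whose neighbor lies outside $R_i$. Enumerating $\edgecut(R_i)$ this way is $\scO(n+m)$, and the flag test is $\scO(1)$ per edge. Thus each call of the adapted \FindNewSeed\ runs in $\scO(n+m)$, while the per-iteration maintenance of $D$ and $F$ adds only an extra $\scO(n+m)$ to that iteration of \RecoverSingleCluster. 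The behaviour is unchanged because, as already argued in the proof of Lemma~\ref{lem:findseed}, $D$ after the refresh equals the set of $x\in R_i\setminus C_i$ that the original loop would have found, and $F[x]=\textsc{true}$ is exactly the condition under the existential quantifier at line~\ref{line:if2}.
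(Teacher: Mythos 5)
Your proof is correct and follows essentially the same approach as the paper's: the naive bound is analyzed per-step, and the amortized version maintains across iterations the running set of discovered nodes $Z \setminus Z_i$ (your $D$, the paper's $\overline{Z(\bu)}$) and an active/inactive flag encoding $\forall u \in \bu : d_G(u,x) \ge \nicefrac{2}{\gamma}+1$ (your $F$, the paper's marking), refreshing against the shrinking $R_i$. The only cosmetic differences are that you perform the per-iteration maintenance right after $u_i$ is appended to $\bu$ rather than after $R_i$ is updated at line~\ref{line:connect-1} (which is equivalent since $R_i$ only shrinks and $D$ is refreshed before use), and you bound line~\ref{line:if1} directly as $\scO(k)$ via a dictionary rather than amortizing it to $\scO(1)$ as the paper does.
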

\begin{proof}
Let us start with the $\scO(k(n+m))$ bound given by a ``naive'' implementation of \FindNewSeed.
At line~\ref{line:if1}, we compute $\bs \cap R_i$ in time $\scO(k|R_i|) = \scO(k n)$ and perform the check in constant time.
At line~\ref{line:for}, we make $|\bu| \le k$ iterations.
At each iteration we compute $Z$ in time $\scO(n+m)$ with a BFS from $u$, then we run \BallSeparator($Z,\epsilon,u$) in time $\scO(n+m)$ by Lemma~\ref{lem:abs_runtime}, and possibly we search for $x \in Z \setminus Z_i$ which takes time $\scO(n)$.
Hence the entire loop of line~\ref{line:for} takes time $\scO(k(n+m))$.
Finally, at line~\ref{line:if2} we compute the set $\{(x,y) \in \edgecut(R_i) \,:\, \forall u \in \bu \,:\, d_G(u,x) \ge \frac{2}{\gamma}+1\}$.
To this end, we compute the set $\{x \in R_i \,:\, \forall u \in \bu \,:\, d_G(u,x) \ge \frac{2}{\gamma}+1\}$, which takes time $\scO(k (n+m))$ using a BFS from $u$.
For each such $x$ in this set, we list all its edges $(x,y) \in E(G)$.
If we find any such edge with $y \notin R_i$, we return $y$, else we return \nil.
Thus, this part takes $\scO(k (n+m))$.
Therefore, a single call to \FindNewSeed\ takes time $\scO(k(n+m))$.
Since \FindNewSeed\ is called at most $k$ times, this gives a total running time of $\scO(k^2(n+m))$.

Let us now see how to reduce to $\scO(n+m)$ the running time of \FindNewSeed, by adding at most $\scO(n+m)$ to each iteration of \RecoverSingleCluster.
First, consider line~\ref{line:if1} of \FindNewSeed.
We keep $\bs$ updated so as to ensure that $\bs \cap R_i = \bs \setminus \{s_i\}$.
In this way, we can run line~\ref{line:if1} in constant time.
Towards this end, we modify \RecoverSingleCluster\ as follows.
First, we store $s_i$ separately form $\bs$ in a dedicated variable.
Second, after updating $G_i$ and $R_i$ at line~\ref{line:connect-1} of \RecoverSingleCluster, we replace $\bs$ with $\bs \cap R_i$.
This is done by taking an empty dictionary $\bs'$, taking every node $x \in R_i$, and adding $x$ to $\bs'$ if $x \in \bs$.
The whole operation takes time $\scO(n)$ by using dictionaries with $O(1)$ time per lookup and update.
Summarizing, we spend an additional $\scO(n)$ time at each iteration of \RecoverSingleCluster, and line~\ref{line:if1} of \FindNewSeed\ will run in constant time.

Now consider the loop at line~\ref{line:for} of \FindNewSeed.
We modify \RecoverSingleCluster\ so as to keep track of the set of nodes:
\[
  \overline{Z(\bu)} = \{x \in R_i \setminus C_i \,:\, \exists u \in \bu \,:\, d_G(u,x) < \frac{2}{\gamma}+1\}
\]
Note that line~\ref{line:for} of \FindNewSeed\ detects precisely if $\overline{Z(\bu)} \ne \emptyset$, in which case it returns any $x \in \overline{Z(\bu)}$.
Thus, if we have $\overline{Z(\bu)}$, we can replace the entire block at line~\ref{line:for} with an equivalent block that runs in time $\scO(1)$.
To keep track of $\overline{Z(\bu)}$, we initialize it to an empty set, using a dictionary with $O(1)$ lookup and access time.
Then, after updating $G_i$ and $R_i$ at line~\ref{line:connect-1}, we perform the following operations.
First, we make \RecoverSingleCluster\ compute $Z=Z(u_i)$.
Second, we run \BallSeparator($Z,\epsilon,u_i$) to obtain $Z_i(u_i)$.
Third, we compute $\overline{Z_i(u_i)} := Z(u_i) \setminus Z_i(u_i)$.
Fourth, we add $\overline{Z_i(u_i)}$ to $\overline{Z(\bu)}$.
Finally, we keep in $\overline{Z(\bu)}$ only those $x \in \overline{Z(\bu)}$ such that $x \in R_i$.
This can be done by creating a new dictionary, adding to it each $x \in R_i$ such that $x \in \overline{Z(\bu)}$, and overwriting $\overline{Z(\bu)}$ with that dictionary, which requires time $\scO(n)$ in total.
Therefore, we add $\scO(n+m)$ to each iteration of \RecoverSingleCluster, and line~\ref{line:for} of \FindNewSeed\ will take time $\scO(n)$.

Finally, we have line~\ref{line:if2} of \FindNewSeed.
Here, we want to perform the check in time $\scO(n+m)$.
To this end, at the beginning of the first iteration of \RecoverSingleCluster, we mark all nodes of $R_i$ as \emph{active}.
Then, at each iteration, after \RecoverSingleCluster\ has computed $(u_i,u_j)$, for all $x \in R_i$ we compute $d_G(u_i,x)$ and if $d_G(u_i,x) < \frac{2}{\gamma}+1$ then we change the mark of $x$ to \emph{inactive}.
This takes time $\scO(n+m)$, using a BFS from $u_i$.
Then, at line~\ref{line:if2} of \FindNewSeed, we only need to sweep over all $x \in R_i$ and, if $x$ is active, list its edges $(x,y)$ until finding $y \notin R_i$ (a check which takes again time $O(1)$ by storing $R_i$ as a dictionary).
This gives a total time bound of $\scO(n+m)$ for the block at line~\ref{line:if2}, and once again we add only a $\scO(n+m)$ to each iteration of \RecoverSingleCluster.

The proof is complete.
\end{proof}

\begin{lemma}
\label{lem:single_runtime}
\RecoverSingleCluster$(G,\epsilon,\bs,i)$ runs in time $\scO(k(n+m))$.
\end{lemma}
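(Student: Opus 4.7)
The plan is to bound the total running time by multiplying the number of iterations of the \textbf{while} loop by the cost of a single iteration, using the adapted versions of the subroutines already analysed.

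First I would bound the number of iterations. By the proof of Lemma~\ref{lem:single_recovery}, invariant I3 ensures that the quantity $\kappa(R_i^{\ell})$ strictly decreases with each iteration, so the loop terminates after at most $k$ iterations.

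Next I would bound the cost of a single iteration. Within one iteration, the algorithm performs: (i) a call to \FindNewSeed; (ii) a shortest path computation in $G[R_i]$ between $s_i$ and $s_h$; (iii) a call to \FindCutEdge\ on the resulting simple path; (iv) a call to \Cut; and (v) the extraction of the connected component of $s_i$ in $G[R_i \cap S_i]$. Step (ii) is a standard BFS on the subgraph $G[R_i]$, which has at most $n$ vertices and $m$ edges, and therefore costs $\scO(n+m)$. Step (iii) costs $\scO(\log n)$ by Lemma~\ref{lem:bsearch_runtime}. Step (iv) costs $\scO(n+m)$ by Lemma~\ref{lem:cut_runtime}. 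Step (v) is again a BFS, costing $\scO(n+m)$ after thresholding or filtering the edge set of $G$ to $R_i \cap S_i$, which itself is $\scO(n+m)$ by Observation~\ref{obs:G_x_eps_time}.

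The only subtle contribution is step (i) together with its accounting. Here I would invoke the adapted version of Lemma~\ref{lem:findseed_runtime}: with the bookkeeping described there (maintaining $\bs \cap R_i$, the set $\overline{Z(\bu)}$, and the active/inactive marks), each call to \FindNewSeed\ runs in $\scO(n+m)$, at the price of adding $\scO(n+m)$ of auxiliary bookkeeping to each iteration of \RecoverSingleCluster. Since this bookkeeping cost is itself $\scO(n+m)$ per iteration, the total per-iteration cost remains $\scO(n+m)$.

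Summing over the at most $k$ iterations yields the claimed bound of $\scO(k(n+m))$. The main subtlety, and the only one requiring care, is the bookkeeping needed to keep each call to \FindNewSeed\ in $\scO(n+m)$ rather than $\scO(k(n+m))$; but this has already been handled in Lemma~\ref{lem:findseed_runtime}, so the present proof reduces to invoking that lemma and summing the per-iteration costs.
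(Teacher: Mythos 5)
Your proposal is correct and follows essentially the same approach as the paper: bound the number of iterations by $k$ via invariant I3, bound each iteration's cost by $\scO(n+m)$ using the already-established runtime lemmas for the subroutines (including the adapted, amortized version of \FindNewSeed\ from Lemma~\ref{lem:findseed_runtime}), and multiply.
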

\begin{proof}
Computing $G_i$ and $R_i$ at any point along the algorithm takes time $\scO(n+m)$.
Now consider each iteration of the main loop.
By Lemma~\ref{lem:findseed_runtime}, we can make \FindNewSeed$(G,R_i,\epsilon,\bs,\bu,i)$ run in time $\scO(n+m)$ while increasing the overall running time of the iteration of \RecoverSingleCluster\ by $\scO(n+m)$.
ShortestPath$(G[R_i],s_i,s_h)$ runs in time $\scO(n+m)$, as it is simply a BFS on $G[R_i]$.
\FindCutEdge$(\pi(s_i,s_h))$ runs in time $\scO(\log n)$, see Observation~\ref{lem:bsearch_runtime}.
Adding $u_i$ to $\bu$ takes constant time.
\Cut$(G,u_i,u_j)$ runs in time $\scO(n+m)$, see Lemma~\ref{lem:cut_runtime}.
Therefore each iteration of \RecoverSingleCluster\ takes time $\scO(n+m)$.
At most $k$ iterations are made, concluding the proof.
\end{proof}

\subsection{Running time with different radii}
\begin{theorem}
\ReduceRecover$(X,\beps,\bs)$ runs in time $\scO(k^2(n+m))$.
\end{theorem}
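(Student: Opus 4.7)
The plan is to walk through \ReduceRecover\ line by line, bound the cost of each step in a generic iteration $i$, and then sum the per-iteration costs over the $k$ iterations. Since the algorithm is essentially $k$ invocations of \RecoverSingleCluster\ on appropriately restricted subgraphs, the main work reduces to amortizing the costs of constructing those subgraphs and of finding the seed nodes $s^*_j$.

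First I would handle the preliminary sorting of $\beps$ in time $\scO(k\log k)$, which is dominated by the rest of the analysis. For a generic iteration $i$, I would argue as follows. Computing $G_X(\rad_i)$ costs $\scO(n+m)$ by Observation~\ref{obs:G_x_eps_time}, and extracting its connected component $G^*$ containing $s_i$ is a single BFS, also $\scO(n+m)$. The inner loop performs $k-i = \scO(k)$ \seed\ queries; under the standard oracle-access model used throughout the paper each such query counts as $\scO(1)$, so this block contributes $\scO(k)$ time plus the cost of assembling the vector $\bs^*$, which is $\scO(k)$. The call to $\RecoverSingleCluster(G^*,\rad_i,\bs^*,i)$ then costs $\scO(k(|X^*|+|E(G^*)|)) = \scO(k(n+m))$ by Lemma~\ref{lem:single_runtime}, since $G^*$ is a subgraph of $G_X(\rad_i)$ (which itself has at most $m$ edges and $n$ vertices). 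Finally, deleting $\hat{C_i}$ from $X$ can be done in $\scO(n)$ time with a dictionary representation of $X$.

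Adding the contributions, each iteration of \ReduceRecover\ runs in $\scO(k(n+m))$ time. Summing over the $k$ iterations gives the claimed bound $\scO(k^2(n+m))$.

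I do not expect a serious obstacle here: the bound is a direct consequence of Lemma~\ref{lem:single_runtime} (which already bounds a single \RecoverSingleCluster\ call by $\scO(k(n+m))$) and Observation~\ref{obs:G_x_eps_time}. The only mild subtlety is checking that the per-iteration graph $G^*$ can indeed be treated as having $\scO(n+m)$ vertices and edges, so that Lemma~\ref{lem:single_runtime} applies with the original parameters $n$ and $m$ rather than some inflated bound; this follows since $G^*$ is obtained from $\scG$ by thresholding and then restricting to a connected component, operations that only remove vertices and edges.
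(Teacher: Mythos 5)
Your proof is correct and mirrors the paper's argument essentially verbatim: both bound each step of a generic iteration of \ReduceRecover\ (construction of $G^*$, the \seed\ loop, the \RecoverSingleCluster\ call via Lemma~\ref{lem:single_runtime}, and the bookkeeping) by $\scO(k(n+m))$ and then sum over the $k$ iterations. The minor additions in your write-up (the $\scO(k\log k)$ sorting cost, the remark that $G^*$ is a subgraph of $\scG$ so $|V(G^*)|\le n$ and $|E(G^*)|\le m$) are harmless and consistent with the paper's intent.
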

\begin{proof}
Let us consider each one of the $k$ iterations of the algorithm.
To compute $G^*$, we perform a BFS by ignoring any edge $(x,y) \in \scG$ with  $d(x,y) > \rad^*$.
The resulting runtime is $\scO(n+m)$.
The \seed\ part takes time $\scO(k)=\scO(n)$, as does the construction of $\bs^*$.
The call to \RecoverSingleCluster\ takes time  $\scO(k(n+m))$ by Lemma~\ref{lem:single_runtime}.
Writing $\hat{C_i}$ in the output takes time $\scO(n)$.
Summing over all iterations gives the bound.
\end{proof}

\section{Supplementary material for Section~\ref{sec:lb}}
\label{app:lb}

\subsection{Proof of Theorem~\ref{th:pack_dep}}
\packdeptheorem*
\begin{proof}
Let $\scG=(X,\scE,d)$ where $(X,\scE)$ is the complete graph on $n$ nodes and $d=1$, and let $\clus=(C_1,C_2)$ be a uniform random partition of $X$.
We claim that any such $\clus$ is $(\beta,\gamma)$-convex according to both Definition~\ref{def:dense} and Definition~\ref{def:dense2}.
Take indeed $\epsilon=1$ and let $G=G_X(\epsilon)$.
The connectivity of $G[C_1]$ and $G[C_2]$ holds trivially (they are complete graphs).
The local metric margin holds as well, since any two distinct points $x,y \in X$ satisfy $d(x,y) = d > \beta d$, as $\beta < 1$.
To see that geodesic convexity holds, too, note that for any $x,y \in C_1$ we have $d_G(x,y) \le 1$ and any (simple) path between $x$ and $y$ that contains a point in $X \setminus C_1$ has length at least $2 > (1+\gamma) d_G(x,y)$.
Finally, note that $G_X(\epsilon')$ is an independent set for any $\epsilon' < \epsilon$, proving that $\clus$ is $(\beta,\gamma)$-convex according to Definition~\ref{def:dense2} as well.

Now, since $\clus$ is chosen uniformly at random among all partitions of $X$, one can see that $\Omega(n)$ \scq\ or \seed\ queries are necessary to recover $\clus$ with constant probability.
To see this, note that as long as $x$ has not been returned by some \seed\ query or has not bee queried via \scq, then $x$ belongs to one of $C_1$ and $C_2$ with equal probability.
Finally, $\dens(X) = \log_2 \mu(X) \le \log_2 n$ since $\mu(X) \le |X|$.
Thus $n=2^{\dens(X)}$, which proves the thesis.
\end{proof}

\subsection{Proof of Theorem~\ref{th:nec_seeds}}
\necseedstheorem*
\begin{proof}
Let $\scG=(X,\scE,d)$ where $(X,\scE)$ is the complete graph and $d(x,y)$ is the Euclidean distance in $\R^2$.
We let $X = \text{UP} \cup \text{LOW}$, see Figure~\ref{fig:caterpillar}, where:
\[
\text{UP} = \bigcup_{j=1}^{n/3} \{(2j,1)\}, \quad
\text{LOW} = \bigcup_{j=1}^{2n/3}\{(j,0)\}
\]
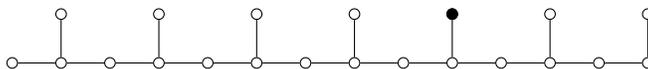
\begin{figure}[h!]
\centering
\begin{tikzpicture}[scale=.65,every node/.style={draw,fill,circle,minimum size=4pt,inner sep=0pt}]
\def\m{7}
\pgfmathtruncatemacro\n{2*\m}
\foreach \i in {1,...,\n} {
    \node[fill=white] (x\i) at (\i,0) {};
}
\pgfmathtruncatemacro\nend{\n-1}
\foreach \i in {1,...,\nend} {
    \pgfmathtruncatemacro\j{\i+1}
    \path[-] (x\i) edge (x\j);
}
\foreach \i in {1,...,\m} {
    \node[fill=white] (y\i) at (\i*2,1) {};
}
\node[fill=black] (z) at (y5) {};
\pgfmathtruncatemacro\mend{\m}
\foreach \i in {1,...,\mend} {
    \pgfmathtruncatemacro\j{\i*2}
    \path[-] (y\i) edge (x\j);
}
\end{tikzpicture}
    \caption{the graph $G_{\epsilon}(X)$ for $\epsilon=1$. All points are in $C_1$, save for the filled point in $C_2$, chosen uniformly at random in UP.}
    \label{fig:caterpillar}
\end{figure}

Now choose a point $z$ uniformly at random from UP, and set $C_1 = X \setminus \{z\}$ and $C_2 = \{z\}$.
One can check that all properties of Definition~\ref{def:dense} and Definition~\ref{def:dense2} are satisfied for $\epsilon=1$.
In particular, in $G_X(\epsilon)$, no simple path between two points of $C_1$ contains $z$.
Hence, $\clus$ is $(\beta,\gamma)$-convex.
Clearly, $\Omega(n)$ queries are needed to find $z$ (which is equivalent to recovering $\clus$) with constant probability, even if $\gamma,\alpha$ and the $\epsilon$ are known.
\end{proof}

\subsection{Proof of Theorem~\ref{thm:learn_radii_lb}}
\radiilowertheorem*
\begin{proof}
We start with the simpler case $k=2$.
Let $\scG=(X,\scE,d)$ be a path with increasing edge weights, that is:
\begin{align}
    X & =[n]
    \\
    \scE &= \left\{(j,j+1) \,:\, j=1,\ldots,n-1\right\}
    \\
    d(j,j+1) &=1 + \frac{\beta j}{n}, \quad (j,j+1) \in \scE
\end{align}
Choose $j^*$ uniformly at random in $\{2,\ldots,n-1\}$.
We let $C_1=\{1,\ldots,j^*\}$, and \mbox{$C_2=\{j^*\!+\!1,\dots,n\}$}.

First, we prove that $C_1$ and $C_2$ are $(\beta,\gamma)$-convex with radii respectively $\epsilon_1=d(j^*-1,j^*)$ and $\epsilon_2=d(n-1,n)$.
Recall Definition~\ref{def:dense}.
For the connectivity, clearly $\rho(G_{C_1}(\epsilon_1))=\rho(G_{C_2}(\epsilon_2))=1$.
For the local metric margin, note that, by the choice of $\beta$ and $d$, we have $d \ge 1$ but $\epsilon_1,\epsilon_2 \le \frac{3}{2}$.
Thus, for any two distinct $x,y \in X$, we have $d(x,y) > \beta \max(\epsilon_1,\epsilon_2)$.
Therefore the local metric margin is satisfied.
For geodesic convexity, note that there is only one edge between $C_1$ and $C_2$ in $\scG$, thus no simple path can exist between two points of one cluster that intersects the other cluster.
Thus, the properties of Definition~\ref{def:dense} are satisfied by $\epsilon_1,\epsilon_2$.

To show that Definition~\ref{def:dense2} is satisfied as well, we have to prove that $\epsilon_1,\epsilon_2$ are the smallest such values; by Lemma~\ref{lem:mineps} this implies that $\epsilon_1$ and $\epsilon_2$ are the radii of respectively $C_1$ and $C_2$.
To this end, simply note that $\epsilon_1 = \min\{ \zeta : \rho(G_{C_1}(\zeta))=1\}$, since $d(j^*-1,j^*)=\epsilon_1$ and $d(j,j+1) \le \epsilon_1$ for all $j=1,\ldots,j^*-1$.
Similarly, $\epsilon_2 = \min\{ \zeta : \rho(G_{C_2}(\zeta))=1\}$, since $d(n-1,n)=\epsilon_2$ and $d(j,j+1) \le \epsilon_1$ for all $j=j^*,\ldots,n-1$.

Finally, we prove that any algorithm needs $\Omega(\log n)$ queries to learn $\epsilon_1$.
Clearly, if the algorithm learns $\epsilon_1$ then it can also output the index $j^*$, which is a function of $\epsilon_1$.
Therefore, we show that finding $j^*$ requires $\Omega(\log n)$ queries.

First, we show that \seed\ is as powerful as \scq.
Consider any set of points $U \subseteq X$.
Recall that, when $U \cap C_i \ne \emptyset$, \seed$(U,i)$ is allowed to return \emph{any} node in $U \cap C_i$.
Therefore, we let \seed$(U,i)$ return $\min(U \cap C_i)$ if $i=1$, and $\max (U \cap C_i)$ if $i=2$.
Now, observe that $\min(U \cap C_1) = \min U$ and $\max(U \cap C_1) = \max U$.
Therefore, the output of \seed$(U,i)$ can be emulated using \scq.
If $i=1$, then we run \scq$(\min(U),1)$; if the response is $+1$, then we return $\min(U)$, else we return \nil.
For $i=2$, we do the same, but using $\max(U)$.

Therefore, \seed\ and \scq\ are equivalent in this case.
Since each call to \scq\ reveals at most one bit of information, and $j^*$ is chosen uniformly at random in a set of cardinality $\Omega(n)$, we need $\Omega(\log n)$ in order to learn $j^*$ with constant probability.

In order to extend the construction to any $k \ge 2$, simply take $K=\frac{k}{2}$ disjoint weighted paths on $\frac{n}{K}$ nodes each (without loss of generality we can assume $k$ is even).
Each such path is weighted as in the construction above, with the weights of the $h$-th path all smaller than the weights of the $(h+1)$-th path.
For each path, we draw $j^*$ uniformly at random like above, and form two clusters.
The same proof used above shows that, to learn the radii of all clusters with constant probability, any algorithm uses at least $\Omega(k \log \frac{n}{k})$ queries.
\end{proof}

\end{document}